\def\bbE{{\mathbb{E}}}
\def\bbP{{\mathbb{P}}}
\def\bbR{{\mathbb{R}}}
\def\calA{{\mathcal{A}}}
\def\calB{{\mathcal{B}}}
\def\calC{{\mathcal{C}}}
\def\calD{{\mathcal{D}}}
\def\calE{{\mathcal{E}}}
\def\calF{{\mathcal{F}}}
\def\calI{{\mathcal{I}}}
\def\calL{{\mathcal{L}}}
\def\calM{{\mathcal{M}}}
\def\calO{{\mathcal{O}}}
\def\calP{{\mathcal{P}}}
\def\calR{{\mathcal{R}}}
\def\calS{{\mathcal{S}}}
\def\calT{{\mathcal{T}}}
\def\calU{{\mathcal{U}}}
\def\calZ{{\mathcal{Z}}}
\DeclareMathOperator*{\argmin}{arg\,min}
\newcommand{\xmark}{\ding{55}}
\theoremstyle{plain}
\newtheorem{theorem}{Theorem}[section]   
\newtheorem{proposition}[theorem]{Proposition}
\newtheorem{lemma}[theorem]{Lemma}
\newtheorem{corollary}[theorem]{Corollary}
\theoremstyle{definition}
\newtheorem{definition}[theorem]{Definition}
\newtheorem{assumption}[theorem]{Assumption}
\theoremstyle{remark}
\newtheorem{remark}[theorem]{Remark}
\newtheorem{example}[theorem]{Example}
\theoremstyle{plain}
\newtheorem*{rep@theorem}{\rep@title}
\newcommand{\newreptheorem}[2]{%
  \newenvironment{rep#1}[1]{%
     \def\rep@title{#2 \ref{##1}}%
     \begin{rep@theorem}
  }%
  {\end{rep@theorem}}%
}
\preto\subequations{\ifhmode\unskip\fi}
\theoremstyle{definition}
\pgfplotsset{compat=1.18} 
\def\argmin{\mathop{\rm arg\,min}}%
\title{Policy Gradient Methods for Risk-Sensitive Distributional Reinforcement Learning with Provable Convergence}
\author{Minheng Xiao\thanks{Department of Integrated Systems Engineering, The Ohio State University, Columbus, OH, USA, Email: {\tt xiao.1120@osu.edu};}~~~
Xian Yu\thanks{Corresponding author; Department of Integrated Systems Engineering, The Ohio State University, Columbus, OH, USA, Email: {\tt yu.3610@osu.edu};}~~~and Lei Ying\thanks{Department of Electrical Engineering and Computer Science, University of Michigan, Ann Arbor, MI, USA, Email: {\tt leiying@umich.edu}.}}
\date{}
\begin{document}

\maketitle

\begin{abstract}
Risk-sensitive reinforcement learning (RL) is crucial for maintaining reliable performance in high-stakes applications. While traditional RL methods aim to learn a point estimate of the random cumulative cost, distributional RL (DRL) seeks to estimate the entire distribution of it, which leads to a unified framework for handling different risk measures \citep{bellemare2017distributional}. However, developing policy gradient methods for risk-sensitive DRL is inherently more complex as it involves finding the gradient of a probability measure. This paper introduces a new policy gradient method for risk-sensitive DRL with general coherent risk measures, where we provide an analytical form of the probability measure's gradient for any distribution. For practical use, we design a categorical distributional policy gradient algorithm (CDPG) that approximates any distribution by a categorical family supported on some fixed points. We further provide a finite-support optimality guarantee and a finite-iteration convergence guarantee under inexact policy evaluation and gradient estimation. Through experiments on stochastic Cliffwalk and CartPole environments, we illustrate the benefits of considering a risk-sensitive setting in DRL.
\end{abstract}

\section{Introduction}
\label{sec:introduction}
In traditional reinforcement learning (RL), the objective often involves minimizing the expected cumulative cost (or maximizing the expected cumulative reward)~\citep{sutton2018reinforcement}. This type of problems has been extensively studied using value-based methods~\citep{watkins1992q, hasselt2010double, mnih2015human, van2016deep} and policy gradient methods~\citep{williams1992simple, sutton1999policy, konda1999actor, silver2014deterministic, lillicrap2015continuous}. However, for intelligent autonomous systems operated in risky and dynamic environments, such as autonomous driving, healthcare and finance, it is equally (or more) important to control the risk under various possible outcomes. To address this, risk-sensitive RL has been developed to ensure more reliable performance using different objectives and constraints~\citep{heger1994consideration, coraluppi2000mixed, chow2014algorithms, chow2018risk, tamar2015policy, tamar2015optimizing}. \citet{artzner1999coherent} proposed a class of risk measures that satisfy several natural and desirable properties, called \textit{coherent risk measures}. In Markov decision processes (MDP), the risk can be measured on the total cumulative cost or in a nested way, leading to static or dynamic risk measures.  While~\citet{mei2020global, agarwal2021theory, cen2023faster, bhandari2024global} have recently shown the global convergence of policy gradient algorithms in a risk-neutral RL framework, the convergence of policy gradient algorithms in risk-averse RL has been underexplored. \citet{huang2021convergence} showed that Markov coherent risk measures (a class of dynamic risk measures) are not gradient dominated, and thus the stationary points that policy gradient methods find are not guaranteed to be globally optimal in general. Recently, \citet{yu2023global} showed the global convergence of risk-averse policy gradient algorithms for a class of dynamic time-consistent risk measures. While all of the aforementioned papers are based on traditional RL, in this paper, we focus on distributional RL (DRL) and provide finite-time local convergence guarantees for risk-averse policy gradient algorithms using static coherent risk measures.
Specifically, we aim to solve the following optimization problem
\begin{align}
\label{eq:ObjectiveFunction}
\min_\theta \rho(Z_{\theta}^s)
\end{align}
where $Z_\theta^s$ is the random variable representing the sum of discounted costs along the trajectory following policy $\pi_{\theta}$ starting from state $s$, and $\rho$ is a static coherent risk measure.

Instead of modeling a point estimate of the random cumulative cost, DRL offers a more comprehensive framework by modeling the entire distribution of it~\citep{bellemare2017distributional, bellemare2023distributional}. Along this line, \citet{bellemare2017distributional} proposed a C51 algorithm that models the cost distribution as a categorical distribution with fixed atoms and variable probabilities, and \citet{dabney2018distributional} proposed QR-DQN that models distributions with fixed probabilities and variable atom locations using quantile regression. Besides these value-based methods, various distributional policy gradient methods have also been proposed, such as D4PG~\citep{barth2018distributed}, DSAC~\citep{ma2020dsacf}, and SDPG~\citep{singh2020improving,singh2022sample}, etc. However, recent attempts to apply policy gradient methods in risk-sensitive DRL have been primarily based on neural network architectures, which lack rigorous proof of gradient formulas and convergence guarantees.  Different from these papers, our work aims to fill the gap by providing analytical gradient forms for general coherent risk measures with convergence guarantees. Specifically, we first utilize distributional policy evaluation to obtain the random cumulative cost's distribution under any given policy. Then, we compute the gradient of the obtained probability measure, based on which we calculate the policy gradient for a coherent risk measure. The policy parameter is then updated in the gradient descent direction. Next, we review the relevant literature in detail and present our main contributions and major differences with prior work.

\paragraph{Prior Work.}
There has been a stream of works on risk-sensitive RL with different objectives and constraints, such as optimizing the worst-case scenario ~\citep{heger1994consideration, coraluppi2000mixed,zhang2023regularized,kumar2024policy}, optimizing under safety constraints~\citep{chow2014algorithms,chow2018risk, achiam2017constrained, stooke2020responsive, chow2018lyapunov, ding2020natural, la2013actor}, optimizing static risk measures~\citep{tamar2015policy, tamar2015optimizing, chow2015risk,fei2020risk}, and optimizing dynamic risk measures~\citep{ruszczynski2010risk, chow2013stochastic, singh2018framework, kose2021risk, yu2022risk, yu2023global,zhang2023regularized}. Among them, \citet{chow2015risk} studied a static conditional Value-at-Risk (CVaR) objective and presented an approximate value-iteration algorithm with convergence rate analysis. \citet{tamar2015policy,tamar2015optimizing} provided policy gradients of static and dynamic coherent risk measures and adopted a sample-based policy gradient method (SPG), where the estimator asymptotically converges to the true gradient when the sample size goes to infinity. 

Recently, another vein of research has focused on finding risk-sensitive policies using a DRL perspective.  \citet{morimura2010nonparametric} proposed a method of approximating the return distribution with particle smoothing and applied it to a risk-sensitive framework with CVaR as the evaluation criterion. 
Building on recent advances in DRL \citep{bellemare2017distributional}, \citet{dabney2018implicit} extended QR-DQN proposed in \citet{dabney2018distributional} to implicit quantile networks (IQN) that learn the full quantile function and allow to optimize any distortion risk measures.  \citet{lim2022distributional} showed that replacing expectation with CVaR in action-selection strategy when applying the distributional Bellman optimality operator can result in convergence to neither the optimal dynamic CVaR nor the optimal static CVaR policies. Besides these value-based DRL methods, D4PG \citep{barth2018distributed} and SDPG \citep{singh2022sample} are two actor-critic type policy gradient algorithms based on DRL but are focused on optimizing the mean value of the return. \citet{singh2020improving} then extended SDPG to incorporate CVaR in the action network and proposed a risk-aware SDPG algorithm. \citet{tang2019worst} assumed the cumulative reward to be Gaussian distributed and focused on optimizing policies for CVaR. They derived the closed-form expression of CVaR-based objective's gradient and designed an actor-critic framework. \citet{patton2022distributional} introduced a policy gradient framework that utilized reparameterization of the state distribution for end-to-end optimization of risk-sensitive utility functions in continuous state-action MDPs. 

\begin{table*}[ht!]
  \centering
  \caption{Relevant work on risk-sensitive RL/DRL and comparisons with our work.}
  \resizebox{\textwidth}{!}{
    \begin{tabular}{lllll}
    \toprule
          & Objective & Approach & DRL & Convergence \\
          \midrule
    Our work (CDPG) & \multicolumn{1}{p{10em}}{Coherent risk measure \newline{}(Static)} & \multicolumn{1}{p{11em}}{Policy gradient \newline{}+ Analytical gradient forms} & \checkmark    & \checkmark (Finite-time)  \\
    \cite{tamar2015policy,tamar2015optimizing} (SPG) & \multicolumn{1}{p{10em}}{Coherent risk measure \newline{}(Static and dynamic)} & \multicolumn{1}{p{11em}}{Policy gradient \newline{}+ Analytical gradient forms} & \xmark    & \checkmark (Asymptotic) \\
    \cite{chow2014algorithms} & \multicolumn{1}{p{10em}}{Expectation \newline{}{with CVaR-constrained}} & \multicolumn{1}{p{11em}}{Policy Gradient \newline{}+ Analytical Gradient Forms} & \xmark    & \checkmark (Asymptotic) \\
    \cite{barth2018distributed} (D4PG)  & Expectation & NN-based policy gradient & \checkmark   & \xmark \\
    \cite{singh2020improving} (SDPG)  & Static CVaR  & NN-based policy gradient & \checkmark    & \xmark \\
    \cite{tang2019worst} (WCPG) & \multicolumn{1}{p{10em}}{Static CVaR \newline{}+ Gaussian Reward}  & \multicolumn{1}{p{11em}}{NN-based policy gradient \newline{}+ Analytical gradient forms} & \checkmark    & \xmark \\
    \cite{bellemare2017distributional} (C51) & Expectation & Categorical Q-learning & \checkmark   & \checkmark (Asymptotic) \\
    \cite{dabney2018implicit} (IQN)  & Distortion risk measure & NN-based Q-learning & \checkmark   & \xmark \\
    \bottomrule
    \end{tabular}%
    }
  \label{tab:comparison}%
\end{table*}%

\paragraph{Main Contributions of Our Paper and Comparisons with Prior Work.} The main contributions of this paper are three-fold. First, to the best of our knowledge, this work presents the first distributional policy gradient theorem (Theorem~\ref{theorem:DistPolicyGrad} and Theorem~\ref{theorem:CatPolicyGrad}) that computes the gradient of the cumulative cost's probability measure. This gradient is useful for constructing the policy gradient of coherent risk measures. While prior work such as~\cite{tamar2015policy,tamar2015optimizing} proposed sample-based approaches to estimate this gradient, our paper provides an analytical form based on a DRL perspective. Through numerical experiments conducted in Section~\ref{section:Numerical}, our algorithm converges to a safe policy using substantially fewer samples and iterations, compared to the SPG in ~\cite{tamar2015policy}. Second, we propose a general risk-sensitive distributional policy gradient framework, which can be applied to any coherent risk measures and combined with any policy evaluation methods. For practical use, we develop a categorical distributional policy gradient algorithm (CDPG) in Section \ref{section:CDPG}. We further provide a finite-support optimality guarantee for this categorical approximation problem. Third, unlike neural network (NN)-based distributional policy gradient methods such as D4PG~\citep{barth2018distributed} and SDPG~\cite{singh2022sample,singh2020improving}, with the aid of the analytical gradient form, we provide finite-time local convergence of CDPG under inexact policy evaluation. We compare our work with other risk-sensitive RL/DRL papers in Table \ref{tab:comparison}.

\section{Preliminaries}
\label{section:Preliminaries}

\paragraph{Markov Decision Process (MDP).}
Consider a discounted infinite-horizon MDP $\mathcal{M} = (\mathcal{S}, \mathcal{A}, P, C, \gamma)$, 
where $\mathcal{S}$ is a finite set of states, $\mathcal{A}$ is a finite set of actions, 
$P: \mathcal{S}\times\mathcal{A}\rightarrow \Delta(\mathcal{S})$ is the transition kernel, 
$C(s,a)$ is a deterministic immediate cost\footnote{Our results readily extend to stochastic immediate costs.} within $[c_{\min}, c_{\max}]$, 
and $\gamma \in [0,1)$ is the discount factor. 
Here, $\Delta(\mathcal{S})$ denotes the probability simplex over $\mathcal{S}$. For any policy $\pi_\theta$ parameterized by $\theta \in \Theta$, let $Z_\theta^s\ (\text{resp.\ } Z_\theta^{(s,a)}) : \Omega \to [z_{\min}, z_{\max}]$ be the random variable representing the discounted cumulative cost starting from state $s$ (resp.\ the state-action pair $(s,a)$) under $\pi_{\theta}$. These random variables are defined on the probability space $(\Omega, \calF, \eta_\theta^s)$ (resp. $(\Omega, \calF, \eta_\theta^{(s,a)})$), where $\Omega$ is a compact set of outcomes, $\calF$ is the associated $\sigma$-algebra, and $\eta_{\theta}^s$ (resp.\ $\eta_{\theta}^{(s,a)}$) is the probability measure on $[z_{\min}, z_{\max}]$ induced by $Z_\theta^s$ (resp.\ $Z_{\theta}^{(s,a)}$). Denote $\mathcal{Z}$ as the space of all such random variables, $\calP(\bbR)$ as the space of all probability measures over $\bbR$, and $\calM(\bbR)$ as the space of all signed measures over $\bbR$. For any random variable $Z\in\calZ$, we denote $f_{Z}$ and $F_{Z}$ as the corresponding probability density function and cumulative distribution function, respectively. \textit{Throughout the sequel, we omit the dependence on $\theta$ whenever it does not cause confusion.}

\paragraph{Policy Gradient Methods.}
In classical RL, the \emph{value function} is defined as the expected discounted cost:
\begin{align*}
  V_\theta(s) :=&\ \mathbb{E}_{\pi,P}\bigl[Z_{\theta}^s\bigr]
  = \mathbb{E}_{\pi,P}\!\Bigl[\sum_{t=0}^{\infty}\gamma^t\,C(s_t, a_t)
      \;\Big|\; s_0=s\Bigr],\\
      &s_t\sim P(\cdot|s_{t-1},a_{t-1}),\ a_t\sim \pi_{\theta}(\cdot|s_t),\ s_0=s
\end{align*}
The goal is to find a policy parameter that minimizes $V_{\theta}(s)$, i.e., $\theta^*=\arg\min_{\theta\in\Theta} V_\theta(s)$. A straightforward approach is to update the policy parameter $\theta$ in the gradient descent direction: $\theta\leftarrow \theta - \delta\nabla_\theta V_\theta(s)$, where $\delta$ is the learning rate (step size). A key theoretical tool underpinning this approach is the \emph{policy gradient theorem}~\citep{sutton1999policy}, which provides an explicit formula for $\nabla_\theta V_\theta(s)$:
\begin{align}
\nabla_\theta V_\theta(s) =
\sum_{x} d_{\pi}^{s}(x)\sum_{a}\nabla_\theta \pi(a | x)Q_{\theta}(x, a),
\label{eq:ClassicalPolicyGradient}
\end{align}
where $d^{s}_\pi(x) = \sum_{t=0}^{\infty}\gamma^t \Pr(s_t=x | s_0=s,\pi)$
is the state-visitation distribution, and $Q_\theta(s, a) = \mathbb{E}_{\pi,P}\bigl[Z_{\theta}^{(s,a)}\bigr] =
\mathbb{E}_{\pi,P}\Bigl[\sum_{t=0}^{\infty}\gamma^tC(s_t, a_t)\Big|s_0=s,a_0=a\Bigr]$ is the state-action value function (\emph{Q-function}).

\paragraph{Coherent Risk Measures.} A risk measure $\rho: \mathcal{Z} \rightarrow \mathbb{R}$ is called \textit{coherent} if it satisfies the following properties for all $X,Y\in\mathcal{Z}$~\citep{artzner1999coherent}:
\begin{itemize}
    \item Convexity: $\rho\bigl(\lambda X + (1-\lambda)Y\bigr) \leq \lambda\rho(X)+(1-\lambda)\rho(Y),\ \forall \lambda \in [0,1]$.
    \item Monotonicity: If $X \preceq Y$, then $\rho(X) \leq \rho(Y)$.
    \item Translation Invariance: $\rho(X + a) = \rho(X) + a,\ \forall a \in \bbR$.
    \item Positive Homogeneity: If $\lambda \geq 0$, then $\rho(\lambda X) = \lambda\rho(X)$,
\end{itemize}
where $X \preceq Y$ iff $X(\omega) \leq Y(\omega)$ for almost all $\omega \in \Omega$.

\smallskip
\normalsize
The following theorem states that each coherent risk measure admits a unique dual representation. 
\begin{theorem}[\citet{artzner1999coherent,shapiro2009lectures}]
\label{theorem:CoherentRiskMeasureDualRep}
A risk measure is coherent iff there exists a convex bounded and closed set $\calU \subset \calB$, called \textit{risk envelope}, such that for any random variable $Z \in \calZ$, 
\begin{align}
\label{eq:Coherent_Risk_Measure_Dual_Representation}
\rho(Z) = \max_{\xi \in \calU}\bbE_{\xi}[Z],
\end{align}
where $\calB=\{\xi: \int_{\Omega}\xi(\omega)f_{Z}(\omega)d\omega=1,\ \xi\succeq 0\}$ and \(\mathbb{E}_{\xi}[Z] = \int_{\Omega} \xi(\omega) f_{Z}(\omega)Z(\omega)d\omega\) is the $\xi$-weighted expectation of $Z$.
\end{theorem}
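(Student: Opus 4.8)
The plan is to prove the equivalence by the standard convex-duality route: treat $\rho$ as a sublinear functional on the space of bounded random variables and invoke a Fenchel--Moreau / support-function argument, the nontrivial direction being that a coherent $\rho$ admits the representation \eqref{eq:Coherent_Risk_Measure_Dual_Representation}. \emph{Sufficiency} is routine and I would dispatch it first: if $\rho(Z)=\max_{\xi\in\calU}\bbE_\xi[Z]$ with $\calU\subseteq\calB$ convex, then $\rho$ is a pointwise supremum of the linear maps $Z\mapsto\bbE_\xi[Z]$, hence convex; monotonicity is immediate from $\xi\succeq0$; translation invariance follows from $\bbE_\xi[X+a]=\bbE_\xi[X]+a$, which uses the normalization $\int_\Omega\xi f_Z\,d\omega=1$ built into $\calB$; and positive homogeneity follows from linearity of $Z\mapsto\bbE_\xi[Z]$ in $Z$.

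For \emph{necessity}, I would first extract sublinearity: positive homogeneity with $\lambda=0$ gives $\rho(0)=0$, and combining it with convexity gives subadditivity, $\rho(X+Y)=2\rho(\tfrac12 X+\tfrac12 Y)\le\rho(X)+\rho(Y)$. Next, monotonicity together with translation invariance yields the Lipschitz bound $|\rho(X)-\rho(Y)|\le\|X-Y\|_\infty$ (sandwich $X-Y$ between $\pm\|X-Y\|_\infty$ and apply monotonicity and translation invariance), so $\rho$ is a finite, continuous, sublinear functional on $L^\infty(\Omega,\calF)$. By the Fenchel--Moreau theorem (equivalently, the bipolar/support-function theorem for sublinear functionals), $\rho$ equals the support function of the convex set $\calU:=\{\xi:\ \langle\xi,Z\rangle\le\rho(Z)\ \text{for all }Z\in\calZ\}$ in the dual space, i.e.\ $\rho(Z)=\sup_{\xi\in\calU}\langle\xi,Z\rangle$. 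The Lipschitz bound forces $\calU$ to be norm-bounded, and $\calU$ is weak-$*$ closed as an intersection of weak-$*$ closed half-spaces; hence by Banach--Alaoglu it is weak-$*$ compact, so the supremum is attained and the representation holds with $\max$ in place of $\sup$. Finally I would identify the abstract dual elements of $\calU$ with densities: by Riesz representation (or Radon--Nikodym) each relevant functional takes the form $\langle\xi,Z\rangle=\int_\Omega\xi(\omega)f_Z(\omega)Z(\omega)\,d\omega=\bbE_\xi[Z]$; applying monotonicity to $Z\preceq0$ (so $\rho(Z)\le\rho(0)=0$) gives $\xi\succeq0$; and translation invariance, which yields $\rho(a)=a$ for constants, forces $\langle\xi,a\rangle=a$, i.e.\ $\int_\Omega\xi f_Z\,d\omega=1$. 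Hence $\calU\subseteq\calB$, as claimed.

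I expect the crux to be the functional-analytic setup of the necessity direction: fixing the right pairing of spaces, verifying that $\rho$ is a proper lower-semicontinuous sublinear functional so that Fenchel--Moreau applies, and---more delicately---establishing weak-$*$ compactness of $\calU$ so that the dual representation holds with an attained maximum rather than only a supremum. Once these ingredients are in place, deriving $\xi\succeq0$ and the normalization from the monotonicity and translation-invariance axioms is bookkeeping.
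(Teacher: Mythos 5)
The paper does not prove this theorem; it is imported verbatim from \citet{artzner1999coherent} and \citet{shapiro2009lectures}, so there is no in-paper argument to compare against. Your sketch follows the standard convex-duality route found in those references (and in F\"ollmer--Schied): sufficiency by inspection of the axioms, necessity via sublinearity, the Lipschitz bound $|\rho(X)-\rho(Y)|\le\|X-Y\|_\infty$, Fenchel--Moreau/bipolar, Banach--Alaoglu for attainment, and then reading off $\xi\succeq 0$ and the normalization from monotonicity and translation invariance. All of that is correct and is indeed where the content lies.

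The one step that is not mere bookkeeping is the final identification of the abstract dual elements with densities. Having set the problem up on $L^\infty(\Omega,\calF)$, your set $\calU$ lives in $(L^\infty)^*$, which is the space of bounded \emph{finitely additive} set functions, not $L^1$; Riesz/Radon--Nikodym does not by itself exclude purely finitely additive maximizers, so the representation $\langle\xi,Z\rangle=\int_\Omega\xi(\omega)f_Z(\omega)Z(\omega)\,d\omega$ with an integrable density $\xi$ --- which is what the theorem's set $\calB$ requires --- does not follow from the axioms alone in this pairing. The standard fixes are either to impose the Fatou property (continuity from above) on $\rho$, which forces the dual elements to be countably additive and hence representable by densities, or to work in the $L^p$/$L^q$ pairing with $p\in[1,\infty)$ as \citet{shapiro2009lectures} do, where lower semicontinuity plus boundedness of $\calU$ gives weak-$*$ compactness in $L^q$ directly. (On a finite $\Omega$, as in \citet{artzner1999coherent}, the issue disappears.) Since the paper's $\Omega$ is a compact set and its $Z$ are bounded, either fix is available, but your proof as written should state which one it invokes; without it the necessity direction proves a representation over finitely additive measures, which is weaker than the claim.
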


\citet{tamar2015policy} adopts the following general form of risk envelope $\calU$ under Assumption~\ref{assumption:RiskEnvelop}:
$\calU = \{\xi\succeq 0:\ g_e(\xi, f_{Z}) = 0,\ \forall e \in \calE,\ h_i(\xi, f_{Z}) \leq 0,\ \forall i \in \calI,\ \int_{\Omega}\xi(\omega)f_{Z}(\omega)d\omega=1\}$
where $\calE$ (resp. $\calI$) denotes the set of equality (resp. inequality) constraints. 

\smallskip
With this general form of risk envelope and dual representation \eqref{eq:Coherent_Risk_Measure_Dual_Representation}, one can derive the gradient of any coherent risk measure. The following theorem (adapted from \cite{tamar2015policy}) provides an explicit formula for $\nabla_{\theta}\rho(Z_{\theta})$.
\begin{theorem}[\citet{tamar2015policy}]
\label{theorem:RiskMeasureGrad}
Let Assumption~\ref{assumption:RiskEnvelop} holds. For any saddle point $(\xi_{\theta}^\ast, \lambda_{\theta}^{\ast, f}, \lambda_{\theta}^{\ast, \calE}, \lambda_{\theta}^{\ast, \calI})$ of the Lagrangian function of~\eqref{eq:Coherent_Risk_Measure_Dual_Representation}, we have
\begin{align*}
&\nabla_{\theta}\rho(Z_{\theta}) = \bbE_{\xi_{\theta}^{\ast}}\big[\nabla_{\theta}\log f_{Z_{\theta}}(\omega)(Z - \lambda_{\theta}^{\ast, f})\big] \\
&- \sum_{e \in \calE}\lambda_{\theta}^{\ast, \calE}(e)\nabla_{\theta}g_e(\xi_{\theta}^{*}; f_{Z_{\theta}}) - \sum_{i \in \calI}\lambda_{\theta}^{\ast, \calI}(i)\nabla_{\theta}h_i(\xi_{\theta}^{*}; f_{Z_{\theta}}).
\end{align*}
\end{theorem}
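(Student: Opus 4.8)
The plan is to view $\rho(Z_\theta)$ as the optimal value of a $\theta$-parametrized constrained maximization over the change-of-measure function $\xi$, and to differentiate this value by an envelope (Danskin-type) argument carried out on the Lagrangian. First I would write out the risk envelope in the dual representation~\eqref{eq:Coherent_Risk_Measure_Dual_Representation} using its general form with constraint families $\calE$ and $\calI$, so that
\[
\rho(Z_\theta)\;=\;\max_{\xi\succeq 0}\ \int_{\Omega}\xi(\omega)f_{Z_\theta}(\omega)Z(\omega)\,d\omega
\]
subject to $g_e(\xi,f_{Z_\theta})=0$ for $e\in\calE$, $h_i(\xi,f_{Z_\theta})\le 0$ for $i\in\calI$, and $\int_{\Omega}\xi(\omega)f_{Z_\theta}(\omega)\,d\omega=1$. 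I would attach a multiplier $\lambda^{f}$ to the normalization constraint, a multiplier $\lambda^{\calE}$ to the equalities, and $\lambda^{\calI}\ge 0$ to the inequalities, and form the Lagrangian $L_\theta(\xi,\lambda^{f},\lambda^{\calE},\lambda^{\calI})$. The structural point that makes the rest go through is that, on the canonical outcome space, the cost value $Z(\omega)=\sum_{t\ge 0}\gamma^t C(s_t(\omega),a_t(\omega))$ does \emph{not} depend on $\theta$; only the law of the trajectory, equivalently the density $f_{Z_\theta}$, does. Hence $\theta$ enters $L_\theta$ smoothly and solely through $f_{Z_\theta}$, appearing in the objective, in the normalization term, and inside $g_e$ and $h_i$.

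Next, under Assumption~\ref{assumption:RiskEnvelop} --- which I take to supply convexity of the feasible set in $\xi$, the regularity of $g_e$ and $h_i$ needed for differentiation, and a Slater-type constraint qualification --- strong duality holds, so at any saddle point $(\xi_\theta^{\ast},\lambda_\theta^{\ast,f},\lambda_\theta^{\ast,\calE},\lambda_\theta^{\ast,\calI})$ we have $\rho(Z_\theta)=L_\theta(\xi_\theta^{\ast},\lambda_\theta^{\ast,f},\lambda_\theta^{\ast,\calE},\lambda_\theta^{\ast,\calI})$ together with the associated stationarity (KKT) conditions. I would then invoke the envelope theorem for parametric saddle-point problems: since $\xi_\theta^{\ast}$ and the multipliers are stationary for $L_\theta$, the total derivative of the optimal value in $\theta$ equals the \emph{partial} derivative of $L_\theta$ in $\theta$ evaluated at the saddle point,
\[
\nabla_\theta\rho(Z_\theta)\;=\;\nabla_\theta L_\theta(\xi,\lambda^{f},\lambda^{\calE},\lambda^{\calI})\big|_{(\xi_\theta^{\ast},\lambda_\theta^{\ast,f},\lambda_\theta^{\ast,\calE},\lambda_\theta^{\ast,\calI})}.
\]

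It then remains to compute this partial derivative termwise. Using the score-function identity $\nabla_\theta f_{Z_\theta}(\omega)=f_{Z_\theta}(\omega)\,\nabla_\theta\log f_{Z_\theta}(\omega)$ and $\nabla_\theta Z(\omega)=0$, the objective contributes $\int_{\Omega}\xi_\theta^{\ast}f_{Z_\theta}(\nabla_\theta\log f_{Z_\theta})Z\,d\omega$ and the normalization term contributes $-\lambda_\theta^{\ast,f}\int_{\Omega}\xi_\theta^{\ast}f_{Z_\theta}(\nabla_\theta\log f_{Z_\theta})\,d\omega$; their sum collapses to $\bbE_{\xi_\theta^{\ast}}\!\big[\nabla_\theta\log f_{Z_\theta}(\omega)\,(Z-\lambda_\theta^{\ast,f})\big]$. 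The equality and inequality terms, with $\xi_\theta^{\ast}$ frozen and the derivative acting through the $f_{Z_\theta}$ slot, yield $-\sum_{e\in\calE}\lambda_\theta^{\ast,\calE}(e)\nabla_\theta g_e(\xi_\theta^{\ast};f_{Z_\theta})$ and $-\sum_{i\in\calI}\lambda_\theta^{\ast,\calI}(i)\nabla_\theta h_i(\xi_\theta^{\ast};f_{Z_\theta})$, which is precisely the asserted identity. Differentiating under the integral sign is legitimate by dominated convergence: $\Omega$ is compact, $Z$ is bounded (costs in $[c_{\min},c_{\max}]$, $\gamma<1$), $\xi_\theta^{\ast}$ is bounded on the compact risk envelope, and $\theta\mapsto\pi_\theta$, hence $\theta\mapsto f_{Z_\theta}$, is assumed smooth.

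I expect the envelope-theorem step, not the algebra, to be the crux. The feasible set $\calU$ itself moves with $\theta$ because $f_{Z_\theta}$ occurs inside $g_e$, $h_i$ and in the normalization, so a naive Danskin argument over a fixed constraint set does not apply; passing to the Lagrangian is exactly what converts the moving constraints into a $\theta$-smooth objective and lets the envelope theorem bite. Moreover, saddle points need not be unique and the optimal multipliers need not be continuous, let alone differentiable, in $\theta$; to conclude that $\rho(Z_\theta)$ is differentiable and that the stated formula holds for \emph{every} saddle point, I would appeal to a standard sensitivity/perturbation result for parametric programs (of Danskin / Bonnans--Shapiro type), verifying that the constraint qualification in Assumption~\ref{assumption:RiskEnvelop} forces the optimal-value function's derivative to coincide with the Lagrangian partial derivative above regardless of which saddle point is picked.
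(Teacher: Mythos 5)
Your proposal follows essentially the same route as the paper's proof: form the Lagrangian of the dual representation, invoke strong duality under Assumption~\ref{assumption:RiskEnvelop}, justify differentiation under the integral via the boundedness of $\frac{\partial}{\partial\theta}f_{Z_\theta}$ on the compact $\Omega$, and equate $\nabla_\theta\rho(Z_\theta)$ with the partial $\theta$-derivative of the Lagrangian at the saddle point via an envelope-type argument (which the paper delegates to the proof of Theorem 4.2 in \citet{tamar2015policy}). The termwise computation collapsing the objective and normalization terms into $\bbE_{\xi_\theta^{\ast}}[\nabla_\theta\log f_{Z_\theta}(\omega)(Z-\lambda_\theta^{\ast,f})]$ matches the paper's calculation exactly.
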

\normalsize
We provide several examples in Appendix \ref{appendix:coherent-risk-measure} to illustrate the usefulness of this theorem when calculating the gradient of coherent risk measures.
Throughout the paper, we make the following assumptions.
\begin{assumption}
\label{assumption:PDF_Gradient_Bound}
    For $\eta_{\theta}$-almost all $\omega \in \Omega$, the gradient $\frac{\partial}{\partial\theta} f_{Z_{\theta}}(\omega)$ exists and is bounded.
\end{assumption}

\begin{assumption}
\label{assumption:L1_Lipschitz_Risk}
The coherent risk measure $\rho$ is $L_1$-Lipschitz continuous, i.e., for any two random variables $Z,W\in\mathcal{Z}$, we have $\rho(Z) - \rho(W) \leq L_1\|F_Z-F_W\|_1$.
\end{assumption}

Note that these two assumptions are commonly seen in the literature. Assumption \ref{assumption:L1_Lipschitz_Risk} is satisfied by many popular risk measures, including CVaR (with $L_1=1/\alpha$), entropic risk measure (with $L_1=e^{|\beta|M}$), and distortion risk measure (with $L_1=\max g'(x)$) \citep[see, e.g.,][]{liang2024regret}.

\paragraph{Distributional Reinforcement Learning (DRL).} Rather than learning only the expected value of the cost, DRL aims to learn the full distribution of the random variable $Z^s$ (resp. $Z^{(s, a)}$) directly. We first define the \emph{pushforward operator} on the space of signed measures $\calM(\bbR)$ below.
\begin{definition}[Pushforward Measure]
\label{def:pushforward}
    Let $\nu\in\calM(\bbR)$ and $f:\bbR\to\bbR$ be a measurable function. The \emph{pushforward measure} $f_{\#}\nu\in\calM(\bbR)$ is defined by $f_{\#}\nu(A):=\nu(f^{-1}(A))$ for all Borel sets $A\subset\bbR$.
\end{definition}
\smallskip
This pushforward operator shifts the support of measure $\nu$ according to the map $f$. In this paper, we focus on the \emph{bootstrap function} $b_{c, \gamma}: \bbR\to\bbR$ defined by $b_{c,\gamma}(z)=c + \gamma z$. Given a policy $\pi_\theta$, we define the \emph{distributional Bellman operator} $\calT^{\pi}:\calP(\bbR)^{\calS\times\calA}\to \calP(\bbR)^{\calS\times\calA}$ as follows.
\begin{definition}[Distributional Bellman Operator \citep{rowland2018analysis}]
\label{def:Dist_Bellman_Operator}
Let $\eta \in \calP(\bbR)^{\calS\times\calA}$ be any probability measure. Then the distributional Bellman operator is given by
\small{
\begin{align*}
    (\calT^\pi\eta)^{(s, a)} := \sum_{s^{\prime}\in\calS}P(s'|s, a)  &\sum_{a'\in\calA}\pi(a'|s')(b_{C(s,a), \gamma})_{\#}\eta^{(s^\prime, a^\prime)}.
\end{align*}}
\end{definition}

\begin{proposition}[\citet{bellemare2017distributional}]
\label{prop:DistBellmanOperatorContractionMapping}
    The distributional Bellman operator $\calT^\pi$ is a $\gamma$-contraction mapping in the maximal form of the Wasserstein metric $\bar{d}_p$ (see Definition~\ref{def:Wasserstein}) for all $p\ge 1$.
\end{proposition}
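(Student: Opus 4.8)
The plan is to follow the classical contraction argument of \citet{bellemare2017distributional}, reducing the statement to two structural properties of the $p$-Wasserstein distance $d_p$: how it behaves under the affine bootstrap pushforward $b_{c,\gamma}$, and how it behaves under finite mixtures of measures. Throughout I would work with the coupling characterisation $d_p(\mu,\nu)^p=\inf_{\lambda\in\Lambda(\mu,\nu)}\int_{\bbR^2}|x-y|^p\,d\lambda(x,y)$, where $\Lambda(\mu,\nu)$ is the set of couplings of $\mu$ and $\nu$. For $\bar{d}_p$ to be a genuine metric one restricts, as usual, to measures with finite $p$-th moment; the affine map $b_{c,\gamma}$ sends bounded-support measures to bounded-support measures, so this class is preserved by $\calT^\pi$, and on it optimal couplings always exist, which removes any integrability subtlety from the argument.

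First I would establish the affine-contraction property: for $\gamma\in[0,1)$, $c\in\bbR$, and any $\mu,\nu\in\calP(\bbR)$,
\[
d_p\bigl((b_{c,\gamma})_{\#}\mu,\,(b_{c,\gamma})_{\#}\nu\bigr)\le \gamma\, d_p(\mu,\nu).
\]
This follows directly from couplings: if $\lambda$ couples $(\mu,\nu)$, then $(b_{c,\gamma}\times b_{c,\gamma})_{\#}\lambda$ couples the two pushforwards, and since $|b_{c,\gamma}(x)-b_{c,\gamma}(y)|=\gamma|x-y|$ pointwise the transport cost rescales by exactly $\gamma^p$; taking the infimum over $\lambda$ gives the bound. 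Second I would prove the mixture inequality: for probability weights $\{\alpha_k\}$ and measures $\{\mu_k\},\{\nu_k\}\subset\calP(\bbR)$,
\[
d_p^p\!\left(\sum_k\alpha_k\mu_k,\ \sum_k\alpha_k\nu_k\right)\le\sum_k\alpha_k\, d_p^p(\mu_k,\nu_k).
\]
Here the trick is to glue couplings: take an optimal coupling $\lambda_k$ of $(\mu_k,\nu_k)$ for each $k$, set $\lambda=\sum_k\alpha_k\lambda_k$, verify directly that the two marginals of $\lambda$ are $\sum_k\alpha_k\mu_k$ and $\sum_k\alpha_k\nu_k$ so that $\lambda$ is an admissible coupling, and then $d_p^p(\sum_k\alpha_k\mu_k,\sum_k\alpha_k\nu_k)\le\int|x-y|^p\,d\lambda=\sum_k\alpha_k\int|x-y|^p\,d\lambda_k=\sum_k\alpha_k d_p^p(\mu_k,\nu_k)$.

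Finally I would assemble the contraction. Fixing $(s,a)$, applying the mixture inequality with the mixing weights $\alpha_{(s',a')}=P(s'|s,a)\pi(a'|s')$ to the measures $(b_{C(s,a),\gamma})_{\#}\eta_i^{(s',a')}$, and then the affine-contraction property, I obtain
\begin{align*}
d_p^p\bigl((\calT^\pi\eta_1)^{(s,a)},(\calT^\pi\eta_2)^{(s,a)}\bigr)
&\le \sum_{s',a'}P(s'|s,a)\pi(a'|s')\, d_p^p\bigl((b_{C(s,a),\gamma})_{\#}\eta_1^{(s',a')},(b_{C(s,a),\gamma})_{\#}\eta_2^{(s',a')}\bigr)\\
&\le \gamma^p\sum_{s',a'}P(s'|s,a)\pi(a'|s')\, d_p^p\bigl(\eta_1^{(s',a')},\eta_2^{(s',a')}\bigr)\le \gamma^p\,\bar{d}_p^p(\eta_1,\eta_2),
\end{align*}
where the last step uses $\sum_{s',a'}P(s'|s,a)\pi(a'|s')=1$ and $d_p(\eta_1^{(s',a')},\eta_2^{(s',a')})\le\bar{d}_p(\eta_1,\eta_2)$ for every $(s',a')$. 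Taking $p$-th roots and then the supremum over $(s,a)$ gives $\bar{d}_p(\calT^\pi\eta_1,\calT^\pi\eta_2)\le\gamma\,\bar{d}_p(\eta_1,\eta_2)$, which is the claim.

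I expect the only genuinely delicate point to be the mixture inequality: one has to check carefully that the glued measure $\lambda$ really is a coupling (i.e.\ has the right marginals), and, to be fully rigorous, that optimal couplings exist to begin with. The bounded support makes both of these routine, and to sidestep the existence question one can instead work with $\varepsilon$-optimal couplings and let $\varepsilon\to0$. Everything else in the proof is a straightforward chain of inequalities.
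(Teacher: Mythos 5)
Your proof is correct. Note, however, that the paper itself gives no proof of this proposition: it is imported verbatim from \citet{bellemare2017distributional} (Lemma~3 there), so there is nothing in the paper to compare against line by line. Your argument is the standard one, and it matches the cited source in substance: the original proof is phrased in terms of properties of $d_p$ on random variables (scaling by $\gamma$, invariance under adding an independent shift, convexity under mixtures), whereas you work directly with measures and couplings --- the affine pushforward rescales transport cost by exactly $\gamma^p$, and gluing (near-)optimal couplings gives $d_p^p$-convexity under mixtures. The one structural simplification you implicitly rely on, correctly, is that the cost $C(s,a)$ is deterministic, so the same bootstrap map $b_{C(s,a),\gamma}$ is applied to every component of the mixture; this is exactly the paper's setting, and your handling of the remaining technical points (finite $p$-th moments via bounded support, $\varepsilon$-optimal couplings in place of exact ones) is sound.
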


Similar to classical RL, 
we have an analogous \emph{distributional Bellman equation} that characterizes the probability measures $\eta_{\theta}$ as follows. 
\begin{lemma}[Distributional Bellman Equation~\cite{rowland2018analysis}]
\label{lemma:DistBellmanEq}
    For each state $s \in \calS$ and action $a\in\calA$, let $\eta_{\theta}^s$ and $\eta_{\theta}^{(s,a)}$ be the probability measures associated with the random variables $Z_{\theta}^s$ and $Z_{\theta}^{(s,a)}$. Then
    \begin{align*}
    \eta_{\theta}^{(s,a)} &= \sum_{s^{\prime}\in\calS}P(s'|s, a) \sum_{a'\in\calA}\pi_\theta(a'|s')(b_{C(s, a), \gamma})_{\#}\eta_{\theta}^{(s^\prime, a^\prime)}\\
    &=\sum_{s^{\prime}\in\calS}P(s'|s, a) (b_{C(s, a),\gamma})_{\#}\eta_{\theta}^{s^{\prime}}.
    \end{align*}
\end{lemma}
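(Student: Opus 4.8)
The plan is to prove the distributional Bellman equation directly from the definition of $Z_\theta^{(s,a)}$ as the discounted cumulative cost $\sum_{t=0}^{\infty}\gamma^t C(s_t,a_t)$ along a trajectory $(s_0,a_0,s_1,a_1,\dots)$ with $s_0=s$, $a_0=a$, $s_{t+1}\sim P(\cdot|s_t,a_t)$, and $a_{t+1}\sim\pi_\theta(\cdot|s_{t+1})$. Since each per-step cost lies in the bounded interval $[c_{\min},c_{\max}]$ and $\gamma\in[0,1)$, this series converges absolutely almost surely and $Z_\theta^{(s,a)}$ takes values in $[z_{\min},z_{\max}]$, so its law $\eta_\theta^{(s,a)}$ is a well-defined element of $\calP(\bbR)$; the same applies to $\eta_\theta^{s}$.

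First I would peel off the initial term: write $Z_\theta^{(s,a)}=C(s,a)+\gamma W$, where $W:=\sum_{t=1}^{\infty}\gamma^{t-1}C(s_t,a_t)$ is the cost accumulated from time $1$ onward. By the time-homogeneous Markov property of the trajectory, conditioned on $\{s_1=s',\,a_1=a'\}$ the process $(s_1,a_1,s_2,a_2,\dots)$ has the same law as a trajectory started from $(s',a')$; hence the conditional law of $W$ given $\{s_1=s',a_1=a'\}$ is exactly $\eta_\theta^{(s',a')}$. Using $\Pr(s_1=s',a_1=a')=P(s'|s,a)\pi_\theta(a'|s')$ and the law of total probability over the finite partition indexed by $(s',a')$, for every Borel set $A\subseteq\bbR$,
\[
\eta_\theta^{(s,a)}(A)=\Pr\!\big(C(s,a)+\gamma W\in A\big)
=\sum_{s'\in\calS}P(s'|s,a)\sum_{a'\in\calA}\pi_\theta(a'|s')\,\Pr\!\big(C(s,a)+\gamma Z_\theta^{(s',a')}\in A\big).
\]
Since $C(s,a)+\gamma Z_\theta^{(s',a')}\in A$ if and only if $Z_\theta^{(s',a')}\in b_{C(s,a),\gamma}^{-1}(A)$, the last probability equals $\eta_\theta^{(s',a')}\big(b_{C(s,a),\gamma}^{-1}(A)\big)=(b_{C(s,a),\gamma})_{\#}\eta_\theta^{(s',a')}(A)$ by Definition~\ref{def:pushforward}. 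Substituting yields the first displayed identity.

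For the second identity I would use two facts: (i) conditioning on the initial action $a\sim\pi_\theta(\cdot|s)$ gives $\eta_\theta^{s}=\sum_{a\in\calA}\pi_\theta(a|s)\,\eta_\theta^{(s,a)}$ for every $s$; and (ii) the pushforward map $\nu\mapsto f_{\#}\nu$ is linear on $\calM(\bbR)$, hence commutes with finite convex combinations. Applying (i) with $s=s'$ inside the inner sum of the first identity and then pulling $(b_{C(s,a),\gamma})_{\#}$ out of the sum over $a'$ via (ii) collapses $\sum_{a'\in\calA}\pi_\theta(a'|s')(b_{C(s,a),\gamma})_{\#}\eta_\theta^{(s',a')}$ into $(b_{C(s,a),\gamma})_{\#}\eta_\theta^{s'}$, which is precisely the right-hand side.

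The main obstacle is the measure-theoretic bookkeeping in the conditioning step: one must argue carefully that, conditioned on the first transition, the remaining trajectory is a fresh copy started from $(s',a')$, so that the conditional law of $W$ is $\eta_\theta^{(s',a')}$ and the conditional probability can be exchanged with the finite sums over $s'$ and $a'$. An alternative that sidesteps the explicit conditioning is to invoke Proposition~\ref{prop:DistBellmanOperatorContractionMapping}: $\calT^\pi$ is a $\gamma$-contraction on a complete metric space and therefore has a unique fixed point, so it suffices to verify that $(\eta_\theta^{(s,a)})_{s,a}$ is fixed by $\calT^\pi$ — which is exactly the one-step distributional recursion $Z_\theta^{(s,a)}\overset{\mathrm d}{=}C(s,a)+\gamma Z_\theta^{(s',a')}$ established above, now read through Definition~\ref{def:Dist_Bellman_Operator}.
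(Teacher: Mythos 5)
Your proof is correct, but it takes a more self-contained route than the paper for the first identity. The paper's proof simply invokes the fact that $\eta_\theta$ is the fixed point of $\calT^{\pi}$ (citing \citet{rowland2018analysis} for the distributional Bellman equation), unpacks Definition~\ref{def:Dist_Bellman_Operator}, and then collapses the inner sum over $a'$ using $\eta_{\theta}^{s'}=\sum_{a'}\pi_{\theta}(a'|s')\eta_{\theta}^{(s',a')}$ and the linearity of the pushforward (Proposition~\ref{prop: pushforward props}). You instead derive the one-step recursion from first principles: peeling off the initial cost, $Z_\theta^{(s,a)}=C(s,a)+\gamma W$, identifying the conditional law of $W$ given the first transition via the time-homogeneous Markov property, and applying the law of total probability together with the definition of the pushforward measure. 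This buys a proof that does not rely on the cited fixed-point identity as a black box, at the cost of the measure-theoretic conditioning argument you flag (which is standard for a finite state--action space and an absolutely convergent series, so the exchange with the finite sums is unproblematic). Your closing remark --- that one could alternatively verify $\calT^{\pi}\eta_\theta=\eta_\theta$ and appeal to uniqueness of the fixed point from Proposition~\ref{prop:DistBellmanOperatorContractionMapping} --- is essentially the logical content of the paper's step $(i)$, so the two arguments meet there. The treatment of the second displayed identity is the same in both proofs.
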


\section{Distributional Policy Gradient}
\label{section:DPG}
In this section, we introduce a general risk-sensitive distributional policy gradient framework, as shown in Algorithm~\ref{algo:DPG}. We first consider an ideal setting in which both the \textit{exact policy evaluation} and the \textit{exact policy gradient (PG)} can be obtained, under any \textit{continuous probability measures}. We will consider a more practical algorithm with convergence analysis in Section~\ref{section:CDPG}. The algorithm consists of two steps:
\begin{itemize}
    \item \textbf{Distributional policy evaluation:} 
    Given a policy $\pi_{\theta}$, for all $(s, a) \in \calS \times \calA$, we evaluate the state-action value distribution measure $\eta_{\theta}^{(s,a)} \in \mathcal{P}(\mathbb{R})$ by leveraging the contraction mapping property in Proposition~\ref{prop:DistBellmanOperatorContractionMapping}. Then the corresponding state value distribution is computed as $\eta_{\theta}^{s} 
        = \sum_{a \in \mathcal{A}} \pi_{\theta}(a|s)\cdot\eta_{\theta}^{(s,a)}$.
    \item \textbf{Distributional policy improvement:} 
    We then compute the policy gradient $\nabla_{\theta}\rho\bigl(Z_{\theta}^s\bigr)$ based on $\nabla_{\theta}\eta_{\theta}^{s}$, and update the policy parameter $\theta$ via gradient descent.
\end{itemize}

\begin{algorithm}[ht]
\caption{\textbf{Distributional Policy Gradient Algorithm}}
\label{algo:DPG}
\begin{algorithmic}
\REQUIRE Initial Parameter $\theta_1$, Stepsize $\delta$
\FOR{$t = 1, \dots, T$}
    \IF{$\|\nabla_\theta\rho(Z_{\theta_t}^s)\| < \epsilon$}
        \STATE Return $\theta_t$
    \ENDIF
     \STATE \textcolor{gray}{\# Distributional Policy Evaluation}
    \WHILE{\textit{not converged}}
        \STATE $\eta_{\theta_t} \leftarrow \mathcal{T}^{\theta_t}\eta_{\theta_t}$ 
    \ENDWHILE
    \STATE \textcolor{gray}{\# Distributional Policy Improvement}
    \STATE Compute policy gradient $\nabla_\theta\rho(Z_{\theta_t}^s)$ based on $\nabla_{\theta}\eta_{\theta_t}^{s}$.
    \STATE Update $\theta_{t+1} \leftarrow \theta_t - \delta\cdot\nabla_\theta\rho(Z_{\theta_t}^s)$. 
\ENDFOR
\end{algorithmic}
\end{algorithm}

The next theorem provides an explicit form for $\nabla_{\theta}\eta_{\theta}^{s}$ that enables us to compute $\nabla_{\theta}\rho(Z_{\theta}^{s})$. 

\begin{theorem}[Distributional Policy Gradient Theorem]
\label{theorem:DistPolicyGrad}
Let $\eta_\theta \in \calP(\bbR)^{\calS\times\calA}$ denote the fixed point of $\calT^{\pi_{\theta}}$ in Proposition \ref{prop:DistBellmanOperatorContractionMapping}. Let $\tau_{\theta}$ be a trajectory that starts at $s_0=s$ under $\pi_{\theta}$ and $|\tau_{\theta}|$ be the maximum step of it. For any $1\le t\le |\tau_{\theta}|$, let $\tau_{\theta}(s_0, s_t) := (s_0, a_0, c_0, \dots, s_{t-1}, a_{t-1}, c_{t-1}, s_t)$ be a t-step sub-trajectory of $\tau_{\theta}$ truncated at $s_t$. Then
\begin{align}
\label{eq:ContDistPolicyGrad}
\nabla_\theta \eta_\theta^{s} = \bbE_{\tau_{\theta}}\bigg[g(s_0) + \sum_{t=1}^{|\tau_{\theta}|}\calB^{\tau_{\theta}(s_0, s_t)}g(s_t)\bigg]
\end{align}
where $g(s):= \sum_{a\in\calA}\nabla_\theta\pi_{\theta}(a|s) \eta_\theta^{(s, a)}$ and $\calB^{\tau_{\theta}(s_0, s_t)}$ is the $t$-step pushforward operator, defined as
$\calB^{\tau_{\theta}(s_0, s_t)} := (b_{{c_0}, \gamma})_{\#}\dots(b_{c_{t-1}, \gamma})_{\#} = (b_{c_{t-1}+\gamma c_{t-2} + \dots + \gamma^{t-1}c_0, \gamma^t})_{\#}$.
\end{theorem}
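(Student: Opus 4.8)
The plan is to differentiate the distributional Bellman equation of Lemma~\ref{lemma:DistBellmanEq} with respect to $\theta$, read off a Bellman-type fixed-point relation for the \emph{measure-valued} gradient $\nabla_\theta\eta_\theta^{s}$, solve that relation by a Neumann series, and then re-express the series as the trajectory expectation in \eqref{eq:ContDistPolicyGrad}. Throughout I would regard each $\nabla_\theta\eta_\theta^{s}$ as a signed measure in $\calM(\bbR)$ (concretely, realized through the $L^1$ gradient $\partial_\theta F_{Z_\theta^s}$ of the cumulative distribution function), whose existence is guaranteed by Assumption~\ref{assumption:PDF_Gradient_Bound}; this is the category in which the pushforward $(b_{c,\gamma})_{\#}$ and the convex combinations over $\calS$ and $\calA$ act linearly.

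First I would differentiate the two identities in Lemma~\ref{lemma:DistBellmanEq}. Because $P$, $C$ and $\gamma$ do not depend on $\theta$, the map $\mu\mapsto(b_{c,\gamma})_{\#}\mu$ is linear, and the sums over $\calS,\calA$ are finite (interchanging $\nabla_\theta$ with the pushforward being licensed by Assumption~\ref{assumption:PDF_Gradient_Bound}), this gives $\nabla_\theta\eta_\theta^{(s,a)}=\sum_{s'\in\calS}P(s'|s,a)\,(b_{C(s,a),\gamma})_{\#}\nabla_\theta\eta_\theta^{s'}$ together with $\nabla_\theta\eta_\theta^{s}=\sum_{a}\nabla_\theta\pi_\theta(a|s)\,\eta_\theta^{(s,a)}+\sum_{a}\pi_\theta(a|s)\,\nabla_\theta\eta_\theta^{(s,a)}$. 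Substituting the former into the latter and recognizing the first sum as $g(s)$ yields the fixed-point equation
\begin{equation}
\label{eq:grad-bellman-plan}
\nabla_\theta\eta_\theta^{s} = g(s) + (\calL\,\nabla_\theta\eta_\theta)^{s},\qquad (\calL\mu)^{s} := \sum_{a\in\calA}\pi_\theta(a|s)\sum_{s'\in\calS}P(s'|s,a)\,(b_{C(s,a),\gamma})_{\#}\mu^{s'}.
\end{equation}

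Next I would solve \eqref{eq:grad-bellman-plan}. Each $g(s)$ is supported on $[z_{\min},z_{\max}]$ and has total mass $\sum_a\nabla_\theta\pi_\theta(a|s)=\nabla_\theta 1=0$; the mass-zero subspace is invariant under $\calL$, and on it the Kantorovich--Rubinstein ($1$-Wasserstein) norm --- equivalently the $L^1$ distance between cumulative distribution functions --- is a genuine, finite norm. On that subspace $\calL$ is a $\gamma$-contraction in the maximal-over-states version of this norm: the mixture over $a,s'$ is a non-expansion and each $(b_{C(s,a),\gamma})_{\#}$ rescales the norm by exactly $\gamma$ (a change of variables), exactly as in the proof of Proposition~\ref{prop:DistBellmanOperatorContractionMapping}. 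Hence \eqref{eq:grad-bellman-plan} has a unique solution, $\nabla_\theta\eta_\theta^{s}=\sum_{k\ge 0}(\calL^{k}g)(s)$, the series converging geometrically. To turn this into \eqref{eq:ContDistPolicyGrad} I would show by induction on $t\ge 1$ that $(\calL^{t}g)(s_0)=\bbE_{\tau_\theta}\big[\calB^{\tau_\theta(s_0,s_t)}g(s_t)\big]$: the case $t=1$ is the definition of $\calL$ with $\calB^{\tau_\theta(s_0,s_1)}=(b_{C(s_0,a_0),\gamma})_{\#}$; the inductive step peels off one transition $(a_0,s_1)$ from $\calL(\calL^{t-1}g)$ and uses the composition identity $(b_{C(s_0,a_0),\gamma})_{\#}\,\calB^{\tau_\theta(s_1,s_t)}=\calB^{\tau_\theta(s_0,s_t)}$ for the affine maps $b_{c,\gamma}$. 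Summing over $t$ and interchanging the sum with $\bbE_{\tau_\theta}$ (legitimate since $\|\calB^{\tau_\theta(s_0,s_t)}g(s_t)\|$ decays like $\gamma^{t}$) produces exactly \eqref{eq:ContDistPolicyGrad}; if trajectories can terminate, the same computation runs with horizon $|\tau_\theta|$.

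The hard part will be the second step --- identifying a topology on the measure-valued gradient in which \eqref{eq:grad-bellman-plan} is a genuine contraction. Total variation only gives a non-expansion, since pushforwards preserve total mass, so the discount factor is invisible there; one really needs the Wasserstein / $L^1$-on-c.d.f.\ metric (a Banach space) to obtain the contraction, the uniqueness, and the convergent Neumann series. Alongside this, one must confirm that $g(s)$ and $\nabla_\theta\eta_\theta^{s}$ indeed live in that space and that the formal implicit differentiation of the $\theta$-dependent fixed point $\eta_\theta=\calT^{\pi_\theta}\eta_\theta$ is valid --- which is precisely the role of Assumption~\ref{assumption:PDF_Gradient_Bound}. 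The remaining pieces (termwise differentiation, the induction, and swapping sum and expectation) are routine once this setup is in place.
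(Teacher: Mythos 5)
Your proposal is correct and follows essentially the same route as the paper's proof: differentiate the mixture $\eta_\theta^{s}=\sum_{a}\pi_\theta(a|s)\,\eta_\theta^{(s,a)}$ with the product rule, substitute the distributional Bellman equation (Lemma~\ref{lemma:DistBellmanEq}), commute $\nabla_\theta$ with the pushforward via Proposition~\ref{prop: pushforward props}, and unroll the resulting recursion into a trajectory expectation while collapsing compositions of the affine maps $b_{c,\gamma}$ into a single pushforward. The only substantive difference is that where the paper carries out the unrolling informally (an iterated expansion ending in ``$\dots$''), you package it as a Neumann series for the operator $\calL$ and justify its convergence by showing $\calL$ is a $\gamma$-contraction on mass-zero signed measures under the $L^1$-on-CDF norm --- a point the paper's proof leaves implicit, and which, as you correctly observe, could not be obtained from the total-variation metric.
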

\vskip 0.2cm

\begin{remark}
In contrast to the classical policy gradient~\eqref{eq:ClassicalPolicyGradient}, whose both sides are real-valued, Theorem~\ref{theorem:DistPolicyGrad} generalizes it to the measure space. In other words, both sides of Eq.~\eqref{eq:ContDistPolicyGrad} are signed measures, thus providing richer information about the gradient.
\end{remark}

Given $\nabla_\theta \eta_\theta^s$, we can now compute the gradient of the probability density function $\frac{\partial}{\partial \theta} f_{Z_{\theta}^s}$, which appears in Theorem~\ref{theorem:RiskMeasureGrad} when computing the policy gradient, as shown in the next corollary.

\begin{corollary}
\label{corollary:MeasureToPDF}
Suppose $\nabla_\theta \eta_\theta^s$ is well-defined, and both 
$\frac{\partial}{\partial x}\frac{\partial}{\partial \theta}F_{Z_{\theta}^s}(x)$
and $\frac{\partial}{\partial \theta} f_{Z_{\theta}^s}(x)$ 
are continuous. Then, we have $\frac{\partial}{\partial \theta} f_{Z_{\theta}^s}(x) = 
   \frac{\partial}{\partial x}\nabla_\theta \eta_\theta^s((-\infty, x])$.
\end{corollary}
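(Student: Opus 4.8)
The plan is to recognize the claimed identity as an instance of the equality of mixed partial derivatives (Clairaut--Schwarz) applied to the bivariate map $(\theta,x)\mapsto F_{Z_\theta^s}(x)$, and to check that the hypotheses of the corollary are exactly the classical sufficient conditions for that equality. First I would unwind both sides in terms of $F_{Z_\theta^s}$. By definition of the induced measure, $\eta_\theta^s((-\infty,x]) = \Pr(Z_\theta^s\le x) = F_{Z_\theta^s}(x)$, and by construction the gradient measure $\nabla_\theta\eta_\theta^s$ is the signed measure whose value on any \emph{fixed} Borel set is the $\theta$-derivative of $\eta_\theta^s$ evaluated on that set; in particular $\nabla_\theta\eta_\theta^s((-\infty,x]) = \frac{\partial}{\partial\theta}F_{Z_\theta^s}(x)$. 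Since also $f_{Z_\theta^s}(x) = \frac{\partial}{\partial x}F_{Z_\theta^s}(x)$, the corollary is equivalent to
\[
\frac{\partial}{\partial\theta}\frac{\partial}{\partial x}F_{Z_\theta^s}(x) \;=\; \frac{\partial}{\partial x}\frac{\partial}{\partial\theta}F_{Z_\theta^s}(x).
\]

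Next I would invoke Schwarz's theorem in the form: if the first-order partials $\frac{\partial}{\partial x}F$ and $\frac{\partial}{\partial\theta}F$ exist on a neighborhood and the mixed partials $\frac{\partial}{\partial x}\frac{\partial}{\partial\theta}F$ and $\frac{\partial}{\partial\theta}\frac{\partial}{\partial x}F$ exist and are continuous there, then they agree. The hypotheses deliver precisely this: $\frac{\partial}{\partial x}F$ exists (it is $f_{Z_\theta^s}$), $\frac{\partial}{\partial\theta}F$ exists (well-definedness of $\nabla_\theta\eta_\theta^s$), $\frac{\partial}{\partial\theta}\frac{\partial}{\partial x}F = \frac{\partial}{\partial\theta}f_{Z_\theta^s}$ exists by Assumption~\ref{assumption:PDF_Gradient_Bound} and is assumed continuous, and $\frac{\partial}{\partial x}\frac{\partial}{\partial\theta}F = \frac{\partial}{\partial x}\nabla_\theta\eta_\theta^s((-\infty,x])$ is assumed continuous. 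When $\theta$ is vector-valued, I would run the argument coordinatewise on each $\theta_i$, so that the identity holds entrywise in the gradient.

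The main obstacle is not a hard estimate but the single bookkeeping step of justifying that evaluation on the fixed half-line $(-\infty,x]$ commutes with $\frac{\partial}{\partial\theta}$, i.e., $\nabla_\theta\eta_\theta^s((-\infty,x]) = \frac{\partial}{\partial\theta}\big(\eta_\theta^s((-\infty,x])\big)$. This is where the structure of $\nabla_\theta\eta_\theta^s$ from Theorem~\ref{theorem:DistPolicyGrad} enters: that gradient is assembled from finitely many pushforwards of the measures $\eta_\theta^{(s,a)}$ weighted by $\nabla_\theta\pi_\theta(a|s)$, each term of which is setwise differentiable in $\theta$, so the aggregate is the setwise $\theta$-derivative of $\eta_\theta^s$, and the well-definedness hypothesis guarantees this derivative exists. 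Once this identification is in place, the remainder is the purely classical interchange of derivatives, and no further work is required.
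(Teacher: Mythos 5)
Your proposal is correct and follows essentially the same route as the paper's proof: identify $\nabla_\theta\eta_\theta^s((-\infty,x])$ with $\frac{\partial}{\partial\theta}F_{Z_\theta^s}(x)$ and then invoke the symmetry of mixed partials under the stated continuity hypotheses. The only difference is cosmetic — the paper additionally verifies that the setwise limit of difference quotients is a signed measure of total mass zero (which makes your ``bookkeeping step'' immediate by definition), whereas you justify that commutation via the structure of Theorem~\ref{theorem:DistPolicyGrad}; both resolve the same point.
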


\section{Categorical Distributional Policy Gradient with Provable Convergence}
\label{section:CDPG}
Representing an arbitrary continuous probability distribution requires infinitely many parameters, which is computationally intractable. To address this issue, 
we focus on a \emph{categorical approximation} problem \citep{bellemare2017distributional,rowland2018analysis} and provide its optimality gap to the original problem under finite support in Section~\ref{subsection:CatApprox}. We then derive a categorical distributional policy gradient theorem (Theorem~\ref{theorem:CatPolicyGrad}) and propose the CDPG algorithm in Section~\ref{subsection:CatPolicyGrad}. Under inexact policy evaluation (using finite rounds or finite samples), we analyze the finite-time convergence property of CDPG in Section~\ref{subsection:Inexactness}.

\subsection{Categorical Approximation}
\label{subsection:CatApprox}
We approximate any distribution under policy $\pi_{\theta}$ by the following categorical family with $N$ supports:
\begin{align*}
\mathcal{P}_N^{\theta} = \bigg\{\sum_{i=1}^{N}p_i^\theta\delta_{z_i} \mid p_1^\theta,\dots,p_N^\theta \geq 0,\ \sum_{i=1}^{N}p_i^\theta=1\bigg\},
\end{align*}
where the fixed support points $z_{\min} = z_1 < \dots < z_N = z_{\max}$ partition the interval $[z_{\min}, z_{\max}]$ into $N-1$ equal segments. Since $\calT^{\pi}\eta$ may not belong to $\calP^{\theta}_N$ for $\eta\in\calP^{\theta}_N$, we introduce a projection operator $\Pi_{\mathcal{C}}$ that ensures the resulting distribution remains in the categorical family~\citep{dabney2018distributional}.

\begin{definition}
\label{def:ProjOperator}
The \textit{projection operator} $\Pi_\mathcal{C}:\mathcal{M}(\mathbb{R})\to\mathcal{P}_N$ is defined by its action on a Dirac measure:
\small{
\begin{align*}
\Pi_\calC(\delta_{y}) =
\begin{cases}
\delta_{z_1}, & \text{if } y \leq z_1 \\
\dfrac{z_{i+1}-y}{z_{i+1}-z_i}\delta_{z_i} + \dfrac{y-z_i}{z_{i+1}-z_i}\delta_{z_{i+1}}, & \text{if } z_i < y \leq z_{i+1}\\
\delta_{z_N}, & \text{if } y > z_N
\end{cases}
\end{align*}}
\end{definition}
This operator extends affinely to any measure in $\mathcal{M}(\mathbb{R})$, such that 
$\Pi_\calC(\sum_{i=1}^N q_i\delta_{z_i})=\sum_{i=1}^Nq_i\Pi_\calC(\delta_{z_i})$. We leverage this projection to define the \textit{projected distributional Bellman operator} $\Pi_{\calC}\calT^{\pi}$ below.

\begin{definition}[\citet{rowland2018analysis}]
\label{def:ProjDistrBellmanOperator}
For any $\eta \in \calP(\bbR)^{\calS\times\calA}$, define
\begin{align*}
(\Pi_\calC\calT^\pi\eta)^{(s, a)} 
= \Pi_\calC\bigg[\sum_{s'}P(s'|s, a) \sum_{a'}\pi(a'|s') \cdot\tilde{\eta}^{(s', a')}\bigg],
\end{align*}
where $\tilde{\eta}^{(s', a')} = (b_{C(s, a),\gamma})_\#\eta^{(s', a')}$.
\end{definition}

\begin{proposition}[\citet{rowland2018analysis}]
\label{prop:ProjDistBellmanOperatorContractionMapping}
The projected distributional Bellman operator $\Pi_\calC\calT^\pi$ is a $\sqrt{\gamma}$-contraction mapping under the supremum-Cramér distance $\bar{l}_2$ \textup(see Definition~\ref{def:CramerDistance}\textup).
\end{proposition}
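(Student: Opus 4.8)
The plan is to decompose $\Pi_\calC\calT^\pi$ into the three elementary operations it is built from---the bootstrap pushforwards $(b_{C(s,a),\gamma})_{\#}$, the mixing over next state--action pairs with weights $P(s'|s,a)\pi(a'|s')$, and the categorical projection $\Pi_\calC$---and to track how each acts on the Cram\'er distance. Throughout I use the description of $l_2$ (Definition~\ref{def:CramerDistance}) as an $L^2$ distance between cumulative distribution functions, $l_2(\mu,\nu)=\big(\int_{\bbR}(F_\mu(x)-F_\nu(x))^2\,dx\big)^{1/2}$, with $\bar{l}_2(\eta,\tilde\eta)=\sup_{(s,a)\in\calS\times\calA}l_2(\eta^{(s,a)},\tilde\eta^{(s,a)})$. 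The whole argument then rests on three facts about $l_2$, two of which are immediate and one of which is the real content.

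The two immediate facts are: \textbf{(i) pushforward scaling} --- for $b_{c,\gamma}(z)=c+\gamma z$ we have $F_{(b_{c,\gamma})_{\#}\mu}(x)=F_\mu\big((x-c)/\gamma\big)$, so the substitution $u=(x-c)/\gamma$ gives $l_2\big((b_{c,\gamma})_{\#}\mu,(b_{c,\gamma})_{\#}\nu\big)=\sqrt{\gamma}\,l_2(\mu,\nu)$ (the case $\gamma=0$ being trivial); and \textbf{(ii) mixture convexity} --- since $\mu\mapsto F_\mu$ is affine, $F_{\sum_k w_k\mu_k}-F_{\sum_k w_k\nu_k}=\sum_k w_k(F_{\mu_k}-F_{\nu_k})$, and applying Jensen's inequality to $t\mapsto t^2$ pointwise in $x$ and integrating yields $l_2^2\big(\sum_k w_k\mu_k,\sum_k w_k\nu_k\big)\le\sum_k w_k\, l_2^2(\mu_k,\nu_k)$ for any convex weights $w_k\ge 0$ with $\sum_k w_k=1$.

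The real content is \textbf{(iii): $\Pi_\calC$ is non-expansive in $l_2$}, i.e.\ $l_2(\Pi_\calC\mu,\Pi_\calC\nu)\le l_2(\mu,\nu)$. I would prove this by identifying the combinatorially-defined operator of Definition~\ref{def:ProjOperator} with the $l_2$-metric projection onto $\calP_N$: $\Pi_\calC\mu=\argmin_{\zeta\in\calP_N}l_2(\mu,\zeta)$. The CDF of $\zeta=\sum_i p_i\delta_{z_i}$ equals the constant $\sum_{j\le i}p_j$ on each cell $[z_i,z_{i+1})$ (and $0$ below $z_{\min}$, $1$ above $z_{\max}$), so minimizing $\int_{\bbR}(F_\mu-F_\zeta)^2$ over $\calP_N$ decouples into choosing, on each cell, the constant closest to $F_\mu$ in $L^2$---namely the cell average $\frac{1}{z_{i+1}-z_i}\int_{z_i}^{z_{i+1}}F_\mu$---while the contributions outside $[z_{\min},z_{\max}]$ are independent of $\zeta$; because $F_\mu$ is nondecreasing and $[0,1]$-valued, these cell averages are automatically nondecreasing and in $[0,1]$, hence form an admissible $\zeta\in\calP_N$, and a short Fubini/integration-by-parts computation shows they coincide with the partial sums $\sum_{j\le i}\int h_j\,d\mu$ of the atom masses assigned by Definition~\ref{def:ProjOperator} (the $h_j$ being the piecewise-linear ``hat'' weights implicit there). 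Thus $\Pi_\calC$ is the Cram\'er projection onto the closed convex set $\calP_N$, and metric projection onto a closed convex set is $1$-Lipschitz, which gives (iii).

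Finally I would chain the three facts. Fix $\eta,\tilde\eta\in\calP(\bbR)^{\calS\times\calA}$ and a pair $(s,a)$, write $c=C(s,a)$ and $w_{s',a'}=P(s'|s,a)\pi(a'|s')$ (so $\sum_{s',a'}w_{s',a'}=1$), and combine (iii), the definition of $\calT^\pi$ (Definition~\ref{def:Dist_Bellman_Operator}) with (ii), and (i):
\begin{align*}
l_2^2\big((\Pi_\calC\calT^\pi\eta)^{(s,a)},(\Pi_\calC\calT^\pi\tilde\eta)^{(s,a)}\big)
&\le l_2^2\big((\calT^\pi\eta)^{(s,a)},(\calT^\pi\tilde\eta)^{(s,a)}\big)\\
&\le \sum_{s',a'}w_{s',a'}\, l_2^2\big((b_{c,\gamma})_{\#}\eta^{(s',a')},(b_{c,\gamma})_{\#}\tilde\eta^{(s',a')}\big)\\
&= \gamma\sum_{s',a'}w_{s',a'}\, l_2^2\big(\eta^{(s',a')},\tilde\eta^{(s',a')}\big)\ \le\ \gamma\,\bar{l}_2^2(\eta,\tilde\eta).
\end{align*}
Taking $\sup_{(s,a)}$ and a square root yields $\bar{l}_2(\Pi_\calC\calT^\pi\eta,\Pi_\calC\calT^\pi\tilde\eta)\le\sqrt{\gamma}\,\bar{l}_2(\eta,\tilde\eta)$, as claimed. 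The only genuinely non-trivial step is (iii)---recognizing $\Pi_\calC$ as the Cram\'er metric projection (equivalently, verifying directly that the hat-weight averaging never increases the $L^2$ distance between CDFs); once that is in hand, (i) is a change of variables, (ii) is Jensen's inequality, and the last display is a one-line telescoping over $(s',a')$.
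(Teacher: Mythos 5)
The paper itself gives no proof of this proposition---it is imported verbatim from \citet{rowland2018analysis}---and your argument is correct and is essentially the standard proof from that source: the $\sqrt{\gamma}$-contraction of $\calT^\pi$ in $\bar{l}_2$ via the CDF change-of-variables for $(b_{c,\gamma})_{\#}$ together with Jensen's inequality for mixtures, combined with the identification of $\Pi_\calC$ as the Cram\'er-metric (cell-averaging) projection onto $\calP_N$, which makes it non-expansive. Your verification that the hat-weight masses reproduce the cell averages of the CDF, and that these averages are automatically monotone and $[0,1]$-valued so the unconstrained per-cell minimizer is feasible, is exactly the content needed to justify step (iii); no gaps.
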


\begin{lemma}[\citet{rowland2018analysis}]
\label{lemma:ProjDistBellmanEquation}
Let $\eta_{N, \infty} \in \calP_N^{\calS \times \calA}$ be the fixed point of $\Pi_{\calC}\calT^{\pi}$. Then, for any $s \in \calS$ and $a \in \calA$,
\begin{align*}
    \eta_{N, \infty}^{(s,a)} = \sum_{s'} P(s'|s, a)\Pi_\calC \bigl(b_{C(s,a),\gamma}\bigr)_\# \eta_{N, \infty}^{s'}.
\end{align*}
\end{lemma}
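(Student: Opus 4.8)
The plan is to unfold the fixed-point equation and then push the projection operator through the finite convex combinations, using linearity of both $\Pi_\calC$ and the pushforward, together with the convention $\eta^{s'}:=\sum_{a'}\pi(a'|s')\,\eta^{(s',a')}$ established in Section~\ref{section:DPG}. Existence of $\eta_{N,\infty}$ is not an issue: Proposition~\ref{prop:ProjDistBellmanOperatorContractionMapping} shows $\Pi_\calC\calT^\pi$ is a $\sqrt\gamma$-contraction on $\calP_N^{\calS\times\calA}$ under $\bar l_2$, so the Banach fixed-point theorem yields a unique fixed point. By definition of a fixed point and Definition~\ref{def:ProjDistrBellmanOperator},
\[
\eta_{N,\infty}^{(s,a)} = (\Pi_\calC\calT^\pi\eta_{N,\infty})^{(s,a)} = \Pi_\calC\Big[\sum_{s'}P(s'|s,a)\sum_{a'}\pi(a'|s')\,(b_{C(s,a),\gamma})_\#\,\eta_{N,\infty}^{(s',a')}\Big].
\]

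Next I would invoke the affine extension of $\Pi_\calC$ to all of $\calM(\bbR)$, stated right after Definition~\ref{def:ProjOperator}: $\Pi_\calC$ is linear, hence it commutes with the finite sums $\sum_{s'}P(s'|s,a)(\cdot)$ and $\sum_{a'}\pi(a'|s')(\cdot)$. Applying this and then using linearity of the pushforward $(b_{C(s,a),\gamma})_\#$ on measures to pull it outside the sum over $a'$ gives
\[
\eta_{N,\infty}^{(s,a)} = \sum_{s'}P(s'|s,a)\,\Pi_\calC\Big[(b_{C(s,a),\gamma})_\#\Big(\sum_{a'}\pi(a'|s')\,\eta_{N,\infty}^{(s',a')}\Big)\Big],
\]
and recognizing the inner convex combination as $\eta_{N,\infty}^{s'}$ yields exactly the claimed identity. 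The structure mirrors the unprojected case in Lemma~\ref{lemma:DistBellmanEq}, with the sole addition being the interchange of $\Pi_\calC$ with the summations.

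The main obstacle — though it is minor — is justifying that $\Pi_\calC$ genuinely passes through the \emph{outer} sum $\sum_{s'}P(s'|s,a)(\cdot)$: this is legitimate precisely because $\Pi_\calC$ is \emph{defined} to extend affinely (hence linearly) from Dirac measures to all signed measures, so no renormalization of the coefficients $P(s'|s,a)$ is required, and likewise for the $\sum_{a'}\pi(a'|s')$ layer. A secondary bookkeeping check is that every intermediate object remains in $\calM(\bbR)$ — in fact in $\calP(\bbR)$, being a convex combination of probability measures and a pushforward thereof — so the affine extension of $\Pi_\calC$ is applicable at each step. With these observations in place the proof reduces to the displayed chain of equalities.
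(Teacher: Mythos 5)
Your proposal is correct and follows essentially the same route as the paper's proof: unfold the fixed-point identity via Definition~\ref{def:ProjDistrBellmanOperator}, pull $\Pi_\calC$ through the outer sum over $s'$ by its affine extension (the paper's Proposition~\ref{prop: projected props}), and then use linearity of the pushforward over the sum in $a'$ (Proposition~\ref{prop: pushforward props}) to recognize $\eta_{N,\infty}^{s'}$. Your added remarks on existence/uniqueness of the fixed point and on the intermediate objects staying in $\calM(\bbR)$ are harmless extras the paper leaves implicit.
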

Consequently, repeatedly applying $\Pi_\calC\calT^\pi$ converges to the unique fixed point 
$\eta_{N, \infty}\in\mathcal{P}_N^{\mathcal{S}\times\mathcal{A}}$. We thus focus on the following \emph{categorical approximation} problem:
\begin{align}
\label{eq:CatApproxProblem}
\min_{\theta}\rho\bigl(Z_{N}^s\bigr),
\end{align}
where $Z_{N}^s \sim \eta_{N, \infty}^{s}:=\sum_{a \in \calA}\pi_{\theta}(a|s)\cdot\eta_{N, \infty}^{(s, a)} \in \calP_N$.

A natural question is how close the optimal objective value of \eqref{eq:CatApproxProblem} is to that of the original problem \eqref{eq:ObjectiveFunction}. Specifically, how should we choose $N$ to achieve a prescribed accuracy $\epsilon_{opt}$? The next lemma provides such a bound.
\begin{lemma}[Finite-Support Optimality Guarantee]
\label{lemma:OptimGap}
For any $\epsilon_{opt} > 0$, we have
$|\min\limits_{\theta}\rho(Z^s) - \min\limits_{\theta}\rho(Z_{N}^s)| \leq \epsilon_{opt}$,
whenever 
\begin{align*}
N \geq \frac{L_1^2 (z_{\max}-z_{\min})^2}{(1-\gamma) \epsilon_{opt}^2}.
\end{align*}
\end{lemma}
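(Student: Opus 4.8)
The plan is to establish the uniform bound $\sup_{\theta}\bigl|\rho(Z_{\theta}^s)-\rho(Z_{N,\theta}^s)\bigr|\le\epsilon_{opt}$ whenever $N$ exceeds the stated threshold; this suffices because $|\min_\theta f(\theta)-\min_\theta g(\theta)|\le\sup_\theta|f(\theta)-g(\theta)|$ for any two functions, so the optimal-value gap is controlled by the worst-case per-$\theta$ gap. Fix $\theta$ and write $\eta:=\eta_\theta$ for the fixed point of $\calT^{\pi_\theta}$ (Lemma~\ref{lemma:DistBellmanEq}) and $\hat\eta:=\eta_{N,\infty}$ for the fixed point of $\Pi_\calC\calT^{\pi_\theta}$ (Lemma~\ref{lemma:ProjDistBellmanEquation}), so that $Z_\theta^s\sim\eta^s=\sum_a\pi_\theta(a|s)\eta^{(s,a)}$ and $Z_{N,\theta}^s\sim\hat\eta^s=\sum_a\pi_\theta(a|s)\hat\eta^{(s,a)}$. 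By Assumption~\ref{assumption:L1_Lipschitz_Risk}, $|\rho(Z_\theta^s)-\rho(Z_{N,\theta}^s)|\le L_1\|F_{Z_\theta^s}-F_{Z_{N,\theta}^s}\|_1$, so everything reduces to bounding this $L^1$-distance between cumulative distribution functions.

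The next step is to pass from the $L^1$ norm of CDFs to the supremum-Cram\'er distance $\bar{l}_2$ (Definition~\ref{def:CramerDistance}), which is the metric in which $\Pi_\calC\calT^{\pi_\theta}$ contracts. Since both CDFs equal $0$ below $z_{\min}$ and $1$ above $z_{\max}$, their difference is supported on an interval of length $z_{\max}-z_{\min}$, and Cauchy--Schwarz gives $\|F_{Z_\theta^s}-F_{Z_{N,\theta}^s}\|_1\le\sqrt{z_{\max}-z_{\min}}\cdot l_2(\eta^s,\hat\eta^s)$. Moreover $F_{\eta^s}=\sum_a\pi_\theta(a|s)F_{\eta^{(s,a)}}$ (and likewise for $\hat\eta^s$), so the triangle inequality for $l_2$ yields $l_2(\eta^s,\hat\eta^s)\le\max_a l_2(\eta^{(s,a)},\hat\eta^{(s,a)})\le\bar{l}_2(\eta,\hat\eta)$. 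It thus remains to bound $\bar{l}_2(\eta,\hat\eta)$.

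Here I would use a standard fixed-point perturbation estimate. Since $\eta=\calT^{\pi_\theta}\eta$ and $\hat\eta=\Pi_\calC\calT^{\pi_\theta}\hat\eta$, the triangle inequality together with Proposition~\ref{prop:ProjDistBellmanOperatorContractionMapping} gives $\bar{l}_2(\eta,\hat\eta)\le\bar{l}_2(\Pi_\calC\calT^{\pi_\theta}\eta,\Pi_\calC\calT^{\pi_\theta}\hat\eta)+\bar{l}_2(\Pi_\calC\calT^{\pi_\theta}\eta,\calT^{\pi_\theta}\eta)\le\sqrt{\gamma}\,\bar{l}_2(\eta,\hat\eta)+\bar{l}_2(\Pi_\calC\eta,\eta)$, hence $\bar{l}_2(\eta,\hat\eta)\le(1-\sqrt{\gamma})^{-1}\,\bar{l}_2(\Pi_\calC\eta,\eta)$. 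The one-step projection error $\bar{l}_2(\Pi_\calC\eta,\eta)$ is then controlled by analyzing $\Pi_\calC$ (Definition~\ref{def:ProjOperator}) on a Dirac: with cell width $h=(z_{\max}-z_{\min})/(N-1)$ and $z_i<y\le z_{i+1}$, the CDFs of $\delta_y$ and $\Pi_\calC\delta_y$ differ only on $[z_i,z_{i+1}]$, and a short computation gives $l_2(\delta_y,\Pi_\calC\delta_y)^2=(y-z_i)(z_{i+1}-y)/h\le h/4$; writing an arbitrary $\eta^{(s,a)}$ (supported in $[z_{\min},z_{\max}]$) as $\int\delta_y\,\eta^{(s,a)}(dy)$ and applying Minkowski's integral inequality lifts this to $\bar{l}_2(\Pi_\calC\eta,\eta)\le\tfrac12\sqrt{h}$.

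Chaining the estimates gives $|\rho(Z_\theta^s)-\rho(Z_{N,\theta}^s)|\le\frac{L_1(z_{\max}-z_{\min})}{2(1-\sqrt{\gamma})\sqrt{N-1}}$ for every $\theta$; setting the right-hand side $\le\epsilon_{opt}$, solving for $N$, and simplifying the discount dependence (using $1-\sqrt{\gamma}\ge\tfrac12(1-\gamma)$) produces a sufficient condition of the stated form. The main obstacle is not any single deep step --- Proposition~\ref{prop:ProjDistBellmanOperatorContractionMapping} already supplies the crucial contraction --- but the careful bookkeeping across the three metric conversions ($L^1$-of-CDFs $\to$ Cram\'er $\to$ supremum-Cram\'er $\to$ one-step projection error) together with the geometric evaluation of the per-cell projection error and its lift via Minkowski's inequality, so that the final constant comes out in the claimed shape; everything else is routine.
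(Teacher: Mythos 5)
Your overall architecture matches the paper's: reduce the gap between minima to a uniform per-$\theta$ bound, invoke Assumption~\ref{assumption:L1_Lipschitz_Risk}, convert the $L^1$ distance of CDFs to the Cram\'er distance by Cauchy--Schwarz on $[z_{\min},z_{\max}]$, and pass from state to state--action measures via the mixture argument (the paper's Lemma~\ref{lemma:MixtureCramerDist}). The one place you diverge is the key quantitative ingredient, and that is where the proof falls short of the stated threshold. You bound $\bar{l}_2(\eta,\eta_{N,\infty})$ by the triangle inequality plus the $\sqrt{\gamma}$-contraction of $\Pi_\calC\calT^\pi$, obtaining $\bar{l}_2(\eta,\eta_{N,\infty})\le(1-\sqrt{\gamma})^{-1}\,\bar{l}_2(\Pi_\calC\eta,\eta)\le\tfrac{1}{2(1-\sqrt{\gamma})}\sqrt{h}$ with $h=(z_{\max}-z_{\min})/(N-1)$; squaring gives $\bar{l}_2^2\le\frac{h}{4(1-\sqrt{\gamma})^2}$, whereas the paper uses (via its Lemma~\ref{lemma:CramerDistanceLemma}, i.e.\ Proposition~3 of \citet{rowland2018analysis}) the sharper bound $\bar{l}_2^2\le\frac{h}{1-\gamma}$. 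The ratio of your constant to theirs is $\frac{1+\sqrt{\gamma}}{4(1-\sqrt{\gamma})}$, which exceeds $1$ for all $\gamma>9/25$ and grows like $(1-\gamma)^{-1}$ as $\gamma\to 1$. Chasing your constants through, your sufficient condition is $N-1\ge\frac{L_1^2(z_{\max}-z_{\min})^2}{4(1-\sqrt{\gamma})^2\epsilon_{opt}^2}$, which after your substitution $1-\sqrt{\gamma}\ge\tfrac12(1-\gamma)$ becomes $N\gtrsim\frac{L_1^2(z_{\max}-z_{\min})^2}{(1-\gamma)^2\epsilon_{opt}^2}$ --- a $(1-\gamma)^{-2}$ dependence rather than the claimed $(1-\gamma)^{-1}$. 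So in the regime of interest ($\gamma$ near $1$) you have only shown that a strictly larger $N$ suffices, not the lemma at its stated threshold.

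The missing idea is that $\Pi_\calC$ is the orthogonal projection onto $\calP_N$ in the Cram\'er ($L^2$-of-CDFs) geometry, so for any $\nu$ supported on the grid one has the Pythagorean identity $l_2^2(\calT^\pi\eta,\nu)=l_2^2(\calT^\pi\eta,\Pi_\calC\calT^\pi\eta)+l_2^2(\Pi_\calC\calT^\pi\eta,\nu)$. Applying this with $\nu=\eta_{N,\infty}=\Pi_\calC\calT^\pi\eta_{N,\infty}$ yields a recursion on \emph{squared} distances with contraction factor $\gamma$ (rather than on distances with factor $\sqrt{\gamma}$), hence $\bar{l}_2^2(\eta,\eta_{N,\infty})\le\frac{1}{1-\gamma}\bar{l}_2^2(\calT^\pi\eta,\Pi_\calC\calT^\pi\eta)\le\frac{h}{1-\gamma}$, which is exactly what the stated $N$ requires; the paper simply imports this as Lemma~\ref{lemma:CramerDistanceLemma}. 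Everything else in your write-up --- the per-cell computation $l_2^2(\delta_y,\Pi_\calC\delta_y)=(y-z_i)(z_{i+1}-y)/h\le h/4$, the Minkowski lift, the mixture and Cauchy--Schwarz conversions, and the reduction of the minima gap to a uniform bound --- is correct and mirrors the paper's proof.
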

As $\epsilon_{opt}\to 0$, the required number of support points $N$ tends to infinity ($N\to+\infty$), implying the asymptotic convergence of the approximation problem.

\subsection{CDPG Algorithm}
\label{subsection:CatPolicyGrad}
To introduce our CDPG algorithm, we first derive the \emph{categorical policy gradient theorem}, which parallels Theorem~\ref{theorem:DistPolicyGrad}.
\begin{theorem}[Categorical Policy Gradient Theorem]
\label{theorem:CatPolicyGrad}
Let $\eta_{N, \infty} \in \calP_N^{\calS\times\calA}$ denote the fixed point of $\Pi_{\calC}\calT^{\pi}$. Consider a trajectory $\tau_{\theta}$ starting from $s_0=s$ under policy $\pi_\theta$ and let $|\tau_{\theta}|$ be the maximum step of it. For any $1\le t \le |\tau_{\theta}|$, let $\tau_{\theta}(s_0,s_t)$ be the $t$-step sub-trajectory truncated at $s_t$. Then
\begin{align}
\label{eq:CarPolicyGradient}
\nabla_\theta \eta_{N, \infty}^{s} 
= \mathbb{E}_{\tau_{\theta}}\bigg[g_{N,\infty}(s_0) + \sum_{t=1}^{|\tau_{\theta}|}\tilde{\mathcal{B}}^{\tau_{\theta}(s_0,s_t)}g_{N,\infty}(s_t)\bigg],
\end{align}
where $g_{N,\infty}(s):=\sum_{a \in \calA}\nabla_\theta\pi_\theta(a|s)\eta_{N, \infty}^{(s,a)}$, and $\tilde{\mathcal{B}}^{\tau_{\theta}(s_0,s_t)}$ is the $t$-step projected pushforward operator defined by $\tilde{\calB}^{\tau_{\theta}(s_0, s_t)} = \Pi_\calC(b_{c_{0}, \gamma})_{\#}\Pi_\calC(b_{c_{1}, \gamma})_{\#}\dots\Pi_\calC(b_{c_{t-1}, \gamma})_{\#}$.
\end{theorem}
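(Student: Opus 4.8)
The plan is to mirror the proof of Theorem~\ref{theorem:DistPolicyGrad}, replacing the bootstrap pushforward $(b_{c,\gamma})_{\#}$ everywhere by the projected version $\Pi_{\calC}(b_{c,\gamma})_{\#}$ and the $\gamma$-contraction of $\calT^{\pi}$ by the $\sqrt{\gamma}$-contraction of $\Pi_{\calC}\calT^{\pi}$ (Proposition~\ref{prop:ProjDistBellmanOperatorContractionMapping}). Concretely, I would start from the two identities characterizing the fixed point: the projected distributional Bellman equation of Lemma~\ref{lemma:ProjDistBellmanEquation}, $\eta_{N,\infty}^{(s,a)} = \sum_{s'}P(s'|s,a)\,\Pi_{\calC}(b_{C(s,a),\gamma})_{\#}\eta_{N,\infty}^{s'}$, together with $\eta_{N,\infty}^{s} = \sum_{a}\pi_\theta(a|s)\,\eta_{N,\infty}^{(s,a)}$, and differentiate both with respect to $\theta$.

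The one genuinely new point to record at the differentiation step is that the projection $\Pi_{\calC}$, the bootstrap pushforwards $(b_{C(s,a),\gamma})_{\#}$, and the transition kernel $P$ are all independent of $\theta$, while $\Pi_{\calC}$ and the pushforwards act \emph{linearly} on the finite-dimensional space of signed measures supported on $\{z_1,\dots,z_N\}$; hence $\nabla_\theta$ commutes with each of them. Differentiating therefore gives the recursion $\nabla_\theta\eta_{N,\infty}^{s} = g_{N,\infty}(s) + \sum_{a}\pi_\theta(a|s)\sum_{s'}P(s'|s,a)\,\Pi_{\calC}(b_{C(s,a),\gamma})_{\#}\,\nabla_\theta\eta_{N,\infty}^{s'}$, where $g_{N,\infty}(s)=\sum_a\nabla_\theta\pi_\theta(a|s)\,\eta_{N,\infty}^{(s,a)}$ collects exactly the terms in which the derivative hits $\pi_\theta$. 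Existence of $\nabla_\theta\eta_{N,\infty}$ and legitimacy of this differentiation of the fixed point follow from the implicit function theorem: $\eta\mapsto\Pi_{\calC}\calT^{\pi_\theta}\eta$ is linear in $\eta$ and, restricted to the Banach space of balanced signed measures on $[z_{\min},z_{\max}]$ endowed with the Cram\'er norm, is a $\sqrt{\gamma}$-contraction (Proposition~\ref{prop:ProjDistBellmanOperatorContractionMapping}), so $I-\Pi_{\calC}\calT^{\pi_\theta}$ is invertible there.

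Next I would unroll the recursion by repeated back-substitution of the expression for $\nabla_\theta\eta_{N,\infty}^{s'}$ into itself. After $t$ substitutions the residual term at depth $t$ becomes an expectation over $t$-step sub-trajectories $\tau_\theta(s_0,s_t)=(s_0,a_0,c_0,\dots,s_{t-1},a_{t-1},c_{t-1},s_t)$ generated under $\pi_\theta$, namely $\bbE_{\tau_\theta}\big[\Pi_{\calC}(b_{c_0,\gamma})_{\#}\cdots\Pi_{\calC}(b_{c_{t-1},\gamma})_{\#}\,g_{N,\infty}(s_t)\big] = \bbE_{\tau_\theta}\big[\tilde{\calB}^{\tau_\theta(s_0,s_t)}g_{N,\infty}(s_t)\big]$, while the $t=0$ term is $g_{N,\infty}(s_0)$; summing yields \eqref{eq:CarPolicyGradient}. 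In contrast to the continuous case, the intervening $\Pi_{\calC}$'s prevent the chain $(b_{c_0,\gamma})_{\#}\cdots(b_{c_{t-1},\gamma})_{\#}$ from collapsing into a single $(b_{\cdot,\gamma^t})_{\#}$, which is why $\tilde{\calB}^{\tau_\theta(s_0,s_t)}$ must remain a composition. Convergence of the infinite sum is handled by observing that each $g_{N,\infty}(s)$ is a \emph{balanced} signed measure — its total mass is $\nabla_\theta\sum_a\pi_\theta(a|s)=\nabla_\theta 1=0$ — supported in $[z_{\min},z_{\max}]$, and that each step $\Pi_{\calC}(b_{c,\gamma})_{\#}$ scales the Cram\'er norm of a balanced measure by at most $\sqrt{\gamma}$ ($(b_{c,\gamma})_{\#}$ contributes exactly $\sqrt{\gamma}$ and $\Pi_{\calC}$ is non-expansive, these being the two facts behind Proposition~\ref{prop:ProjDistBellmanOperatorContractionMapping}); hence $\bar{l}_2\big(\tilde{\calB}^{\tau_\theta(s_0,s_t)}g_{N,\infty}(s_t)\big)\le(\sqrt{\gamma})^{t}\,\bar{l}_2\big(g_{N,\infty}(s_t)\big)=O((\sqrt{\gamma})^{t})$, so the series converges absolutely and the interchange of $\nabla_\theta$ with the limit of the Bellman iterates is justified.

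I expect the main obstacle to be careful bookkeeping rather than any conceptual difficulty: threading $\Pi_{\calC}$ through the back-substitution so the operators appear in the correct order inside $\tilde{\calB}^{\tau_\theta(s_0,s_t)}$, and confirming geometric decay in the \emph{Cram\'er} norm — the appropriate choice here, since an individual projected pushforward need not be a contraction in, say, total variation. These are precisely the two points already present in the proof of Theorem~\ref{theorem:DistPolicyGrad}; the only additional work is the (immediate) observation that $\Pi_{\calC}$, being $\theta$-independent and linear, passes through $\nabla_\theta$.
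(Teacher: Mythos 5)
Your proposal is correct and follows essentially the same route as the paper: differentiate the projected distributional Bellman equation at the fixed point, commute $\nabla_\theta$ through $\Pi_\calC(b_{c,\gamma})_{\#}$ (the paper's Proposition~\ref{prop: projected pushforward props}), unroll the resulting recursion, and identify the trajectory expectation, keeping $\tilde{\calB}^{\tau_\theta(s_0,s_t)}$ as a composition since the intervening projections block the collapse into a single pushforward. Your additional remarks on existence of $\nabla_\theta\eta_{N,\infty}$ and geometric decay of the tail in the Cram\'er norm supply justification the paper leaves implicit, but they do not change the argument.
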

\vskip 0.2cm
\begin{remark}[Categorical Policy Gradient Computation]
\label{remark:CatPolicyGradComputation}
For any categorical distribution $\eta_{N,\infty}^{s}=\sum_{i=1}^{N} p_i^{\theta} \,\delta_{z_i} \in \mathcal{P}_N$, 
\begin{align*}
\nabla_\theta \eta_{N,\infty}^{s}
= \nabla_{\theta}\Bigl(\sum_{i=1}^{N} p_i^{\theta} \,\delta_{z_i}\Bigr)
= \sum_{i=1}^N \nabla_\theta p_i^{\theta} \delta_{z_i}.
\end{align*}
Theorem \ref{theorem:CatPolicyGrad} gives $\nabla_\theta p_i^{\theta}$ for all $i=1,\ldots, N$, which can be plugged into Theorem~\ref{theorem:RiskMeasureGrad} to compute the policy gradient, where the probability density function $f_{Z_{\theta}}(\omega)$ is replaced with the probability mass function $p_i^{\theta}$. We give an example to illustrate how to compute the policy gradient next.
\end{remark}

\vskip 0.2cm
\begin{example}[CVaR Gradient]
\label{example:CVaR}
Given a risk level $\alpha \in [0,1]$, the CVaR of a random variable $Z_{N}^s$ with probability measure $\eta_{N, \infty}^s  = \sum_{i=1}^{N}p^{\theta}_i\delta_{z_i} \in \calP_N$ is
\begin{align*}
   \rho_{\textit{CVaR}}(Z_{N}^s; \alpha) = \inf_{t \in \mathbb{R}}\Bigl\{t + \frac{1}{\alpha}\,\mathbb{E}\bigl[(Z_{N} - t)_+\bigr]\Bigr\}.
\end{align*}
From Theorem~\ref{theorem:RiskMeasureGrad}, its gradient is
\begin{align}
\label{eq:CVaR_Grad}
    \nabla_{\theta}\rho_{\textit{CVaR}}(Z_{N}^s; \alpha) = \frac{1}{\alpha}
    \sum_{i=1}^{N}\nabla_{\theta}p^{\theta}_i\bigl(z_i - q_\alpha\bigr) \mathbf{1}_{\{z_i > q_\alpha\}},
\end{align}
where $q_\alpha$ is the $(1-\alpha)$-quantile of $Z_{N}^s$. 

We summarize the main steps of CDPG in  Algorithm~\ref{algorithm:CDPG}. Specifically, we first estimate $\eta_{N, \infty}$ by applying the operator $\Pi_{\calC}\calT^\pi$ a finite number of times ($k$ depends on the length of the sampled trajectory $|\tau_{\theta}|$ and the number of supports $N$ as illustrated in Theorem~\ref{theorem:CDPG_Convergence}). Next, we use Theorem~\ref{theorem:CatPolicyGrad} to estimate $\nabla_{\theta}p_i^{\theta}$, following Remark~\ref{remark:CatPolicyGradComputation}. Finally, substituting these $\nabla_{\theta}p_i^{\theta}$ estimates into the formula in Theorem~\ref{theorem:RiskMeasureGrad} yields a closed-form expression for $\nabla_{\theta}\rho(Z_N)$ and we update $\theta$ in the gradient descent direction.
\end{example}

\begin{algorithm}[ht!]
\caption{\textbf{CDPG Algorithm}}
\begin{algorithmic}
\label{algorithm:CDPG}
\REQUIRE initial parameter $\theta_1$, stepsize $\delta$, total epoch $T$, boundary [$z_{\min}, z_{\max}$], support size $N$ 

\FOR{$t = 1, \dots, T$}
    \STATE Sample a trajectory $\tau_{\theta_t}$ following $\pi_{\theta_t}$ \vskip 0.1cm

    \STATE \textcolor{gray}{\# Categorical Distributional Policy Evaluation}

    \STATE Initialize $\eta_{N, 0} \in \calP_N^{\calS \times \calA}$ \vskip 0.1cm
    \STATE $\eta_{N, k} \leftarrow (\Pi_{\calC}\calT^\pi)^k \eta_{N, 0}$

    \STATE \textcolor{gray}{\# Categorical Distributional Policy Improvement}

     \STATE $\nabla_\theta\eta_{N, k}^{s} \leftarrow \sum_{a}\nabla_\theta\pi_{\theta_t}(a|s)\cdot\eta_{N,k}^{(s, a)}$ \vskip 0.1cm

    \FOR{$h = 1, \dots, |\tau_{\theta_t}|$} 
        \STATE Compute $g(s_h) = \sum_{a}\nabla_\theta\pi_{\theta_t}(a|s_h)\cdot\eta_{N,k}^{(s_h, a)}$ \vskip 0.1cm
        
        \STATE $\nabla_\theta\eta_{N, k}^{s} \leftarrow \nabla_\theta\eta_{N, k}^{s} + \tilde{\calB}^{\tau_{\theta}(s_0, s_h)}(g(s_h))$ \vskip 0.1cm
        
    \ENDFOR
    
    \STATE Compute $\nabla_{\theta}\rho(Z_N^s)$ following Remark~\ref{remark:CatPolicyGradComputation} \vskip 0.1cm
    
    \STATE $\theta_{t+1} \leftarrow \theta_t -\delta\cdot\nabla_{\theta}\rho(Z_N^s)$ \vskip 0.1cm
\ENDFOR
\end{algorithmic}
\end{algorithm}

\subsection{Finite-Time Convergence Analysis under Inexact Policy Evaluation}
\label{subsection:Inexactness}
In this section, we provide an iteration complexity of CDPG to find an $\epsilon$-stationary point under \emph{inexact policy evaluation}, when we only conduct a finite round of policy evaluation. We first show that the objective function~\eqref{eq:CatApproxProblem} is $\beta$-smooth.

\begin{lemma}
\label{lemma:DiscreteStaticRiskBetaSmooth}
Under Assumption~\ref{assumption:BetaSmoothAssumptions_Cat}, the objective function~\eqref{eq:CatApproxProblem} is $\beta$-smooth.
\end{lemma}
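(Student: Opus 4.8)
\textbf{Proof plan for Lemma~\ref{lemma:DiscreteStaticRiskBetaSmooth}.}
The plan is to show that $\theta\mapsto\rho(Z_N^s)$ has a Lipschitz gradient by composing three Lipschitz-type estimates: (i) the map $\theta\mapsto\eta_{N,\infty}^s$ (equivalently the probability vector $(p_1^\theta,\dots,p_N^\theta)$) and its gradient are Lipschitz in $\theta$; (ii) the finite-dimensional risk functional $(p_1,\dots,p_N)\mapsto\rho(\sum_i p_i\delta_{z_i})$ is smooth on the simplex; and (iii) chaining these yields $\beta$-smoothness of the composition. Throughout I will work with the categorical parametrization from Remark~\ref{remark:CatPolicyGradComputation}, so that $\nabla_\theta\rho(Z_N^s)$ is the explicit finite sum given by Theorem~\ref{theorem:RiskMeasureGrad} with $f_{Z_\theta}$ replaced by the mass vector $p^\theta$, and its ingredients ($\nabla_\theta p_i^\theta$, the saddle-point multipliers, the risk-envelope constraint functions) are furnished by Theorem~\ref{theorem:CatPolicyGrad} and Assumption~\ref{assumption:BetaSmoothAssumptions_Cat}.

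First I would establish the regularity of the fixed point $\eta_{N,\infty}^{(s,a)}=\eta_{N,\infty}^{(s,a)}(\theta)$ in $\theta$. Since $\eta_{N,\infty}$ solves the projected distributional Bellman equation (Lemma~\ref{lemma:ProjDistBellmanEquation}), which in the categorical basis is a finite linear fixed-point system whose coefficients depend smoothly on $\theta$ through $\pi_\theta(\cdot|s')$ and through the (fixed, $\theta$-independent) projection/pushforward entries, the implicit function theorem together with the $\sqrt\gamma$-contraction of $\Pi_\calC\calT^\pi$ (Proposition~\ref{prop:ProjDistBellmanOperatorContractionMapping}) gives that $p^\theta$ is $C^1$ with bounded first and second derivatives in $\theta$, under the boundedness/Lipschitz conditions on $\pi_\theta$ assumed in Assumption~\ref{assumption:BetaSmoothAssumptions_Cat}. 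Concretely, differentiating the fixed-point relation once yields a geometric-series (Neumann) expression for $\nabla_\theta p^\theta$ — this is exactly the content of Theorem~\ref{theorem:CatPolicyGrad} — and its operator norm is bounded by a constant times $(1-\sqrt\gamma)^{-1}$ times $\sup_{s}\|\nabla_\theta\pi_\theta(\cdot|s)\|$; differentiating a second time gives a similar bounded expression involving $\nabla_\theta^2\pi_\theta$ and the square of the resolvent, so $p^\theta$ is Lipschitz with a Lipschitz gradient, with constants depending on $N$, $\gamma$, and the policy-smoothness bounds. Next I would handle the outer risk functional: on the compact simplex the finite-support coherent risk measure $\rho(\sum_i p_i\delta_{z_i})=\max_{\xi\in\calU(p)}\sum_i \xi_i p_i z_i$ has a gradient given by the saddle-point formula of Theorem~\ref{theorem:RiskMeasureGrad}, and under Assumption~\ref{assumption:BetaSmoothAssumptions_Cat} (which I expect to posit uniqueness/Lipschitz dependence of the saddle point and $C^2$ constraint functions $g_e,h_i$) this gradient is itself Lipschitz in $p$; finally I would apply the standard composition rule — if $u\mapsto p(u)$ has bounded, Lipschitz derivative and $p\mapsto\rho(p)$ has bounded, Lipschitz gradient, then $u\mapsto\rho(p(u))$ is $\beta$-smooth — to conclude, reading off $\beta$ as an explicit polynomial in the constituent constants (e.g. $\beta=L_{\nabla\rho}\|\nabla p\|^2+\|\nabla\rho\|\,L_{\nabla p}$).

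The main obstacle I anticipate is the smoothness of the outer risk functional at points where the saddle point of the dual problem~\eqref{eq:Coherent_Risk_Measure_Dual_Representation} is non-unique or where the active set of the risk-envelope constraints changes — for CVaR this is precisely the issue of a probability atom sitting exactly at the $(1-\alpha)$-quantile, where the indicator $\mathbf 1_{\{z_i>q_\alpha\}}$ in~\eqref{eq:CVaR_Grad} jumps. I expect Assumption~\ref{assumption:BetaSmoothAssumptions_Cat} to rule this out (or to bound the relevant Lipschitz modulus), so that the argument goes through; the remaining work is then bookkeeping of constants, which I would not carry out in detail here. An alternative, assumption-light route is to bound $|\rho(Z_N^s(\theta_1))-\rho(Z_N^s(\theta_2))-\langle\nabla\rho(Z_N^s(\theta_2)),\theta_1-\theta_2\rangle|$ directly using the $L_1$-Lipschitz property of $\rho$ (Assumption~\ref{assumption:L1_Lipschitz_Risk}) together with a second-order Taylor estimate on $\theta\mapsto F_{Z_N^s(\theta)}$ in the $\|\cdot\|_1$ norm — this replaces the saddle-point analysis by the already-available Lipschitz constant $L_1$ at the cost of slightly weaker constants, and I would mention it as a fallback.
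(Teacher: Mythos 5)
Your plan is sound but takes a genuinely different route from the paper, and it does more work than the paper's assumption requires. The paper's proof is a one-step direct computation: it writes $\nabla_\theta\rho(Z_\theta)=\sum_{j}\xi^*_\theta(z_j)\nabla_\theta p_j^\theta(z_j-\lambda_\theta^{*,\calP})-\sum_e\lambda_\theta^{*,\calE}(e)\nabla_\theta g_e-\sum_i\lambda_\theta^{*,\calI}(i)\nabla_\theta h_i$ (the discrete instance of Theorem~\ref{theorem:RiskMeasureGrad}), differentiates once more in $\theta$ term by term, and bounds every factor using Assumption~\ref{assumption:BetaSmoothAssumptions_Cat}, which \emph{directly postulates} bounds on $\nabla_\theta p_j^\theta$, $\nabla_\theta^2 p_j^\theta$, $\xi_\theta^*$, $\nabla_\theta\xi_\theta^*$, the multipliers $\lambda_\theta^*$ and their $\theta$-derivatives, and the first and second $\theta$-derivatives of $g_e,h_i$; this yields $\|\nabla_\theta^2\rho\|_2\le\sqrt{d(\theta)}(B_f+B_\calE+B_\calI)=\beta$. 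Your step (i) — deriving Lipschitzness of $p^\theta$ and $\nabla_\theta p^\theta$ from the projected Bellman fixed point via the implicit function theorem — is therefore unnecessary under the assumption as stated (you mis-guessed its content: it bounds the $\theta$-derivatives of $p^\theta$ and of the saddle-point data directly, rather than imposing conditions on $\pi_\theta$), though carrying it out would strengthen the result by replacing an assumption with a derivation. Your step (ii) also introduces a mismatch: the chain-rule decomposition needs the outer functional's gradient to be Lipschitz in the probability vector $p$, whereas the assumption supplies Lipschitz dependence of $\xi^*,\lambda^*$ on $\theta$; these are interconvertible but the paper's formulation makes the direct $\theta$-differentiation the cleaner path. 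Finally, you correctly flag the active-set/quantile degeneracy for CVaR; in the paper this is not handled inside this lemma but deferred to Lemma~\ref{lemma:CVaR_Beta_Smooth} under the separate Assumption~\ref{assumption:AlphaQuantile}. Your $L_1$-Lipschitz fallback would not by itself deliver $\beta$-smoothness (Lipschitzness of $\rho$ controls zeroth-order differences, not the error of a first-order expansion), so treat it only as a heuristic.
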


While Lemma~\ref{lemma:DiscreteStaticRiskBetaSmooth} and Algorithm~\ref{algorithm:CDPG} can be applied to any coherent risk measures, in the sequel, we focus on CVaR for the simplicity of analysis. Let $\eta_{N, \infty}$ be the limiting distribution of $\Pi_{\calC}\calT^\pi$ and let $\eta_{N, k}$ be the categorical distribution obtained after $k$ iterations of the operator $\Pi_{\calC}\calT^\pi$, starting from an initial distribution $\eta_{N, 0}$. We make the following assumption about the $\alpha$-quantile of $\eta_{N, \infty}$.

\begin{assumption}[$\alpha$-quantile]
\label{assumption:AlphaQuantile}
Let $z_j$ be the $\alpha$-quantile of $\eta_{N, \infty}=\sum_{i=1}^{N} p_i^{N,\infty}\delta_{z_i}$ for some $j \in [N]$. We assume that $\sum_{i=1}^{j}p_i^{N,\infty} > \alpha \text{ and } \sum_{i=1}^{j-1}p_i^{N,\infty} < \alpha$. 
\end{assumption}

\vskip 0.15cm
\begin{theorem}[CDPG Convergence]
\label{theorem:CDPG_Convergence}
Suppose Assumption~\ref{assumption:AlphaQuantile} holds. Let $\epsilon_\alpha = \min\{\sum_{i=1}^{j}p_i^{N,\infty} - \alpha, \,\alpha - \sum_{i=1}^{j-1}p_i^{N,\infty}\}$. In Algorithm~\ref{algorithm:CDPG}, let the stepsize $\delta=1/\beta$ and the number of $\Pi_{\calC}\calT^\pi$ oracle calls $k(N,|\tau_{\theta}|) = \kappa N|\tau_{\theta}+1|$. For any $\epsilon>0$, we have $\min_{t = 1, \dots, T}\|\nabla_{\theta}\rho(Z_{\theta_t, N})\|_2^2 \leq \epsilon$, whenever
\begin{align*}
&T \geq \dfrac{4\beta(\rho(Z_{\theta_1, N})-\min_{\theta\in\Theta}\rho(Z_{\theta, N}))}{\epsilon}\ \text{ and }\\
&\kappa \geq \max\bigg\{\calO\bigg(\dfrac{\log(N^{1.5}\epsilon^{-0.5})}{N}\bigg), \calO\bigg(\dfrac{\log(N\epsilon_\alpha^{-2})}{N}\bigg)\bigg\}.
\end{align*}
\end{theorem}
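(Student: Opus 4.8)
The plan is to combine the $\beta$-smoothness of the objective~\eqref{eq:CatApproxProblem} (Lemma~\ref{lemma:DiscreteStaticRiskBetaSmooth}) with a quantitative bound on the error of the gradient that CDPG actually computes after only $k$ rounds of $\Pi_{\calC}\calT^\pi$. Write the update as $\theta_{t+1}=\theta_t-\tfrac1\beta\widehat g_t$, let $g_t:=\nabla_\theta\rho(Z_{\theta_t,N})$ be the exact gradient (obtained with the fixed point $\eta_{N,\infty}$), and set $e_t:=\widehat g_t-g_t$ for the policy-evaluation error. Substituting the update into the $\beta$-smoothness inequality and expanding $\langle g_t,g_t+e_t\rangle$ gives the standard biased-descent bound
\begin{align*}
\rho(Z_{\theta_{t+1},N})\le\rho(Z_{\theta_t,N})-\tfrac{1}{2\beta}\|g_t\|^2+\tfrac{1}{2\beta}\|e_t\|^2 .
\end{align*}
Telescoping over $t=1,\dots,T$, using $\rho(Z_{\theta_{T+1},N})\ge\min_{\theta\in\Theta}\rho(Z_{\theta,N})$, and dividing by $T$ yields $\min_{t\le T}\|g_t\|^2\le\tfrac{2\beta(\rho(Z_{\theta_1,N})-\min_{\theta\in\Theta}\rho(Z_{\theta,N}))}{T}+\tfrac1T\sum_{t\le T}\|e_t\|^2$. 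Thus $T\ge 4\beta(\rho(Z_{\theta_1,N})-\min_{\theta\in\Theta}\rho(Z_{\theta,N}))/\epsilon$ makes the first term $\le\epsilon/2$, and it remains to force $\|e_t\|^2\le\epsilon/2$ at every iteration via the lower bound on $\kappa$.

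To bound $\|e_t\|$ I would split it into an index part (the $\alpha$-quantile of $\eta_{N,k}^s$ disagreeing with that of $\eta_{N,\infty}^s$) and a weight part ($\nabla_\theta p_i$ computed from $\eta_{N,k}$ instead of $\eta_{N,\infty}$). By Proposition~\ref{prop:ProjDistBellmanOperatorContractionMapping}, $\bar l_2(\eta_{N,k},\eta_{N,\infty})\le\gamma^{k/2}\bar l_2(\eta_{N,0},\eta_{N,\infty})\le\gamma^{k/2}\sqrt{z_{\max}-z_{\min}}$ uniformly over state--action pairs, and by convexity of the Cramér distance the same rate holds for $\eta^s=\sum_a\pi_\theta(a|s)\eta^{(s,a)}$; since the atoms lie on an $N$-point equispaced grid, this converts into a uniform grid-point CDF gap of $\calO(\sqrt N\,\gamma^{k/2})$. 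Under Assumption~\ref{assumption:AlphaQuantile} the true $\alpha$-quantile sits strictly inside its cell with margin $\epsilon_\alpha$, so once $\calO(\sqrt N\,\gamma^{k/2})<\epsilon_\alpha$ the quantile index and the value $q_\alpha$ in~\eqref{eq:CVaR_Grad} agree for $\eta_{N,k}^s$ and $\eta_{N,\infty}^s$, killing the index error; this needs $k=\calO(\log(N\epsilon_\alpha^{-2}))$, which follows from the second term of the $\kappa$-bound since $k=\kappa N(|\tau_\theta|+1)\ge\kappa N$.

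With the quantile pinned down, $\|e_t\|\le\tfrac{z_{\max}-z_{\min}}{\alpha}\sum_i\|\nabla_\theta p_i^{s,(k)}-\nabla_\theta p_i^{s,(\infty)}\|$, and I would bound the right side by subtracting the two instances of Theorem~\ref{theorem:CatPolicyGrad}: the difference equals $(g_k-g_\infty)(s_0)+\sum_{t=1}^{|\tau_\theta|}\tilde{\calB}^{\tau_\theta(s_0,s_t)}(g_k-g_\infty)(s_t)$ with $g_k(s)-g_\infty(s)=\sum_a\nabla_\theta\pi_\theta(a|s)(\eta_{N,k}^{(s,a)}-\eta_{N,\infty}^{(s,a)})$. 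Each $\|g_k(s)-g_\infty(s)\|$ is at most $|\calA|\,C_\pi\,\|\eta_{N,k}^{(s,a)}-\eta_{N,\infty}^{(s,a)}\|_{\mathrm{TV}}=\calO(|\calA|C_\pi N\gamma^{k/2})$ (with $C_\pi$ the policy-gradient bound from Assumption~\ref{assumption:BetaSmoothAssumptions_Cat}, the total variation again controlled through the Cramér contraction on the grid), and each $\tilde{\calB}^{\tau_\theta(s_0,s_t)}$ is a composition of projected pushforwards --- each a sub-stochastic reallocation of Dirac mass --- hence non-expansive in total variation, so summing the $|\tau_\theta|+1$ terms and the $N$ atoms gives $\|e_t\|=\calO\!\big(\tfrac{z_{\max}-z_{\min}}{\alpha}(|\tau_\theta|+1)N^{3/2}\gamma^{k/2}\big)$. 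Forcing this below $\sqrt{\epsilon/2}$ amounts to $\gamma^{k/2}=\calO(\sqrt\epsilon/((|\tau_\theta|+1)N^{3/2}))$, i.e. $k\ge\tfrac{2}{\log(1/\gamma)}(\log(N^{1.5}\epsilon^{-0.5})+\log(|\tau_\theta|+1))$; with $k=\kappa N(|\tau_\theta|+1)$ and $\log(|\tau_\theta|+1)\le|\tau_\theta|+1$ it suffices that $\kappa N=\calO(\log(N^{1.5}\epsilon^{-0.5}))$, the first term of the stated $\kappa$-bound. Taking $\kappa$ to be the maximum of the two requirements and $T$ as stated makes every $\|e_t\|^2\le\epsilon/2$ and finishes the proof.

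I expect the main obstacle to be the second and third steps: converting the supremum-Cramér contraction of $\Pi_{\calC}\calT^\pi$ into total-variation (equivalently $\ell_1$) control of the categorical mass vectors --- which on the fine $N$-point grid costs a $\mathrm{poly}(N)$ factor --- and then propagating that error through the $t$-step projected pushforward $\tilde{\calB}^{\tau_\theta(s_0,s_t)}$ \emph{uniformly in $t$}, verifying that the pushforward/projection composition never amplifies it. A secondary subtlety is that CVaR is non-smooth precisely at the quantile, so the whole bound hinges on Assumption~\ref{assumption:AlphaQuantile} supplying a strictly positive margin $\epsilon_\alpha$ that keeps the quantile index locally constant --- the same condition that legitimizes the closed-form gradient~\eqref{eq:CVaR_Grad} and lets the policy-evaluation error be absorbed.
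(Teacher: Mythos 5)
Your proposal is correct and its overall architecture coincides with the paper's: $\beta$-smoothness of the categorical objective, the standard biased-gradient-descent inequality $\rho_{t+1}\le\rho_t-\tfrac{1}{2\beta}\|g_t\|_2^2+\tfrac{1}{2\beta}\|e_t\|_2^2$ (the paper's Lemma~\ref{lemma:ConvergenceWithError}), the Cram\'er contraction converted to a uniform grid-point CDF gap $C(N,k)=\calO(\sqrt{N}\gamma^{k/2})$, and the quantile-stability argument under Assumption~\ref{assumption:AlphaQuantile} giving the $\calO(\log(N\epsilon_\alpha^{-2})/N)$ term (the paper's Corollary~\ref{corollary:AlphaQuantileCorollary}). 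Where you genuinely diverge is the key step of propagating the evaluation error through the $t$-step projected pushforward $\tilde{\calB}^{\tau_\theta(s_0,s_t)}$. The paper works in the $\ell_\infty$ norm on mass vectors, proves that one application of $\Pi_\calC(b_{c,\gamma})_\#$ \emph{amplifies} the error by a factor $\delta_\Pi=\calO(N)$, and then relies on the scaling $k=\kappa N(|\tau_\theta|+1)$ to make $\gamma^{k/2}$ beat $\delta_\Pi^{h}$ through a geometric series $\sum_h(\delta_\Pi\gamma^{\kappa N/2})^h$. You instead observe that the projected pushforward is a column-stochastic reallocation of Dirac mass and hence non-expansive in total variation ($\ell_1$), so no per-step amplification occurs at all and the trajectory sum contributes only a linear factor $(|\tau_\theta|+1)$, absorbed by $k=\kappa N(|\tau_\theta|+1)$ with room to spare. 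Your route is cleaner and avoids the paper's explicit computation of $\delta_\Pi$; the price is a slightly larger polynomial-in-$N$ prefactor from converting the Cram\'er bound to TV ($\calO(N\gamma^{k/2})$ per state--action pair versus the paper's per-atom $\calO(\sqrt{N}\gamma^{k/2})$), which is immaterial since it enters only logarithmically in the final $\kappa$ requirement. Both arguments land on the same two-term lower bound for $\kappa$ and the same $T=\calO(\beta/\epsilon)$ iteration count.
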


As $\epsilon\to 0$, both $T$ and $\kappa$ tend to infinity, revealing the asymptotic convergence of the CDPG algorithm. Furthermore, the number of policy evaluation rounds required per iteration $k(N,|\tau_{\theta}|)$ increases with $N$ only logarithmically.

\section{Numerical Experiments}
\label{section:Numerical}
In this section, we evaluate our CDPG algorithm in the following stochastic Cliffwalk and CartPole environments.

\begin{figure*}
\centering
\begin{tabular}{cccc}
\hspace{1.12cm}\includegraphics[width=0.16\textwidth]{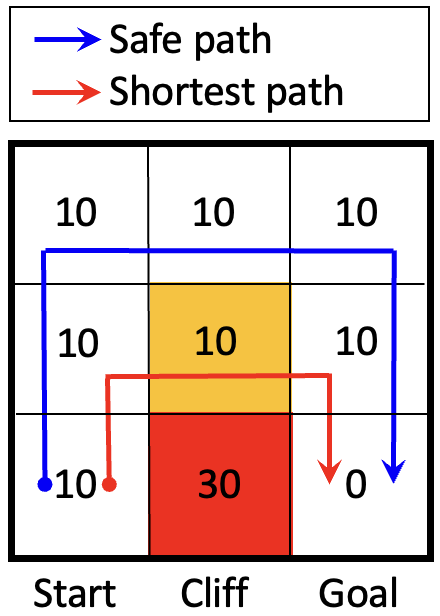} &
\includegraphics[width=0.22\textwidth]{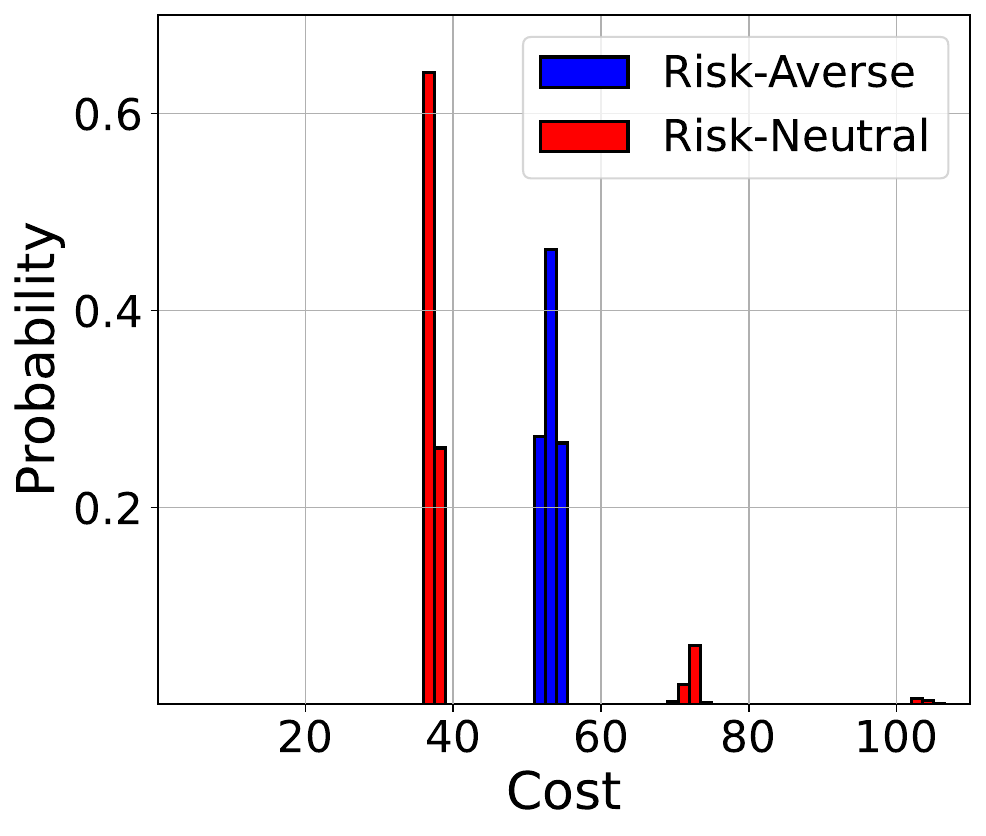} &
\includegraphics[width=0.22\textwidth]{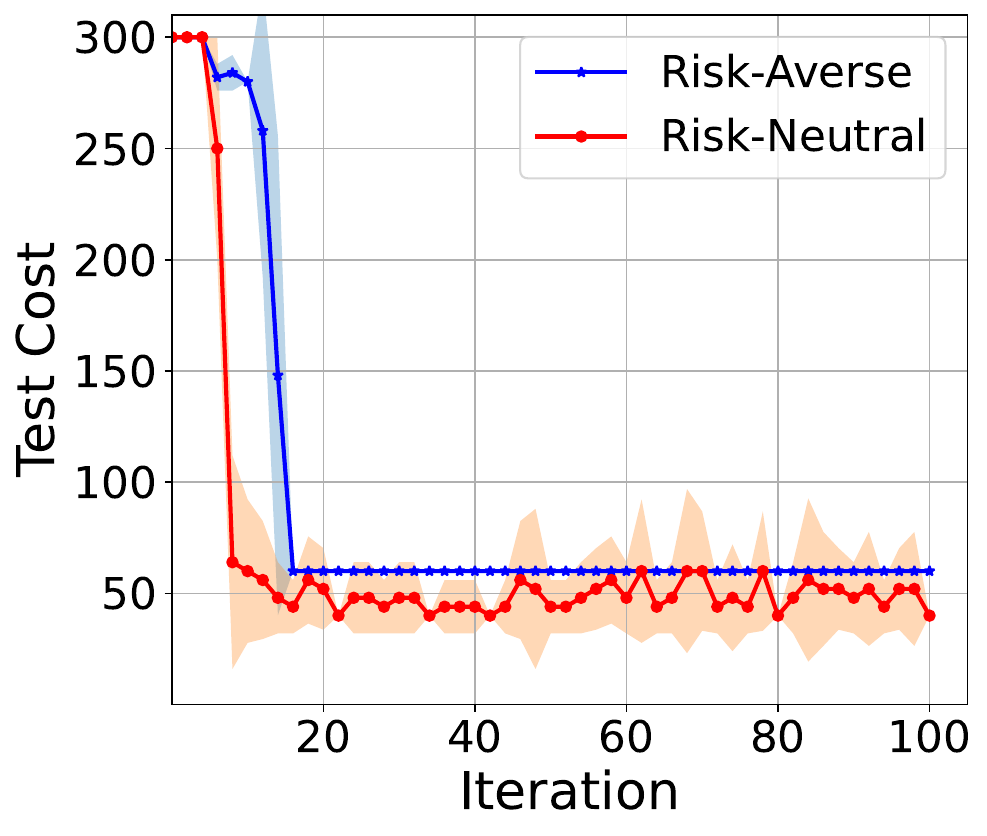} & 
\hspace{0.2cm}\includegraphics[width=0.22\textwidth]{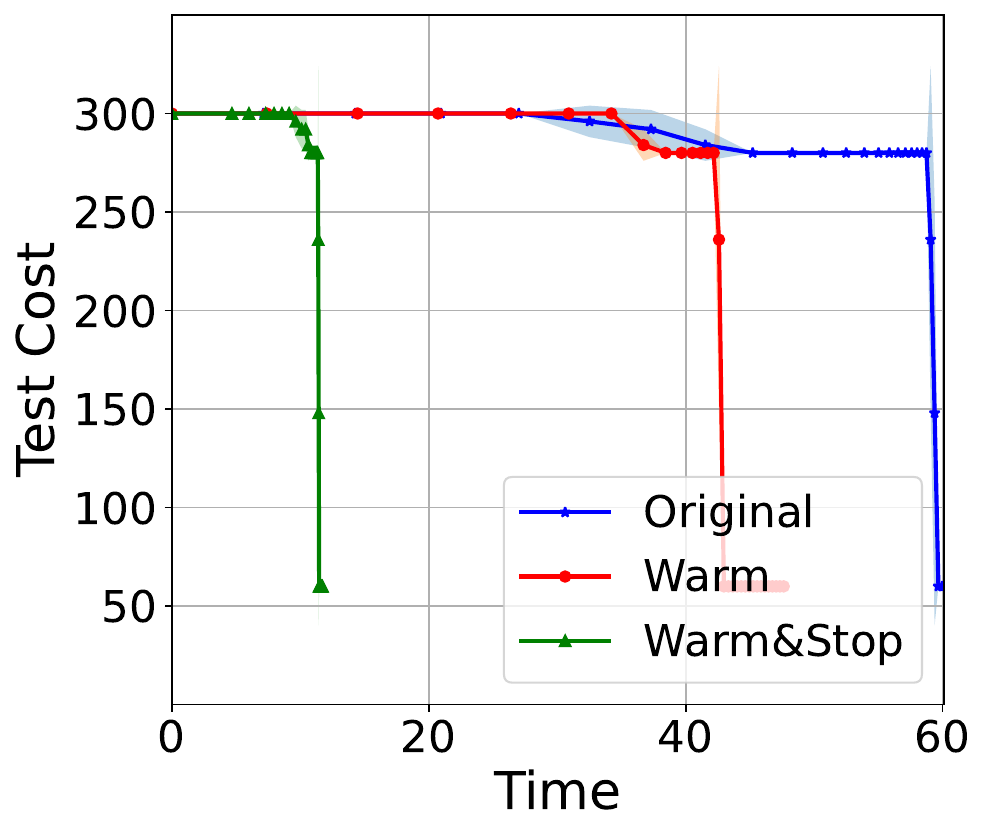} \\
\hspace{1cm}(a) & \hspace{0.7cm}(b) & \hspace{0.8cm}(c) & \hspace{0.8cm}(d)
\end{tabular}
\caption{Comparison between risk-averse and risk-neutral policies. Figure (a) illustrates the environment settings. Figure (b) displays the cost distribution. Figure (c) shows the average test cost and Figure (d) shows the average test cost under a warm-start and early-stopping regime, which speeds up training.}
\label{figure:Cliffwalking_CDPG}
\end{figure*}

\begin{figure*}
\centering
\begin{tabular}{cccc}
\includegraphics[width=0.223\textwidth]{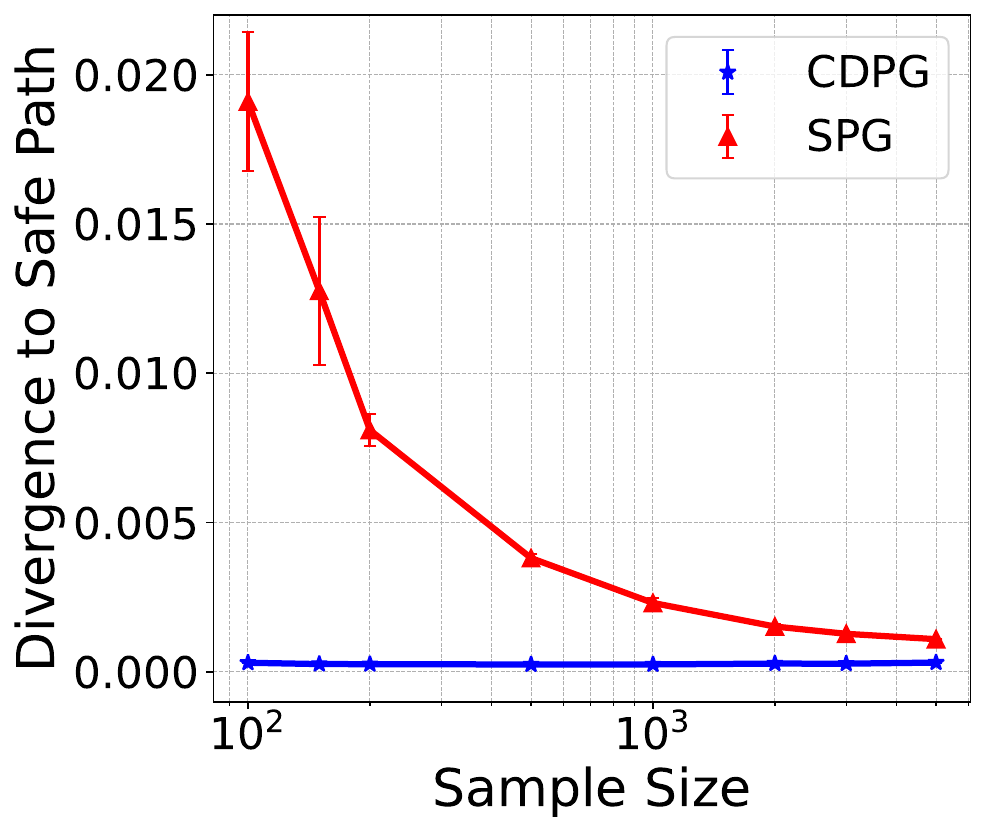} & 
\includegraphics[width=0.223\textwidth]{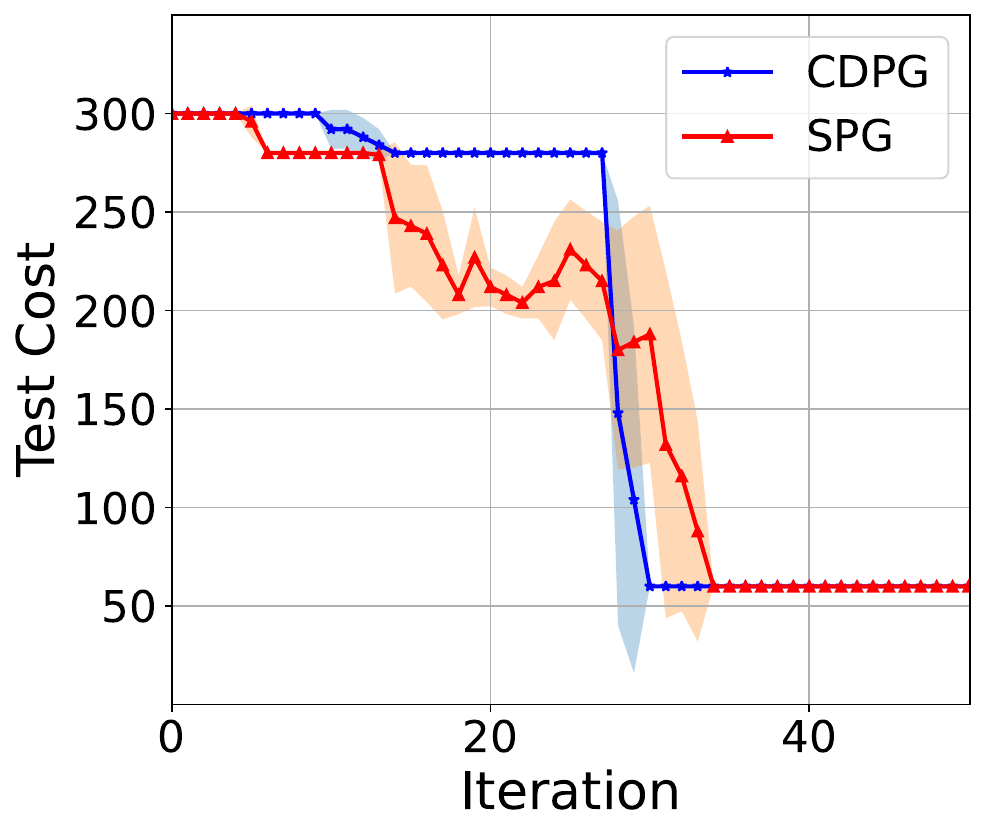} & 
\includegraphics[width=0.23\textwidth, height=0.19\textwidth]{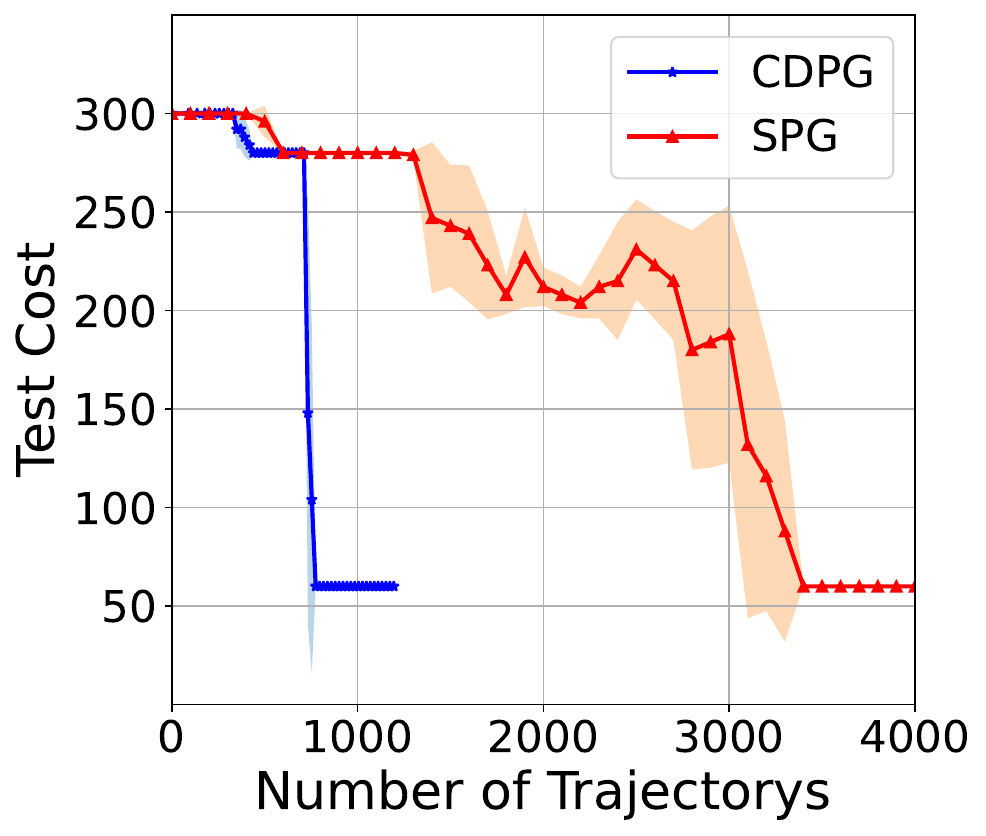} & 
\includegraphics[width=0.223\textwidth]{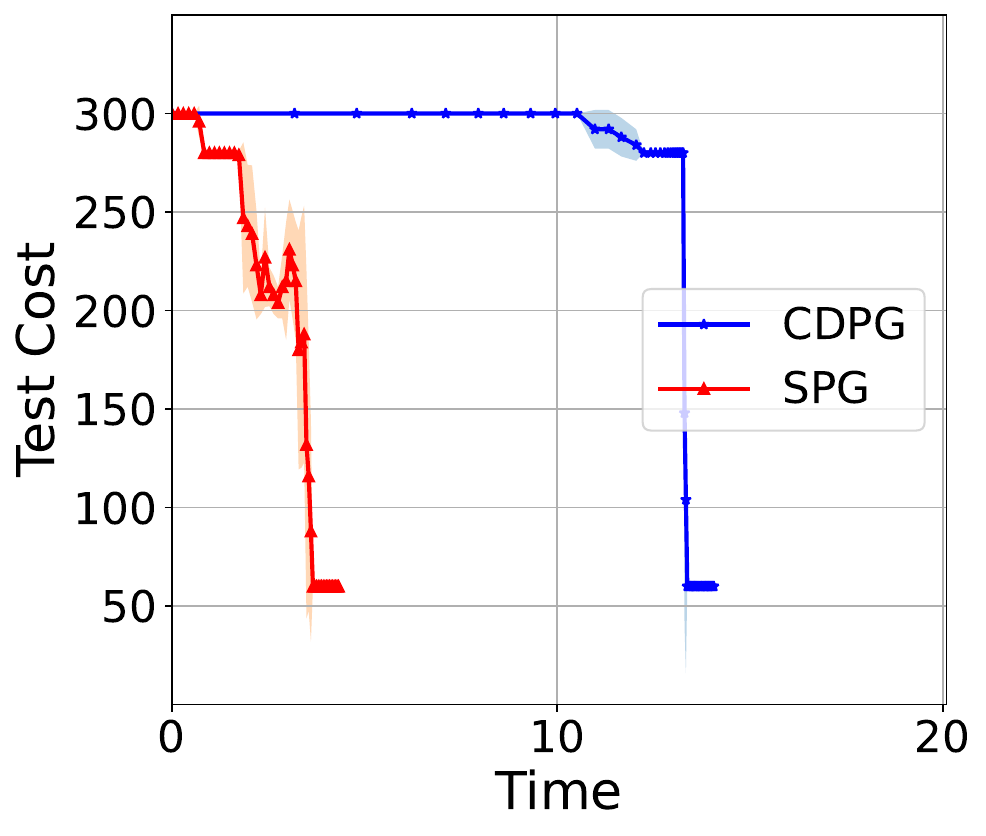} \\
\hspace{1cm}(a) & \hspace{0.7cm}(b) & \hspace{0.7cm}(c) & \hspace{0.7cm}(d)
\end{tabular}
\caption{Comparison between CDPG and SPG~\citep{tamar2015policy} algorithm under Cliffwalking settings. Figure (a) shows the divergence from the safe path using different \textit{fixed} sample sizes after \textit{100 iterations}. Figures (b), (c), and (d) depict the average test cost with respect to the iteration count, the number of trajectories sampled, and the computational time, respectively, where CDPG is accelerated using a warm-start and early-stopping regime.}
\label{figure:CliffWalking_CDPG_SPG}
\end{figure*}

\begin{figure*}
\centering
\begin{tabular}{cccc}
\includegraphics[width=0.223\textwidth]{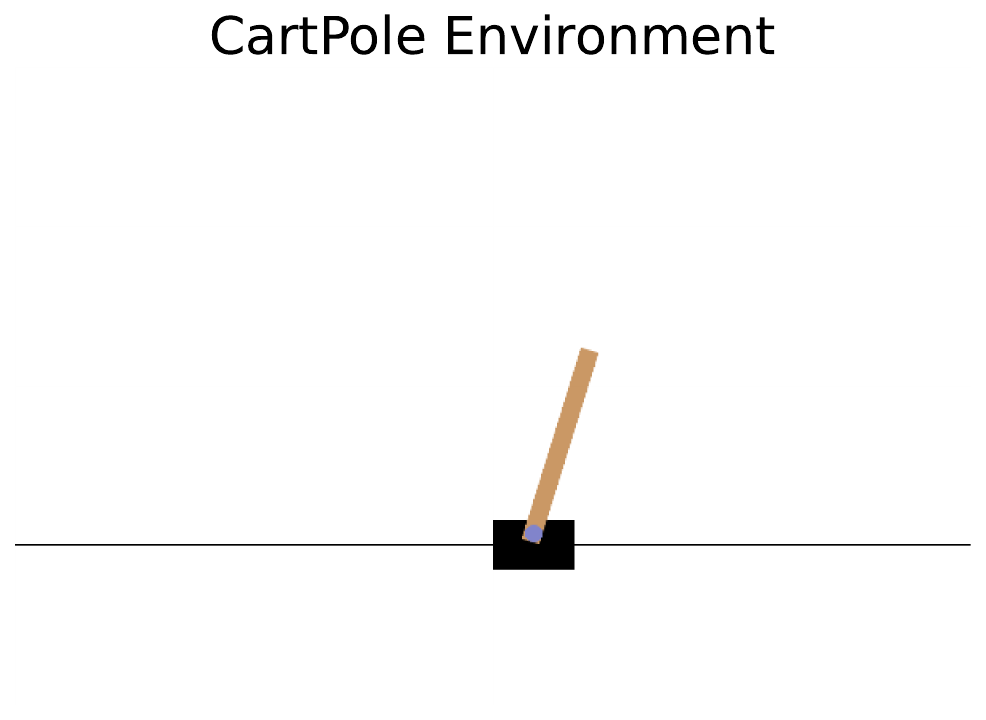} & 
\includegraphics[width=0.223\textwidth]{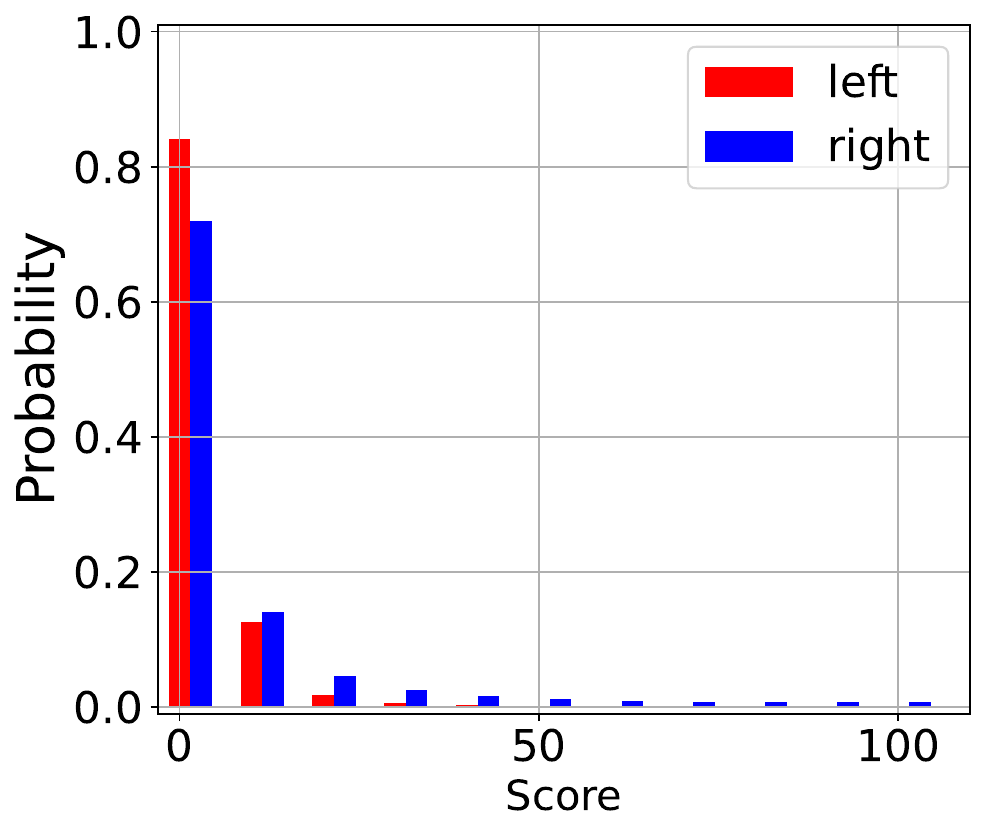} & 
\includegraphics[width=0.23\textwidth, height=0.19\textwidth]{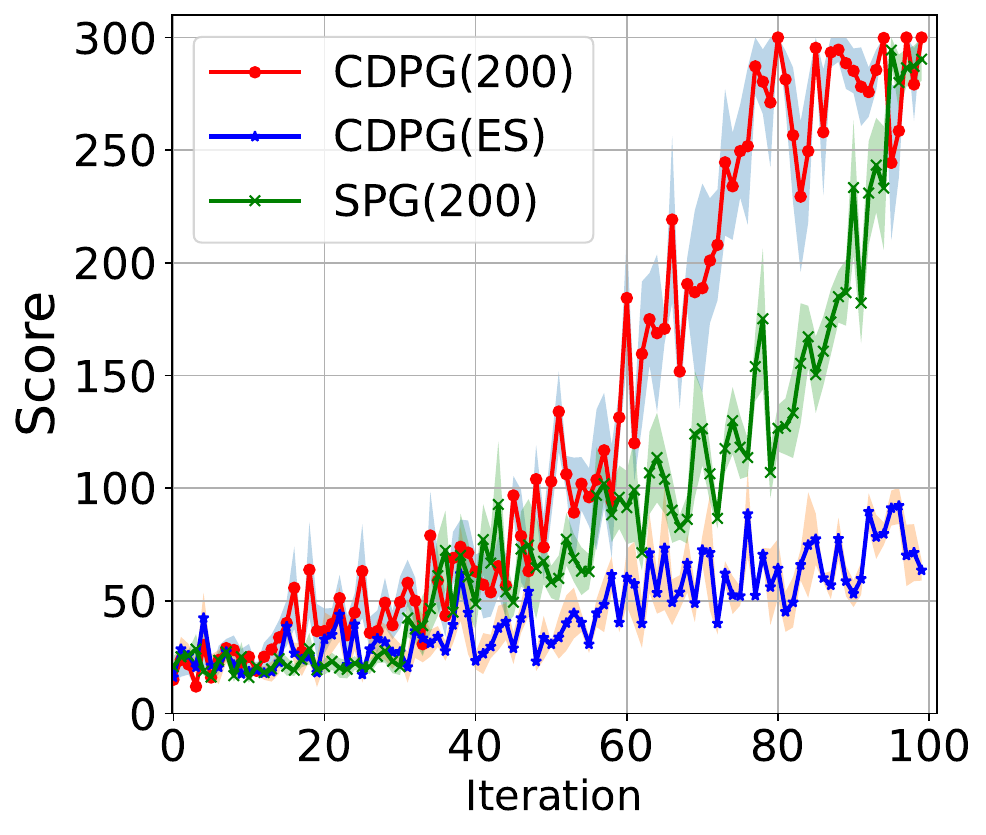} & 
\includegraphics[width=0.223\textwidth]{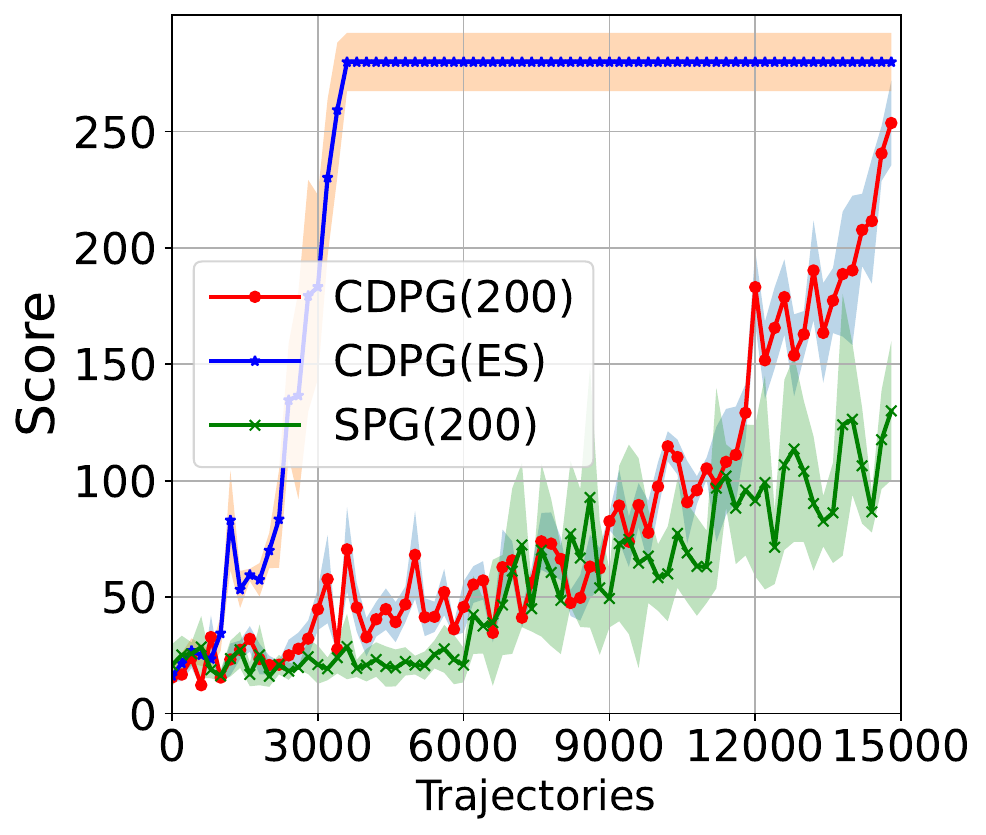} \\
\hspace{1cm}(a) & \hspace{0.7cm}(b) & \hspace{0.7cm}(c) & \hspace{0.7cm}(d)
\end{tabular}
\caption{Comparison between the CDPG and SPG~\citep{tamar2015policy} algorithms in the CartPole environment with a \textit{continuous state space}. Figure (a) shows an example CartPole state where the best action is to move to the right. Figure (b) presents the cost estimates for the two possible actions. Figures (c) and (d) illustrate the cumulative score with respect to the iteration count and the number of sampled trajectories, respectively.}
\label{figure:CartPole_CDPG_SPG}
\end{figure*}

\paragraph{Cliffwalk} We consider a stochastic $3\times 3$ Cliffwalk environment (Figure~\ref{figure:Cliffwalking_CDPG}(a)) where the agent navigates from the bottom left to the bottom right under the risk of falling off the cliff, which incurs additional cost and forces a restart. The state above the cliff is slippery, with a probability $p=0.2$ of falling off the cliff when entered. We parameterize the policy using the softmax function $\pi_\theta(a|s) = \tfrac{\exp(\theta_{a, s})}{\sum_{a' \in \mathcal{A}(s)} \exp(\theta_{a', s})}$. 

\paragraph{CartPole} We extend our algorithm to continuous state spaces by evaluating it in the CartPole environment (Figure~\ref{figure:CartPole_CDPG_SPG}(a)). The policy is parameterized by a neural network that maps states to action probabilities through a softmax layer. A critic network is employed for policy evaluation and gradient computation following Theorem~\ref{theorem:CatPolicyGrad}.

We optimize the policy using CVaR for both environments, where a smaller $\alpha$ represents a more risk-averse attitude. All experiments are conducted on an Intel® Core™ i5-12600K processor and an NVIDIA 4080 Super GPU.

\subsection{Risk-Sensitive v.s. Risk-Neutral Policy} We first compare the performance under risk-averse ($\alpha=0.1$) and risk-neutral ($\alpha=1$) settings. Figures~\ref{figure:Cliffwalking_CDPG}(b) and \ref{figure:Cliffwalking_CDPG}(c) show that the risk-neutral policy exhibits a cost distribution with a long tail and high variance, highlighting the importance of safe policy learning. Additionally, training can be expedited by incorporating warm-start initialization and early stopping in the Categorical Distributional Policy Evaluation of Algorithm~\ref{algorithm:CDPG} (see Appendix~\ref{appendix:Numerical}). As demonstrated in Figure~\ref{figure:Cliffwalking_CDPG}(d), this approach accelerates training time by a factor of five compared to the original algorithm.

\subsection{Comparison with SPG}
We compare our CDPG with the non-DRL sample-based policy gradient (SPG) method \cite{tamar2015policy}. SPG samples multiple trajectories to approximate the policy gradient, where the sample-average estimator converges to the true gradient when the sample size goes to infinity.

\paragraph{\textbf{Cliffwalk}} Figure~\ref{figure:CliffWalking_CDPG_SPG} compares CDPG and SPG in the Cliffwalk environment. Figure~\ref{figure:CliffWalking_CDPG_SPG}(a) shows the convergence performance under different sample sizes at a fixed number of iterations. Figure~\ref{figure:CliffWalking_CDPG_SPG}(b), \ref{figure:CliffWalking_CDPG_SPG}(c) and \ref{figure:CliffWalking_CDPG_SPG}(d) display the average test cost with respect to the number of iterations, sampled trajectories and computational time, respectively. Although CDPG required slightly more computational effort than SPG as shown in Figure~\ref{figure:CliffWalking_CDPG_SPG}(d), its sample efficiency is approximately four times that of SPG in this environment (see Figure~\ref{figure:CliffWalking_CDPG_SPG}(c)).

\paragraph{\textbf{CartPole}} Figure~\ref{figure:CartPole_CDPG_SPG} compares CDPG and SPG in the CartPole environment. Figure~\ref{figure:CartPole_CDPG_SPG}(b) illustrates another advantage of CDPG: its ability to estimate the distribution of each action, thereby facilitating better decision-making. Figures~\ref{figure:CartPole_CDPG_SPG}(c) and \ref{figure:CartPole_CDPG_SPG}(d) further demonstrate the sample efficiency of the CDPG algorithm, with CDPG employing early stopping achieving a tenfold improvement over SPG. Notably, the actor network automatically utilizes a “warm start initialization” scheme.

\section{Conclusion}
\label{section:Conclusion}
We proposed a new distributional policy gradient method for risk-sensitive MDPs with coherent risk measures. By leveraging distributional policy evaluation, we derived an analytical form of the probability measure gradient and introduced the CDPG algorithm with a categorical approximation, offering finite-support optimality and finite-iteration convergence guarantees under inexact policy evaluation. Experiments on stochastic Cliffwalk and CartPole highlighted the benefits of our risk-sensitive approach over risk-neutral baselines. By comparing with a non-DRL sample-based counterpart, we demonstrated superior sample efficiency. Future work will explore other parametric distribution families (e.g., quantile or Gaussian) for broader applicability.

\bibliographystyle{plainnat}
\bibliography{ref.bib}

\newpage
\appendix
\onecolumn
\begin{center}
    \Large \textbf{Appendix}
\end{center}
The appendix is organized as follows.
\begin{itemize}
    \item Appendix \ref{appendix:definitions}: Omitted Definitions.
    \item Appendix \ref{appendix:operators}: Useful Properties of the Operators.
    \item Appendix \ref{appendix:proofs}: Omitted Proofs.
        \begin{itemize}
            \item{Appendix~\ref{appendix:proof-sec2}: Proofs in Section~\ref{section:Preliminaries}}
            \item{Appendix~\ref{appendix:proof-sec3}: Proofs in Section~\ref{section:DPG}}
            \item{Appendix~\ref{appendix:proof-sec4}: Proofs in Section~\ref{section:CDPG}}
        \end{itemize}
    \item Appendix \ref{appendix:Numerical}: Numerical Experiment Details.
\end{itemize}

\section{Omitted Definitions}\label{appendix:definitions}
In this appendix, we provide detailed information on omitted definitions used in this paper. In Sections \ref{appendix:coherent-risk-measure}-\ref{subsection:CramerDistance}, we provide some examples of how to compute gradients of coherent risk measures and define Wasserstein and Cramer Distance, respectively. In Sections~\ref{subsection:Divergence}, we explain the divergence used in our numerical experiment (Section~\ref{section:Numerical}).
\subsection{Gradients of Coherent Risk Measures}\label{appendix:coherent-risk-measure}
\begin{example}[CVaR]\label{exm:cvar}
Given a risk level $\alpha \in [0, 1]$, the CVaR of a random variable $Z$ is defined as the $\alpha$-tail expectation, i.e., $\rho_{\textit{CVaR}}(Z; \alpha) =\inf_{t\in\bbR}\left\{t+\frac{1}{\alpha}\bbE[(Z-t)_+]\right\}$.
The risk envelope for CVaR is known to be $\calU=\{\xi: \xi(\omega)\in[0,\alpha^{-1}], \int_{\Omega}\xi(\omega)f_Z(\omega)d\omega=1\}$ \citep{shapiro2009lectures}. Furthermore, \cite{shapiro2009lectures} showed that the saddle points of Lagrangian function of \eqref{eq:Coherent_Risk_Measure_Dual_Representation} for CVaR satisfy $\xi_\theta^*(\omega) = \alpha^{-1}$ when $Z_\theta^s(\omega) > \lambda_\theta^{*, \calP}$ and $\xi_\theta^*(\omega) = 0$ when $Z_\theta^s(\omega) < \lambda_\theta^{*, \calP}$, where $\lambda_\theta^{*, \calP} = q_\alpha$ is the $(1-\alpha)$-quantile of $Z_\theta^s$.
As a result,  the gradient of CVaR can be written as
\begin{align}
\nabla_{\theta}\rho_{\textit{CVaR}}(Z_\theta^s; \alpha) 
&= \frac{1}{\alpha}\int_{\Omega}\frac{\partial}{\partial\theta } f_{Z^s}(\omega,\theta)\big(Z^s(\omega) - q_\alpha\big)\cdot\mathbf{1}_{\{Z^s(\omega) > q_\alpha\}}d\omega\label{eq:gradient-CVaR}
\end{align} 
\end{example}

\begin{example}[\cite{tamar2015policy}, Expectation] 
\label{example:expectation}
The gradient of the expectation of random variable $Z_{\theta}$ under policy $\pi$ with the probability measure $\eta_{\theta}$ is given by 
\begin{align*}
\nabla_\theta \bbE[Z_\theta] = \bbE\big[\nabla_\theta\log f_{Z}(\omega,\theta)Z\big] 
\end{align*}
\end{example}

\begin{example}[\cite{tamar2015policy}, Mean-Semideviation] 
\label{example:mean-semi}
The mean-semideviation of the cost random variable $Z_\theta$ with probability measure $\eta_{\theta}$ at risk level $\alpha \in [0, 1]$ is defined by 
\begin{align*}
\rho_{\textit{MSD}}(Z_\theta; \alpha) = \bbE[Z_\theta] + \alpha\bigg(\bbE\big[(Z_\theta - \bbE[Z_\theta])_+^2\big]\bigg)^{1/2},
\end{align*}
Then the gradient $\nabla_\theta\rho_{\textit{MSD}}(Z_\theta; \alpha)$ is given by
\begin{align*}
\nabla_\theta\rho_{\textit{MSD}}(Z_\theta; \alpha) = \nabla_\theta\bbE[Z_{\theta}]+\frac{\alpha\bbE[(Z-\bbE[Z])_+(\nabla_\theta\log f_Z(\omega,\theta)(Z-\bbE[Z])-\nabla_\theta\bbE[Z])]}{\mathbb{SD}(Z)}
\end{align*}
\end{example}

\subsection{Wasserstein Metric}
\label{subsec:Wasserstein}
\begin{definition}
\label{def:Wasserstein}
The p-Wasserstein distance $d_p$ is defined as
\begin{align*}
d_p(\nu_1, \nu_2) = \bigg(\inf_{\lambda \in \Lambda(\nu_1, \nu_2)}\int_{\bbR^2}|x-y|^p\lambda(dx, dy)\bigg)^{1/p}
\end{align*}
for all $\nu_1, \nu_2 \in \calP(\bbR)$, where $\Lambda(\nu_1, \nu_2)$ is the set of probability distributions on $\bbR^2$ with marginals $\nu_1$ and $\nu_2$. The supremum-p-Wasserstein metric $\bar{d}_p$ is defined on $\calP(\bbR)^{\calS \times \calA}$ by
\begin{align*}
\bar{d}_p(\eta, \nu) = \sup_{(s, a) \in \calS\times\calA} d_p\bigg(\eta^{(s, a)}, \nu^{(s, a)}\bigg),
\end{align*}
for all $\eta, \nu \in \calP(\bbR)^{\calS \times \calA}$.
\end{definition}

\subsection{Cram\'er Distance}
\label{subsection:CramerDistance}
\begin{definition}
\label{def:CramerDistance}
The Cram\'er distance $l_2$ between two distributions $\nu_1, \nu_2 \in \mathcal{P}(\mathbb{R})$, with cumulative distribution functions $F_{\nu_1}$ and $F_{\nu_2}$ respectively, is defined by:
\begin{align*}
l_2(\nu_1, \nu_2) = \left(\int_{\mathbb{R}} (F_{\nu_1}(x) - F_{\nu_2}(x))^2 \, dx\right)^{1/2}.
\end{align*}

Furthermore, the supremum-Cram\'er metric $\bar{l}_2$ is defined between two distribution functions $\eta, \mu \in \mathcal{P}(\mathbb{R})^{\calS \times \calA}$ by
\begin{align*}
\bar{l}_2(\eta, \mu) = \sup_{(s, a) \in \calS \times \calA} l_2(\eta(s, a), \mu(s, a)).
\end{align*}
\end{definition}

\subsection{Divergence in Numerical Experiments (Section \ref{section:Numerical})}
\label{subsection:Divergence}
Given a target state trajectory $s = (s_0, \dots, s_T)$, the divergence between two policies $\pi_1$ and $\pi_2$ is defined as
\begin{align*}
\calD(\pi_1, \pi_2) = \sqrt{\sum_{t=0}^{T}\sum_{a\in\calA}\bigg|\pi_1(a|s_t)-\pi_2(a|s_t)\bigg|^2}
\end{align*}
For instance, $\pi^*$ is a specific target policy (e.g., safe path in Figure \ref{figure:Cliffwalking_CDPG}(a)), then $\calD(\pi^*, \pi)$ measures the distance from policy $\pi$ to the target policy $\pi^*$.

\section{Useful Properties of the Operators}\label{appendix:operators}
In this appendix, we present some useful properties of the pushforward and projection operators. We first provide the following properties of the pushforward operator $(b_{c, \gamma})_{\#}$:
\begin{proposition}\label{prop: pushforward props}
    The pushforward operator $(b_{c,\gamma})_{\#}$ has the following properties:
    \begin{itemize}
        \item $\nabla_{\theta}(b_{c,\gamma})_{\#}\eta_\theta=(b_{c,\gamma})_{\#}\nabla_{\theta}\eta_\theta$ for all $\eta_\theta\in\calM(\bbR)$;
        \item $(b_{c,\gamma})_{\#}(\sum_{s}p_s\eta_\theta)=\sum_{s}p_s(b_{c,\gamma})_{\#}\eta_\theta$ for all $\eta_\theta\in\calM(\bbR)$ and $p_s\in\bbR$.
    \end{itemize}
\end{proposition}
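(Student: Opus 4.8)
The plan is to argue everything from the defining identity of the pushforward, $\bigl((b_{c,\gamma})_{\#}\nu\bigr)(A)=\nu\bigl(b_{c,\gamma}^{-1}(A)\bigr)$ for every Borel set $A$ (Definition~\ref{def:pushforward}), together with the elementary observation that the preimage set $b_{c,\gamma}^{-1}(A)$ is a \emph{fixed} Borel set (for instance $\gamma^{-1}(A-c)$ when $\gamma>0$) which carries no dependence on $\theta$.

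For the linearity claim, I would evaluate both sides on an arbitrary Borel set $A$: by Definition~\ref{def:pushforward}, $\bigl((b_{c,\gamma})_{\#}(\sum_{s}p_s\eta_\theta)\bigr)(A)=(\sum_{s}p_s\eta_\theta)\bigl(b_{c,\gamma}^{-1}(A)\bigr)=\sum_{s}p_s\,\eta_\theta\bigl(b_{c,\gamma}^{-1}(A)\bigr)=\sum_{s}p_s\,\bigl((b_{c,\gamma})_{\#}\eta_\theta\bigr)(A)$, where the middle step is just the definition of a finite linear combination of signed measures evaluated at a set. Since $A$ is arbitrary, the two elements of $\calM(\bbR)$ coincide.

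For the commutation with $\nabla_\theta$, I would first recall that, under Assumption~\ref{assumption:PDF_Gradient_Bound}, $\nabla_\theta\eta_\theta$ is a well-defined finite signed measure given by $(\nabla_\theta\eta_\theta)(A)=\int_A\tfrac{\partial}{\partial\theta}f_{Z_\theta}(x)\,dx=\tfrac{\partial}{\partial\theta}\eta_\theta(A)$ — i.e. the gradient of a measure is the set function obtained by differentiating its values. Fixing a Borel set $A$ and using that $b_{c,\gamma}^{-1}(A)$ is independent of $\theta$, $\bigl(\nabla_\theta(b_{c,\gamma})_{\#}\eta_\theta\bigr)(A)=\tfrac{\partial}{\partial\theta}\bigl((b_{c,\gamma})_{\#}\eta_\theta\bigr)(A)=\tfrac{\partial}{\partial\theta}\eta_\theta\bigl(b_{c,\gamma}^{-1}(A)\bigr)=(\nabla_\theta\eta_\theta)\bigl(b_{c,\gamma}^{-1}(A)\bigr)=\bigl((b_{c,\gamma})_{\#}\nabla_\theta\eta_\theta\bigr)(A)$; since $A$ is arbitrary this is the claimed equality of signed measures. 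An equivalent route tests against bounded measurable $f$ via the change-of-variables identity $\int f\,d(b_{c,\gamma})_{\#}\eta_\theta=\int f(c+\gamma z)\,d\eta_\theta(z)$ and then differentiates under the integral.

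The only genuine point requiring the standing assumptions — and hence the single (very mild) obstacle — is justifying that $\theta$-differentiation commutes with the set-function structure, i.e. $\tfrac{\partial}{\partial\theta}\eta_\theta(B)=(\nabla_\theta\eta_\theta)(B)$ for the $\theta$-independent Borel set $B=b_{c,\gamma}^{-1}(A)$. This is exactly what Assumption~\ref{assumption:PDF_Gradient_Bound} buys: existence and boundedness of $\tfrac{\partial}{\partial\theta}f_{Z_\theta}$ let one pass the derivative through $\int_B f_{Z_\theta}(x)\,dx$ by dominated convergence. Everything else is routine bookkeeping with the definition of the pushforward.
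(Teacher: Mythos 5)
Your proof is correct and follows essentially the same route as the paper's: evaluate both sides on an arbitrary Borel set $A$, unwind Definition~\ref{def:pushforward}, use that the preimage $b_{c,\gamma}^{-1}(A)$ is a fixed $\theta$-independent set, and apply linearity of signed measures pointwise on sets. You are in fact slightly more careful than the paper in justifying the identity $\tfrac{\partial}{\partial\theta}\eta_\theta(B)=(\nabla_\theta\eta_\theta)(B)$ via Assumption~\ref{assumption:PDF_Gradient_Bound}, a step the paper's proof takes for granted.
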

\begin{proof}
Given any set $A\subset\bbR$, by Definition \ref{def:pushforward}, we have
\begin{align*}
(b_{c, \gamma})_{\#}\nabla_\theta\eta_\theta(A) = \nabla_\theta\eta_\theta[(b_{c, \gamma})^{-1}(A)].
\end{align*}
Similarly, we have
\begin{align*}
\nabla_\theta(b_{c, \gamma})_{\#}\eta_\theta(A) = \nabla_\theta\left(\eta_\theta[(b_{c, \gamma})^{-1}(A)]\right).
\end{align*}
Hence, we have $\nabla_\theta(b_{c, \gamma})_{\#}\eta_\theta = (b_{c, \gamma})_{\#}\nabla_\theta\eta_\theta$. Also, we have
\begin{align*}
(b_{c, \gamma})_{\#}(\sum_{s}p_s\eta_\theta)(A) &= \bigg(\sum_{s}p_s\eta_\theta\bigg)[(b_{c, \gamma})^{-1}(A)] \\
&= \sum_{s}p_s\eta_\theta[(b_{c, \gamma})^{-1}(A)] = \sum_{s}p_s(b_{c, \gamma})_{\#}\eta_\theta(A),
\end{align*}
which completes the proof.
\end{proof}

We then provide the following properties of the projection operator $\Pi_\calC$:
\begin{proposition}\label{prop: projected props}
    The projected operator $\Pi_\calC$ has the following properties:
    \begin{itemize}
        \item $\nabla_{\theta}\Pi_\calC\eta_\theta = \Pi_\calC\nabla_{\theta}\eta_\theta$ for all $\eta_\theta\in\calM_N$;
        \item $\Pi_\calC(\sum_{s}p_s\eta_\theta)=\sum_{s}p_s\Pi_\calC\eta_\theta$ for all $\eta_\theta\in\calM_N$.
    \end{itemize}
\end{proposition}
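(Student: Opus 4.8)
\textbf{Proof plan for Proposition~\ref{prop: projected props}.}

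The plan is to mirror the argument just given for the pushforward operator in Proposition~\ref{prop: pushforward props}, exploiting the fact that $\Pi_\calC$ is defined to be affine. First I would recall from Definition~\ref{def:ProjOperator} that $\Pi_\calC$ acts on Dirac measures by a fixed rule and then extends affinely, i.e.\ $\Pi_\calC(\sum_{i=1}^N q_i\delta_{z_i})=\sum_{i=1}^N q_i\Pi_\calC(\delta_{z_i})$. Since any $\eta_\theta\in\calM_N$ can be written as $\eta_\theta=\sum_{i=1}^N q_i^\theta\delta_{z_i}$ with $q_i^\theta\in\bbR$, the image $\Pi_\calC\eta_\theta=\sum_{i=1}^N q_i^\theta\,\Pi_\calC(\delta_{z_i})$ is a \emph{linear} combination of the fixed measures $\Pi_\calC(\delta_{z_i})$ whose coefficients are exactly the (signed) weights $q_i^\theta$. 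This is the key structural observation, and everything else is bookkeeping.

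For the second bullet, given $\eta_\theta^{(s)}=\sum_{i=1}^N q_{s,i}^\theta\delta_{z_i}$ for each $s$, I would compute $\sum_s p_s\eta_\theta^{(s)}=\sum_{i=1}^N\big(\sum_s p_s q_{s,i}^\theta\big)\delta_{z_i}$, apply the affine-extension identity once to pull $\Pi_\calC$ through the outer sum over $i$, and then regroup: $\Pi_\calC(\sum_s p_s\eta_\theta^{(s)})=\sum_{i=1}^N\big(\sum_s p_s q_{s,i}^\theta\big)\Pi_\calC(\delta_{z_i})=\sum_s p_s\sum_{i=1}^N q_{s,i}^\theta\Pi_\calC(\delta_{z_i})=\sum_s p_s\Pi_\calC\eta_\theta^{(s)}$. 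For the first bullet, I would write $\Pi_\calC\eta_\theta=\sum_{i=1}^N q_i^\theta\,\Pi_\calC(\delta_{z_i})$ and differentiate in $\theta$; because each $\Pi_\calC(\delta_{z_i})$ is a constant measure independent of $\theta$, we get $\nabla_\theta\Pi_\calC\eta_\theta=\sum_{i=1}^N(\nabla_\theta q_i^\theta)\,\Pi_\calC(\delta_{z_i})=\Pi_\calC(\sum_{i=1}^N(\nabla_\theta q_i^\theta)\delta_{z_i})=\Pi_\calC\nabla_\theta\eta_\theta$, where the middle equality again uses the affine-extension identity (applied now to the signed measure $\nabla_\theta\eta_\theta=\sum_i(\nabla_\theta q_i^\theta)\delta_{z_i}$).

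I do not anticipate a genuine obstacle here; the only point that deserves a careful sentence is that $\Pi_\calC$, although originally introduced as a map on probability measures, is well-defined and affine on all of $\calM_N$ (and more generally $\calM(\bbR)$), so that applying it to the \emph{signed} measure $\nabla_\theta\eta_\theta$ and to linear combinations with arbitrary real coefficients $p_s$ is legitimate. One should also note that the differentiation step silently uses Assumption~\ref{assumption:PDF_Gradient_Bound} (or the standing hypothesis that $\nabla_\theta q_i^\theta$ exists), so that $\nabla_\theta\eta_\theta\in\calM_N$ and the interchange of $\nabla_\theta$ with the finite sum over the $N$ atoms is trivially valid. With these remarks in place the proof is a two-line calculation for each bullet.
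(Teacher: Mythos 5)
Your proposal is correct and follows essentially the same route as the paper's proof: expand $\eta_\theta$ as a finite atomic measure, invoke the affine extension $\Pi_\calC(\sum_i q_i\delta_{z_i})=\sum_i q_i\Pi_\calC(\delta_{z_i})$, differentiate term by term using that each $\Pi_\calC(\delta_{z_i})$ is independent of $\theta$, and regroup the finite double sum for linearity. Your added remark that $\Pi_\calC$ must be understood as an affine map on signed measures (not just probability measures) is a point the paper leaves implicit, but it does not change the argument.
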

\begin{proof}

Assume $\eta_\theta = \sum_{i=1}^{N}P_i^{\theta}\delta_{y_i}$. Since $\Pi_\calC(\sum_{i=1}^{N}P_i^{\theta}\delta_{y_i}) = \sum_{i=1}^{N}P_i^{\theta}\Pi_\calC(\delta_{y_i})$, we have
\begin{align*}
\Pi_\calC \nabla_\theta\eta_\theta =\Pi_\calC\bigg\{\nabla_\theta\bigg(\sum_{i=1}^{N}P^\theta_i\delta_{y_i}\bigg)\bigg\} = \Pi_\calC\bigg\{\sum_{i=1}^{N}\nabla_\theta P^\theta_i\delta_{y_i}\bigg\} = \sum_{i=1}^{N}\nabla_\theta P^\theta_i\Pi_\calC(\delta_{y_i})
\end{align*}
and
\begin{align*}
\nabla_\theta\Pi_\calC\eta_\theta = \nabla_\theta\Pi_\calC\bigg\{\sum_{i=1}^{N}P^\theta_i\delta_{y_i}\bigg\} = \nabla_\theta\bigg\{\sum_{i=1}^{N}P^\theta_i\Pi_\calC(\delta_{y_i})\bigg\} = \sum_{i=1}^{N}\nabla_\theta P^\theta_i\Pi_\calC(\delta_{y_i})
\end{align*}
Similarly, let $\eta_\theta = \sum_{i=1}^{N}P_i^{\theta}\delta_{y_i}$, then we have
\begin{align*}
\Pi_\calC(\sum_{s}p_s\eta_\theta)&=\Pi_\calC\bigg(\sum_s p_s\sum_{i=1}^{N}P_i^{\theta}\delta_{y_i}\bigg) = \sum_s\sum_{i=1}^{N}p_s P_i^{\theta}\Pi_\calC(\delta_{y_i}) \\
&= \sum_{s}p_s\sum_{i=1}^{N}P^{\theta}_i\Pi_\calC(\delta_{y_i}) = \sum_{s}p_s\Pi_\calC\eta_{\theta}
\end{align*}
\end{proof}

Combining Propositions \ref{prop: pushforward props} and \ref{prop: projected props}, we get the following properties of projected pushforward operator $\Pi_\calC(b_{c,\gamma})_{\#}$:
\begin{proposition}\label{prop: projected pushforward props}
    The projected pushforward operator $\Pi_\calC(b_{c,\gamma})_{\#}$ has the following properties:
    \begin{itemize}
        \item $\nabla_{\theta}\Pi_\calC(b_{c,\gamma})_{\#}\eta_{\theta}=\Pi_\calC(b_{c,\gamma})_{\#}\nabla_{\theta}\eta_{\theta}$ for all $\eta_\theta\in\calM_N$;
        \item $\Pi_\calC(b_{c,\gamma})_{\#}(\sum_{s}p_s\eta_{\theta})=\sum_{s}p_s\Pi_\calC(b_{c,\gamma})_{\#}\eta_{\theta}$ for all $\eta_{\theta}\in\calM_N$.
    \end{itemize}
\end{proposition}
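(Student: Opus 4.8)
The plan is to obtain both claims for the composite operator $\Pi_\calC(b_{c,\gamma})_{\#}$ by chaining the analogous properties already established for $(b_{c,\gamma})_{\#}$ in Proposition~\ref{prop: pushforward props} and for $\Pi_\calC$ in Proposition~\ref{prop: projected props}. Before doing so I would record that the composition is well-defined on $\calM_N$: writing $\eta_\theta=\sum_{i=1}^N P_i^\theta\delta_{y_i}$, the pushforward $(b_{c,\gamma})_{\#}\eta_\theta=\sum_{i=1}^N P_i^\theta\delta_{c+\gamma y_i}$ is again a finite atomic signed measure in $\calM(\bbR)$, so $\Pi_\calC$, which is defined on all of $\calM(\bbR)$ via its affine extension from Diracs, applies to it. Moreover, the arguments in the proof of Proposition~\ref{prop: projected props} use only the atomic structure of the measure, not that its atoms lie on the fixed grid $\{z_1,\dots,z_N\}$, so they remain valid with $(b_{c,\gamma})_{\#}\eta_\theta$ in place of $\eta_\theta$.

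For the first bullet I would write
\[
\nabla_\theta\,\Pi_\calC(b_{c,\gamma})_{\#}\eta_\theta=\Pi_\calC\,\nabla_\theta(b_{c,\gamma})_{\#}\eta_\theta=\Pi_\calC(b_{c,\gamma})_{\#}\nabla_\theta\eta_\theta,
\]
where the first equality applies the commutation of $\Pi_\calC$ with $\nabla_\theta$ (Proposition~\ref{prop: projected props}) to the atomic measure $(b_{c,\gamma})_{\#}\eta_\theta$, and the second applies the commutation of $(b_{c,\gamma})_{\#}$ with $\nabla_\theta$ (Proposition~\ref{prop: pushforward props}). For the second bullet, given scalars $p_s$, I would write
\[
\Pi_\calC(b_{c,\gamma})_{\#}\Bigl(\sum_s p_s\eta_\theta\Bigr)=\Pi_\calC\Bigl(\sum_s p_s(b_{c,\gamma})_{\#}\eta_\theta\Bigr)=\sum_s p_s\,\Pi_\calC(b_{c,\gamma})_{\#}\eta_\theta,
\]
using the additivity and homogeneity of $(b_{c,\gamma})_{\#}$ first and then those of $\Pi_\calC$, both from the cited propositions.

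I do not expect any genuine obstacle here: the statement is a bookkeeping corollary of the two preceding propositions. The only point needing a line of care is the domain check noted above, namely verifying that applying $(b_{c,\gamma})_{\#}$ first keeps us within the class of finite atomic measures on which the projection lemma was proved, which is immediate.
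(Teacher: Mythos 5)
Your proof is correct and takes essentially the same route as the paper, which simply states that the proposition follows by combining Propositions~\ref{prop: pushforward props} and~\ref{prop: projected props}; your chaining of the two commutation/linearity properties is exactly that argument. Your extra remark that $(b_{c,\gamma})_{\#}\eta_\theta$ remains a finite atomic measure (so the projection proposition still applies even though its atoms leave the fixed grid) is a small but worthwhile point of care that the paper leaves implicit.
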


\newpage
\section{Omitted Proofs}
\label{appendix:proofs}
In this appendix, we present all the omitted proofs.
\subsection{Proofs in Section~\ref{section:Preliminaries}}
\label{appendix:proof-sec2}
\begin{assumption}[The General Form of Risk Envelopes]
\label{assumption:RiskEnvelop}
For any given policy parameter $\theta\in\Theta$, the risk envelope $\calU$ of a coherent risk measure can be written as
\begin{align*}
\calU = \bigg\{\xi\succeq 0:\ g_e(\xi, f_{Z_\theta}) = 0,\ \forall e \in \calE,\ h_i(\xi, f_{Z_\theta}) \leq 0,\ \forall i \in \calI,\ \int_{\omega \in \Omega}\xi(\omega)f_{Z_{\theta}}(\omega)d\omega = 1\bigg\}
\end{align*}
where each constraint $g_e(\xi, f_{Z_\theta})$ is an affine function in $\xi$, each constraint $h_i(\xi, f_{Z_\theta})$ is a convex function in $\xi$, and there exists a strictly feasible point $\bar{\xi}$. $\calE$ and $\calI$ here denote the sets of equality and inequality constraints, respectively. Furthermore, for any given $\xi \in \calB$, $h_i(\xi, f_{Z_\theta})$ and $g_e(\xi, f_{Z_\theta})$ are twice differentiable in $f_{Z_\theta}$, and there exists a $M > 0$ such that for all $\omega \in \Omega$, we have
$$\max\bigg\{\max_{i \in \calI}\bigg|\frac{\partial h_i(\xi, f_{Z_\theta})}{\partial f_{Z_{\theta}}(\omega)}\bigg|, \max_{e \in \calE}\bigg|\frac{\partial g_e(\xi, f_{Z_\theta})}{\partial f_{Z_{\theta}}(\omega)}\bigg|\bigg\} \leq M.$$
\end{assumption}

\begin{theorem}[Differentiation in Measure Theory~\citep{folland1999real}]
\label{thm: diff in measure theory}
Let $\Theta$ be an open subset of $\bbR$, and $\Omega$ be a measure space. Suppose $f: \Theta \times \Omega \rightarrow \bbR$ satisfies the following conditions:
\begin{itemize}
    \item[(i)] $f(\theta, \omega)$ is a Lebesgue-integrable function of $\omega$ for each $\theta \in \Theta$.
    \item[(ii)] For almost all $\omega \in \Omega$, the derivative $\frac{\partial}{\partial\theta}f(\theta, \omega)$ exists for all $\theta \in \Theta$.
    \item[(iii)] There is an integrable function $\Gamma: \Omega \rightarrow \bbR$ such that $|\frac{\partial}{\partial\theta}f(\theta, \omega)|\leq \Gamma(\omega)$ for all $\theta \in \Theta$.
\end{itemize}
Then for all $\theta \in \Theta$, $\frac{d}{d\theta}\int_{\Omega}f(\theta, \omega)d\omega = \int_\Omega\frac{\partial}{\partial\theta}f(\theta, \omega)d\omega$.
\end{theorem}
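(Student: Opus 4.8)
The plan is to reduce the claim to the Dominated Convergence Theorem by way of the Mean Value Theorem. Fix an arbitrary $\theta_0 \in \Theta$ and set $F(\theta) := \int_{\Omega} f(\theta, \omega)\, d\omega$, which is well-defined by hypothesis (i). By the sequential characterization of derivatives, it suffices to show that for every sequence $h_n \to 0$ with $h_n \neq 0$ and $\theta_0 + h_n \in \Theta$ (such sequences exist since $\Theta$ is open), the difference quotient $\bigl(F(\theta_0 + h_n) - F(\theta_0)\bigr)/h_n$ converges to $\int_{\Omega} \frac{\partial}{\partial\theta} f(\theta_0, \omega)\, d\omega$.

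First I would move the difference quotient inside the integral: by linearity of the integral, $\bigl(F(\theta_0 + h_n) - F(\theta_0)\bigr)/h_n = \int_{\Omega} g_n(\omega)\, d\omega$, where $g_n(\omega) := \bigl(f(\theta_0 + h_n, \omega) - f(\theta_0, \omega)\bigr)/h_n$. By hypothesis (ii), for almost every $\omega$ the function $\theta \mapsto f(\theta, \omega)$ is differentiable on $\Theta$, so $g_n(\omega) \to \frac{\partial}{\partial\theta} f(\theta_0, \omega)$ pointwise almost everywhere as $n \to \infty$.

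The crux is producing an integrable function that dominates all the $g_n$ simultaneously. For this I would apply the Mean Value Theorem pointwise in $\omega$: for a.e.\ $\omega$ and every $n$ large enough that the closed segment joining $\theta_0$ and $\theta_0 + h_n$ lies in $\Theta$, there exists a point $\theta_n(\omega)$ on that segment with $g_n(\omega) = \frac{\partial}{\partial\theta} f(\theta_n(\omega), \omega)$; hypothesis (iii) then gives $|g_n(\omega)| \leq \Gamma(\omega)$ with $\Gamma$ integrable and independent of $n$.

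Having pointwise a.e.\ convergence together with a uniform integrable bound, the Dominated Convergence Theorem yields $\lim_{n\to\infty}\int_{\Omega} g_n\, d\omega = \int_{\Omega} \frac{\partial}{\partial\theta} f(\theta_0, \omega)\, d\omega$. Since $\theta_0$ and the sequence $(h_n)$ were arbitrary, this shows $F$ is differentiable at every point of $\Theta$ with $F'(\theta) = \int_{\Omega} \frac{\partial}{\partial\theta} f(\theta, \omega)\, d\omega$, as claimed. I expect the only genuine obstacle to be the step passing from the pointwise derivative bound (iii) to a bound on the discrete quotients $g_n$; this is precisely where the Mean Value Theorem and the openness of $\Theta$ (which keeps the connecting segment inside the domain) are essential, while the rest of the argument is routine bookkeeping.
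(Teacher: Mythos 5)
Your proof is correct. The paper does not prove this statement at all --- it is quoted verbatim as a known result from Folland (1999, Theorem 2.27) --- and your argument (difference quotients, the Mean Value Theorem to transfer the pointwise bound (iii) to the quotients, then the Dominated Convergence Theorem, with the correct caveat about taking $n$ large enough that the connecting segment stays inside the open set $\Theta$) is exactly the standard textbook proof of that result.
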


\begin{reptheorem}{theorem:RiskMeasureGrad}
Let Assumptions~\ref{assumption:RiskEnvelop} hold. For any saddle point $(\xi_{\theta}^\ast, \lambda_{\theta}^{\ast, f}, \lambda_{\theta}^{\ast, \calE}, \lambda_{\theta}^{\ast, \calI})$ of the Lagrangian function of~\eqref{eq:Coherent_Risk_Measure_Dual_Representation}, we have
\begin{align*}
\nabla_{\theta}\rho(Z_{\theta}) &= \bbE_{\xi_{\theta}^{\ast}}\big[\nabla_{\theta}\log f_{Z_{\theta}}(\omega)(Z - \lambda_{\theta}^{\ast, f})\big] - \sum_{e \in \calE}\lambda_{\theta}^{\ast, \calE}(e)\nabla_{\theta}g_e(\xi_{\theta}^{*}; f_{Z_{\theta}}) - \sum_{i \in \calI}\lambda_{\theta}^{\ast, \calE}(i)\nabla_{\theta}f_i(\xi_{\theta}^{*}; f_{Z_{\theta}})
\end{align*}
\end{reptheorem}
\begin{proof}
For continuous random variable $Z_\theta$, the Lagrangian function of problem \eqref{eq:Coherent_Risk_Measure_Dual_Representation} can be written as
\begin{align*}
\calL_\theta(\xi, \lambda^{f}, \lambda^\calE, \lambda^\calI) &= \int_{\Omega} \xi(\omega) f_{Z_{\theta}}(\omega) Z_{\theta}(\omega)d\omega - \lambda^{\calP} \left( \int_{\Omega} \xi(\omega) f_{Z_{\theta}}(\omega)d\omega - 1 \right) \nonumber\\
&\quad- \sum_{e \in \calE} \lambda^\calE(e) g_e(\xi, f_{Z_\theta}) - \sum_{i \in \calI} \lambda^\calI(i) h_i(\xi, f_{Z_\theta})
\end{align*}
which is concave in $\xi$ and convex in $(\lambda^{f}, \lambda^\calE, \lambda^\calI)$. By Assumption \ref{assumption:RiskEnvelop} and Theorem 1 in Section 8.6, Page 224 in \cite{luenberger1997optimization}, strong duality holds, i.e., $\rho(Z_{\theta})=\max_{\xi\ge 0}\min_{\lambda^{f}, \lambda^\calE, \lambda^\calI\ge 0}\calL_\theta(\xi, \lambda^{\calP}, \lambda^\calE, \lambda^\calI) = \min_{\lambda^{f}, \lambda^\calE, \lambda^\calI\ge 0}\max_{\xi\ge 0}\calL_\theta(\xi, \lambda^{f}, \lambda^\calE, \lambda^\calI)$.
By Assumption \ref{assumption:PDF_Gradient_Bound}, for almost all $\omega \in \Omega$, the gradient of the probability density function $\frac{\partial}{\partial\theta} f_{Z_{\theta}}(\omega)$ exists and is bounded by a constant for all $\theta \in \Theta$. Since $\Omega$ is a compact set with finite Lebesgue measure, $\frac{\partial}{\partial\theta} f_{Z_{\theta}}(\omega)$ is also bounded by an integrable function. Then by Theorem \ref{thm: diff in measure theory}, it is guaranteed that $\nabla_\theta\int_{\Omega}f_{Z_{\theta}}(\omega)d\omega = \int_{\Omega}\frac{\partial}{\partial\theta} f_{Z_{\theta}}(\omega)d\omega$. Hence, by taking derivative with respect to $\theta$ on the both sides of the Lagrangian function at any saddle point $(\xi^*_\theta, \lambda^{*, f}_\theta, \lambda^{*, \calE}_\theta, \lambda^{*, \calI}_\theta)
$, we have
\begin{align*}
\nabla_\theta\calL_\theta(\xi, \lambda^{f}, \lambda^\calE, \lambda^\calI)\bigg|_{(\xi^*_\theta, \lambda^{*, f}_\theta, \lambda^{*, \calE}_\theta, \lambda^{*, \calI}_\theta)} = &\int_{\Omega}\xi_\theta^*(\omega)\frac{\partial}{\partial\theta} f_{Z_{\theta}}(\omega)\big(Z_{\theta}(\omega)-\lambda_\theta^{*, \calP}\big)d\omega \\
&-\sum_{e \in \calE} \lambda^{*, \calE}_\theta(e) \nabla_\theta g_e(\xi_\theta^*, f_{Z_\theta}) - \sum_{i \in \calI} \lambda^{*, \calI}_\theta(i) \nabla_\theta h_i(\xi_\theta^*, f_{Z_\theta})
\end{align*}
The rest follows the same procedure in the proof of Theorem 4.2 in~\cite{tamar2015policy}.
\end{proof}

\begin{replemma}{lemma:DistBellmanEq}
For each state $s \in \calS$ and action $a\in\calA$, let $\eta_{\theta}^s$ and $\eta_{\theta}^{(s,a)}$ be the probability measures associated with the random variables $Z_{\theta}^s$ and $Z_{\theta}^{(s,a)}$. Then
    \begin{align*}
    \eta_{\theta}^{(s,a)} &= \sum_{s^{\prime}\in\calS}P(s'|s, a) \sum_{a'\in\calA}\pi_\theta(a'|s')(b_{C(s, a), \gamma})_{\#}\eta_{\theta}^{(s^\prime, a^\prime)}\\
    &=\sum_{s^{\prime}\in\calS}P(s'|s, a) (b_{C(s, a),\gamma})_{\#}\eta_{\theta}^{s^{\prime}}.
    \end{align*}
\end{replemma}
\begin{proof}
Given a deterministic cost function $C(s,a)$, we have
\begin{align*}
\eta_\theta^{(s, a)} &\overset{(i)}{=} (\calT^\pi\eta_\theta)^{(s, a)} \\
&\overset{(ii)}{=} \sum_{s' \in \calS}P(s'|s, a)\sum_{a' \in \calA}\pi_\theta(a'|s') (b_{C(s, a), \gamma})_{\#}\eta^{(s', a')}_\theta \\
&\overset{(iii)}{=} \sum_{s' \in \calS}P(s'|s, a)(b_{C(s, a), \gamma})_{\#}\eta^{s'}_\theta
\end{align*}
where $(i)$ is the distributional Bellman equation from \citet{rowland2018analysis}, $(ii)$ is based on the definition of the distributional Bellman operator, and $(iii)$ uses $\eta_{\theta}^s = \sum_{a\in\mathcal{A}}\pi_{\theta}(a|s)\eta_{\theta}^{(s,a)}$ and Proposition \ref{prop: pushforward props}.
\end{proof}

\subsection{Proofs in Section~\ref{section:DPG}}
\label{appendix:proof-sec3}

\begin{reptheorem}{theorem:DistPolicyGrad}
Let $\eta_\theta^{(s, a)} \in \calP(\bbR)$ denote the fixed point of $\calT^{\pi_{\theta}}$ in Proposition \ref{prop:DistBellmanOperatorContractionMapping} for any $s \in \calS$ and $a \in \calA$. Let $\tau_{\theta}$ be a trajectory that starts at $s_0$ under $\pi_{\theta}$ and $|\tau_{\theta}|$ be the maximum step of it. For some $1\le t\le |\tau_{\theta}|$, let $\tau_{\theta}(s_0, s_t) := (s_0, a_0, c_0, \dots, s_{t-1}, a_{t-1}, c_{t-1}, s_t)$ be a t-step sub-trajectory of $\tau_{\theta}$ truncated at $s_t$. Then
\begin{align*}
\nabla_\theta \eta_\theta^{s} = \bbE_{\tau_{\theta}}\bigg[g(s_0) + \sum_{t=1}^{|\tau_{\theta}|}\calB^{\tau_{\theta}(s_0, s_t)}g(s_t)\bigg]
\end{align*}
where $g(s):= \sum_{a\in\calA}\nabla_\theta\pi_{\theta}(a|s) \eta_\theta^{(s, a)}$ and $\calB^{\tau_{\theta}(s_0, s_t)}$ is the $t$-step pushforward operator, defined as
$\calB^{\tau_{\theta}(s_0, s_t)} := (b_{{c_0}, \gamma})_{\#}\dots(b_{c_{t-1}, \gamma})_{\#} = (b_{c_{t-1}+\gamma c_{t-2} + \dots + \gamma^{t-1}c_0, \gamma^t})_{\#}$.
\end{reptheorem}
\begin{proof}
Denote $g(s) = \sum_{a}\nabla_\theta\pi(a|s)\cdot\eta_\theta^{(s, a)}$ for notation simplicity, then we have
\begin{align*}
\nabla_\theta \eta_\theta^{s_0} &\overset{(i)}{=} \nabla_\theta \bigg[\sum_{a_0} \pi(a_0|s_0)\cdot\eta_\pi^{(s_0,a_0)}\bigg] = \sum_{a_0} \bigg[\nabla_\theta \pi(a_0|s_0)\cdot \eta_\theta^{(s_0,a_0)} + \pi(a_0|s_0)\cdot\nabla_\theta \eta_\theta^{(s_0,a_0)}\bigg] \\
&\overset{(ii)}{=} \sum_{a_0} \bigg[\nabla_\theta\pi(a_0|s_0)\cdot\eta_\theta^{(s_0, a_0)} + \pi(a_0|s_0)\cdot\nabla_\theta\bigg(\sum_{s_1}P(s_1|s_0,a_0)(b_{C(s_0,a_0),\gamma})_{\#}\eta_\theta^{s_1}\bigg)\bigg] \\ 
&\overset{(iii)}{=} g(s_0) + \sum_{a_0}\pi(a_0|s_0)\sum_{s_1}P(s_1|s_0,a_0)(b_{C(s_0,a_0),\gamma})_{\#}\nabla_\theta \eta_\theta^{s_1} \\
&\overset{(iv)}{=} g(s_0) + \sum_{a_0}\pi(a_0|s_0)\sum_{s_1}P(s_1|s_0,a_0)(b_{C(s_0,a_0),\gamma})_{\#}g(s_1) \\
&\quad+\sum_{a_0}\pi(a_0|s_0)\sum_{s_1}P(s_1|s_0,a_0)\sum_{a_1}\pi(a_1|s_1)\sum_{s_2}P(s_2|s_1,a_1)\bigg[(b_{C(s_0,a_0),\gamma})_{\#}(b_{C(s_1,a_1),\gamma})_{\#}\bigg]g(s_2) \\
&\quad+ \dots\dots \\
&\overset{(v)}{=} \bbE_{\tau_{\theta}}\bigg[g(s_0) + \sum_{t=1}^{|\tau_{\theta}|}\calB^{\tau_{\theta}(s_0, s_t)}g(s_t)\bigg], 
\end{align*}
where $(i)$ follows because $\eta_\theta^{s_0}$ is a mixture of probabilities, $(ii)$ utilizes the distributional Bellman equation (Lemma~\ref{lemma:DistBellmanEq}), $(iii)$ holds because of Proposition \ref{prop: pushforward props}, and $(iv)$ results from an iterative expansion of $\nabla_\theta\eta_\theta^s$ with Proposition~\ref{prop: pushforward props} and $(v)$ holds because each trajectory $\tau_{\theta}=(s_0,a_0,c_0,s_1,a_1,c_1,\ldots,s_t)$ has a probability of $\pi(a_0|s_0)P(s_1|s_0,a_0)\pi(a_1|s_1)P(s_2|s_1,a_1)\cdots P(s_t|s_{t-1},a_{t-1})$. Furthermore, for any two pushforward operators and any measure $\nu\in\calM(\bbR)$, we have
\begin{align*}
   (b_{c_0,\gamma})_{\#}(b_{c_1,\gamma})_{\#}\nu(A)& = (b_{c_0,\gamma})_{\#}\nu(b_{c_1,\gamma}^{-1}(A))=\nu(b_{c_0,\gamma}^{-1}(b_{c_1,\gamma}^{-1}(A)))\\
    =&\nu((b_{c_1,\gamma}b_{c_0,\gamma})^{-1}(A))=(b_{c_1,\gamma}b_{c_0,\gamma})_\#\nu(A)=(b_{c_1+\gamma c_0,\gamma^2})_\#\nu(A), \ \forall A\subset\bbR
\end{align*}
Thus, $(b_{c_0,\gamma})_{\#}(b_{c_1,\gamma})_{\#}=(b_{c_1+\gamma c_0,\gamma^2})_\#$, and the multi-step pushforward operator can be combined as $\calB^{\tau_{\theta}(s_0, s_t)} = (b_{{c_0}, \gamma})_{\#}\dots(b_{c_{t-1}, \gamma})_{\#} = (b_{c_{t-1}+\gamma c_{t-2} + \dots + \gamma^{t-1}c_0, \gamma^t})_{\#}$.
\end{proof}

\begin{repcorollary}{corollary:MeasureToPDF}
Suppose $\nabla_\theta \eta_\theta^s$ is well-defined, and both 
$\frac{\partial}{\partial x}\frac{\partial}{\partial \theta}F_{Z_{\theta}^s}(x)$
and $\frac{\partial}{\partial \theta} f_{Z_{\theta}^s}(x)$ 
are continuous. Then,
\begin{align*}
   \frac{\partial}{\partial \theta} f_{Z_{\theta}^s}(x) = 
   \frac{\partial}{\partial x}\nabla_\theta \eta_\theta^s\bigl((-\infty, x]\bigr).
\end{align*}
\end{repcorollary}
\begin{proof}
We first show that $\nabla_{\theta}\eta_{\theta}^s=\lim_{\theta_1\to\theta_2}\frac{\eta_{\theta_1}^s-\eta_{\theta_2}^s}{\theta_1-\theta_2}$ is a signed measure, if it exists. First of all,
\begin{align*}
\nabla_{\theta}\eta_{\theta}^s(\emptyset)=\lim_{\theta_1\to\theta_2}\frac{\eta_{\theta_1}^s-\eta_{\theta_2}^s}{\theta_1-\theta_2}(\emptyset)=\lim_{\theta_1\to\theta_2}\frac{\eta_{\theta_1}^s(\emptyset)-\eta_{\theta_2}^s(\emptyset)}{\theta_1-\theta_2}=0
\end{align*}
Next, we show that it is $\sigma$-additive:
\begin{align*}
\nabla_{\theta}\eta_{\theta}^s(\cup_{n=1}^{\infty}A_n)=&\lim_{\theta_1\to\theta_2}\frac{\eta_{\theta_1}^s-\eta_{\theta_2}^s}{\theta_1-\theta_2}(\cup_{n=1}^{\infty}A_n)=\lim_{\theta_1\to\theta_2}\frac{\eta_{\theta_1}^s(\cup_{n=1}^{\infty}A_n)-\eta_{\theta_2}^s(\cup_{n=1}^{\infty}A_n)}{\theta_1-\theta_2}\\
\overset{(i)}{=}&\lim_{\theta_1\to\theta_2}\sum_{n=1}^{\infty}\frac{\eta_{\theta_1}^s(A_n)-\eta_{\theta_2}^s(A_n)}{\theta_1-\theta_2}\overset{(ii)}{=}\sum_{n=1}^{\infty}\lim_{\theta_1\to\theta_2}\frac{\eta_{\theta_1}^s(A_n)-\eta_{\theta_2}^s(A_n)}{\theta_1-\theta_2}\\
=& \sum_{n=1}^{\infty}\nabla_{\theta}\eta_{\theta}^s(A_n)
\end{align*}
where $(i)$ is due to the $\sigma$-additivity of $\eta_{\theta_1}^s$ and $\eta_{\theta_2}^s$ and $(ii)$ is because $\frac{\eta_{\theta_1}^s(A_n)-\eta_{\theta_2}^s(A_n)}{\theta_1-\theta_2}$ is bounded. As a result,  $\nabla_{\theta}\eta_{\theta}^s$ is a measure (because it satisfies two measure properties) and a signed measure (it can take values from the real line instead of $[0,1]$). Furthermore, since $\eta_\theta^s(\Omega)=1$, we have $\nabla_\theta \eta_\theta^s(\Omega)=0$, i.e., $\nabla_\theta \eta_\theta^s$ has a total mass of 0.
From the definition of probability measure $\eta_{\theta}^s$, we have $\eta_{\theta}^s((-\infty, x])=\bbP\{\omega\in\Omega: Z_{\theta}^s(\omega)\in (-\infty,x]\}=F_{Z_\theta^s}(x)$. Taking derivative with respect to $\theta$ on both sides, we have
\begin{align}
    \nabla_\theta\eta_{\theta}^s((-\infty, x])=\frac{\partial}{\partial \theta}F_{Z_{\theta}}(x)
    \label{eq:partial-theta}
\end{align}
Now taking the derivative with respect to $x$ again, we have
\begin{align*}
    \frac{\partial}{\partial x}\nabla_\theta\eta_{\theta}^s((-\infty, x])=\frac{\partial}{\partial x}\frac{\partial}{\partial \theta}F_{Z_{\theta}^s}(x)
\end{align*}
Since $\frac{\partial}{\partial x}\frac{\partial}{\partial \theta}F_{Z_{\theta}^s}(x)$ and $\frac{\partial}{\partial \theta}f_{Z_{\theta}^s}(x)$ are continuous, we can switch the order of partial derivatives and get
\begin{align*}
\frac{\partial}{\partial x}\nabla_\theta\eta_{\theta}^s((-\infty, x])=\frac{\partial}{\partial x}\frac{\partial}{\partial \theta}F_{Z_{\theta}^s}(x)=\frac{\partial}{\partial \theta}\frac{\partial}{\partial x}F_{Z_{\theta}^s}(x)=\frac{\partial}{\partial \theta}f_{Z_{\theta}^s}(x).
\end{align*}
This completes the proof.
\end{proof}

\subsection{Proofs in Section~\ref{section:CDPG}}
\label{appendix:proof-sec4}

\begin{replemma}{lemma:ProjDistBellmanEquation}
Let $\eta_{N, \infty} \in \calP_N^{\calS \times \calA}$ be the fixed point of $\Pi_{\calC}\calT^{\pi}$. Then, for any $s \in \calS$ and $a \in \calA$,
\begin{align*}
    \eta_{N, \infty}^{(s,a)} = \sum_{s'} P(s'|s, a)\Pi_\calC \bigl(b_{C(s,a),\gamma}\bigr)_\# \eta_{N, \infty}^{s'}.
\end{align*}
\end{replemma}
\begin{proof}
We have
    \begin{align*}
    \eta_{N, \infty}^{(s,a)}&\overset{(i)}{=} \Pi_\calC(\sum_{s^{\prime}\in\calS}P(s'|s, a) \sum_{a'\in\calA}\pi_\theta(a'|s')(b_{C(s, a), \gamma})_{\#}\eta_{N, \infty}^{(s^\prime, a^\prime)})\\
    &\overset{(ii)}{=} \sum_{s'\in\calS}P(s'|s, a)\Pi_\calC(\sum_{a'\in\calA}\pi_\theta(a'|s')(b_{C(s, a), \gamma})_{\#}\eta_{N, \infty}^{(s' ,a')}) \\
    &\overset{(iii)}{=} \sum_{s'\in\calS}P(s'|s, a)\Pi_\calC((b_{C(s, a), \gamma})_{\#}\eta_{N, \infty}^{s'})
    \end{align*}
    where $(i)$ is because $\eta_{N, \infty}$ is the fixed point of $\Pi_\calC\calT^{\pi}$; $(ii)$ holds due to Proposition \ref{prop: projected props}; and $(iii)$ follows from Proposition~\ref{prop: pushforward props} and $\sum_{a' \in \calA}\pi_\theta(a'|s')\eta_{N,\infty}^{(s', a')} = \eta_{N,\infty}^{s'}$.
\end{proof}

\begin{replemma}{lemma:OptimGap}
For any $\epsilon_{opt} > 0$, we have
$|\min\limits_{\theta}\rho(Z^s) - \min\limits_{\theta}\rho(Z_{N}^s)| \leq \epsilon_{opt}$,
whenever 
\begin{align*}
N \geq \frac{L_1^2 (z_{\max}-z_{\min})^2}{(1-\gamma) \epsilon_{opt}^2}.
\end{align*}
\end{replemma}
\begin{proof}
Let $\eta^{s}$ and $\eta_{N, \infty}^s$ be the limiting distribution of $Z^s$ and $Z_{N}^s$, respectively. By Lemmas~\ref{lemma:CramerDistanceLemma} and \ref{lemma:MixtureCramerDist}, we have
\begin{align*}
\bar{l}_2^2(\eta^s, \eta_{N, \infty}^s) \leq \frac{1}{1-\gamma}\frac{z_N-z_1}{N-1}
\end{align*}
where $l_2$ is the Cramer distance, defined as
\begin{align*}
l_2^2(\eta_{N, \infty}^s, \eta^s) = \int_{z_1}^{z_N}[F_{N, \infty}^s(x) - F^s(x)]^2dx
\end{align*}
By Lemma~\ref{lemma:CauchySchwarz} (Cauchy Schwarz Inequality), we have
\begin{align*}
\|F_{N, \infty}^s - F^s\|_1^2=\bigg(\int_{z_1}^{z_N}|F_{N, \infty}^s(x) - F^s(x)|dx\bigg)^2\le (z_N - z_1)\int_{z_1}^{z_N}|F_{N, \infty}^s(x) - F^s(x)|^2dx\le \frac{(z_N-z_1)^2}{(1-\gamma)(N-1)}
\end{align*}
By Assumption~\ref{assumption:L1_Lipschitz_Risk}, we have
\begin{align*}
[\rho(Z_N^s) - \rho(Z^s)]^2 \leq L_1^2\|F_{N, \infty}^s - F^s\|_1^2
\end{align*}
Hence, we have
\begin{align*}
[\rho(Z_{\theta, N}^s) - \rho(Z_{\theta}^s)]^2 \leq \frac{1}{1-\gamma}\frac{L_1^2(z_N-z_1)^2}{N-1} 
\end{align*}
If we set $N \ge \frac{1}{1-\gamma}\frac{L_1^2(z_N-z_1)^2}{\epsilon_{\text{opt}}^2}+1 = \mathcal{O}(\epsilon_{\text{opt}}^{-2})$, then $|\rho(Z_{N}^s) - \rho(Z^s)|\le \epsilon_{\text{opt}}$ for all $\theta\in\Theta$. Denote $\theta^*=\argmin_{\theta}\rho(Z^s)$ and $\theta_N^*=\argmin_{\theta}\rho(Z_N^s)$. From the optimality of $\theta^*$ and $\theta^*_N$, we have
\begin{align*}
|\min_{\theta}\rho(Z^s) - \min_{\theta}\rho(Z_N^s)| & = |\rho(Z_{\theta^*}^s)-\rho(Z_{\theta^*_N, N}^s)|\\
&\leq \max\big\{\rho(Z_{\theta^*}^s) - \rho(Z_{\theta^*_N, N}^s), \rho(Z_{\theta^*_N, N}^s) - \rho(Z_{\theta^*}^s)\big\} \\
&\leq \max\big\{\rho(Z_{\theta_N^\ast}^s) - \rho(Z_{{\theta_N^\ast}, N}^s), \rho(Z_{\theta^*, N}^s) - \rho(Z_{\theta^*}^s)\big\} \\
&\leq \epsilon_{\text{opt}},
\end{align*}
which completes the proof.
\end{proof}

\begin{reptheorem}{theorem:CatPolicyGrad}
Let $\eta_{N, \infty}^{(s, a)} \in \calP_N$ denote the fixed point of $\Pi_{\calC}\calT^{\pi}$ for any $s \in \calS$ and $a \in \calA$. Consider a trajectory $\tau_{\theta}$ starting from $s_0$ under policy $\pi_\theta$ and let $|\tau_{\theta}|$ be the maximum step of it. For some $1\le t \le |\tau_{\theta}|$, let $\tau_{\theta}(s_0,s_t)$ be the $t$-step sub-trajectory truncated at $s_t$. Then
\begin{align*}
\nabla_\theta \eta_{N, \infty}^{s_0} 
= \mathbb{E}_{\tau_{\theta}}\bigg[g_{N,\infty}(s_0) + \sum_{t=1}^{|\tau_{\theta}|}\tilde{\mathcal{B}}^{\tau_{\theta}(s_0,s_t)}g_{N,\infty}(s_t)\bigg],
\end{align*}
where $g_{N,\infty}(s):=\sum_{a \in \calA}\nabla_\theta\pi_\theta(a|s)\eta_{N, \infty}^{(s,a)}$, and $\tilde{\mathcal{B}}^{\tau_{\theta}(s_0,s_t)}$ is the $t$-step projected pushforward operator defined by $\tilde{\calB}^{\tau_{\theta}(s_0, s_t)} = \Pi_\calC(b_{c_{0}, \gamma})_{\#}\Pi_\calC(b_{c_{1}, \gamma})_{\#}\dots\Pi_\calC(b_{c_{t-1}, \gamma})_{\#}$.
\end{reptheorem}
\begin{proof}
Denote $g_{N,\infty}(s) = \sum_{a}\nabla_\theta\pi_\theta(a|s)\cdot\eta_{N,\infty}^{(s, a)}$ for notation simplicity, then we have
\begin{align*}
&\nabla_\theta \eta_{N,\infty}^{s_0} = \nabla_\theta \bigg[\sum_{a_0} \pi_\theta(a_0|s_0)\cdot\eta_{N,\infty}^{(s_0,a_0)}\bigg] = \sum_{a_0} \bigg[\nabla_\theta \pi_\theta(a_0|s_0)\cdot\eta_{N,\infty}^{(s_0,a_0)} + \pi_\theta(a_0|s_0)\cdot\nabla_\theta\eta_{N,\infty}^{(s_0,a_0)}\bigg] \\
&\overset{(i)}{=} \sum_{a_0} \bigg[\nabla_\theta\pi_\theta(a_0|s_0)\cdot\eta_{N,\infty}^{(s_0, a_0)} + \pi_\theta(a_0|s_0)\cdot\nabla_\theta\bigg(\sum_{s_1}P(s_1|s_0, a_0)\Pi_\calC(b_{C(s_0, a_0),\gamma})_{\#}\eta_{N,\infty}^{s_1}\bigg)\bigg] \\ 
&\overset{(ii)}{=} g_{N,\infty}(s_0) + \sum_{a_0}\pi_\theta(a_0|s_0)\sum_{s_1}P(s_1|s_0, a_0)\Pi_\calC(b_{C(s_0, a_0),\gamma})_{\#}\nabla_\theta\eta_{N,\infty}^{s_1} \\
&\overset{(iii)}{=} g_{N,\infty}(s_0) + \sum_{a_0}\pi_\theta(a_0|s_0)\sum_{s_1}P(s_1|s_0, a_0)\Pi_\calC(b_{C(s_0, a_0),\gamma})_{\#}g_{N,\infty}(s_1) \\
&\quad+\sum_{a_0}\pi_\theta(a_0|s_0)\sum_{s_1}P(s_1|s_0, a_0)\sum_{a_1}\pi_\theta(a_1|s_1)\sum_{s_2}P(s_2|s_1, a_1)\Pi_\calC(b_{C(s_0, a_0),\gamma})_{\#}\Pi_\calC(b_{C(s_1, a_1),\gamma})_{\#}g_{N,\infty}(s_2) \\
&\quad+ \dots\dots \\
&\overset{(iv)}{=} \mathbb{E}_{\tau_{\theta}}\bigg[g_{N,\infty}(s_0) + \sum_{t=1}^{|\tau_{\theta}|}\tilde{\mathcal{B}}^{\tau_{\theta}(s_0,s_t)}g_{N,\infty}(s_t)\bigg],
\end{align*}
where $(i)$ is due to the projected distributional Bellman equation (Lemma \ref{lemma:ProjDistBellmanEquation}); $(ii)$ is due to Proposition \ref{prop: projected pushforward props}; $(iii)$ results from an iterative expansion of $\nabla_\theta\eta_{N,\infty}^{s_1}$ with Proposition~\ref{prop: projected pushforward props} and $(iv)$ holds because each trajectory $\tau_{\theta}=(s_0,a_0,c_0,s_1,a_1,c_1,\ldots,s_t)$ has a probability of $\pi(a_0|s_0)P(s_1|s_0,a_0)\pi(a_1|s_1)P(s_2|s_1,a_1)\cdots P(s_t|s_{t-1},a_{t-1})$.
\end{proof}

\begin{lemma}[Proposition 3, \citet{rowland2018analysis}]
\label{lemma:CramerDistanceLemma}
Let $\eta$ and $\eta_{N, \infty}$ be the limiting return distribution of $\calT^\pi$ and $\Pi_{\calC}\calT^\pi$, respectively. If $\eta^{(s, a)}$ is supported on $[z_1, z_N]$ for all $(s, a) \in \calS \times \calA$, then we have
\begin{align*}
l_2^2(\eta_{N,\infty}^{(s,a)}, \eta^{(s,a)}) \leq \frac{1}{1-\gamma}\frac{z_N-z_1}{N-1},\ \forall (s, a) \in \calS \times \calA
\end{align*}
\end{lemma}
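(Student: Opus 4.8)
The plan is to turn the fixed-point equations into a self-bounding inequality for $\bar l_2^2(\eta,\eta_{N,\infty})$: I would use the orthogonality of the Cram\'er projection $\Pi_\calC$ to split the error \emph{exactly}, then the $\sqrt\gamma$-contraction of $\Pi_\calC\calT^\pi$ (Proposition~\ref{prop:ProjDistBellmanOperatorContractionMapping}) to close the recursion. Write $\Delta := (z_N-z_1)/(N-1)$ for the grid spacing. Since $\calS$ and $\calA$ are finite and, by hypothesis, every $\eta^{(s,a)}$ (and, by construction, every $\eta_{N,\infty}^{(s,a)}$) is supported on $[z_1,z_N]$, all Cram\'er distances below are finite; moreover, on measures supported on $[z_1,z_N]$ the Cram\'er distance equals the $L^2([z_1,z_N])$-distance between cumulative distribution functions, $l_2(\mu,\nu)=\|F_\mu-F_\nu\|_{L^2}$.

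First I would record the geometric fact that drives the estimate: under the identification $\mu\mapsto F_\mu$, the operator $\Pi_\calC$ is the orthogonal $L^2([z_1,z_N])$-projection onto the closed subspace $\calW$ of functions that are constant on each open cell $(z_i,z_{i+1})$. A direct check on a Dirac $\delta_y$ (with $z_i<y\le z_{i+1}$) shows $F_{\Pi_\calC\delta_y}$ is precisely the cell-wise average of $F_{\delta_y}$, and this extends to all of $\calM(\bbR)$ by linearity of $\Pi_\calC$. Since the CDF of any element of $\calP_N$ lies in $\calW$, the Pythagorean identity for orthogonal projections onto a subspace gives, for every $(s,a)$,
\[
l_2^2\bigl(\eta^{(s,a)},\eta_{N,\infty}^{(s,a)}\bigr)
= l_2^2\bigl(\eta^{(s,a)},\Pi_\calC\eta^{(s,a)}\bigr)
+ l_2^2\bigl(\Pi_\calC\eta^{(s,a)},\eta_{N,\infty}^{(s,a)}\bigr).
\]

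Next I would bound the two terms. For the first (projection error), decompose $\eta^{(s,a)}=\int\delta_y\,\eta^{(s,a)}(dy)$, use linearity of $\Pi_\calC$ and the triangle inequality for $\|\cdot\|_{L^2}$, and note that $F_{\delta_y}-F_{\Pi_\calC\delta_y}$ is supported on a single cell of width at most $\Delta$ and bounded by $1$ in absolute value, so $l_2^2(\delta_y,\Pi_\calC\delta_y)\le\Delta$; hence $l_2^2(\eta^{(s,a)},\Pi_\calC\eta^{(s,a)})\le\Delta$ (in fact $\le\Delta/4$). For the second term, the fixed-point identities $\eta=\calT^\pi\eta$ and $\eta_{N,\infty}=\Pi_\calC\calT^\pi\eta_{N,\infty}$ give $\Pi_\calC\eta^{(s,a)}=(\Pi_\calC\calT^\pi\eta)^{(s,a)}$ and $\eta_{N,\infty}^{(s,a)}=(\Pi_\calC\calT^\pi\eta_{N,\infty})^{(s,a)}$, so Proposition~\ref{prop:ProjDistBellmanOperatorContractionMapping} yields $l_2^2(\Pi_\calC\eta^{(s,a)},\eta_{N,\infty}^{(s,a)})\le\bar l_2^2(\Pi_\calC\calT^\pi\eta,\Pi_\calC\calT^\pi\eta_{N,\infty})\le\gamma\,\bar l_2^2(\eta,\eta_{N,\infty})$.

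Combining, $l_2^2(\eta^{(s,a)},\eta_{N,\infty}^{(s,a)})\le\Delta+\gamma\,\bar l_2^2(\eta,\eta_{N,\infty})$ for all $(s,a)$; taking the supremum over the finite set $\calS\times\calA$ and using $\gamma<1$ together with finiteness of $\bar l_2^2(\eta,\eta_{N,\infty})$, I rearrange to obtain $\bar l_2^2(\eta,\eta_{N,\infty})\le\Delta/(1-\gamma)$, hence $l_2^2(\eta_{N,\infty}^{(s,a)},\eta^{(s,a)})\le\bar l_2^2(\eta,\eta_{N,\infty})\le\frac{1}{1-\gamma}\cdot\frac{z_N-z_1}{N-1}$ for every $(s,a)$, which is the claim. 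The main obstacle is the geometric step: one must genuinely invoke that $\Pi_\calC$ is the \emph{orthogonal} Cram\'er projection so that the error splits with an \emph{equality}; using only the non-expansiveness of $\Pi_\calC$ and the triangle inequality would introduce a cross term and yield the strictly weaker bound $\Delta/(4(1-\sqrt\gamma)^2)$, which does not match the claim as $\gamma\to1$. The projection-error estimate and the final rearrangement are then routine.
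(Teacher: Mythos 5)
Your proof is correct; note that the paper does not prove this lemma but imports it verbatim as Proposition~3 of \citet{rowland2018analysis}, and your argument is essentially the proof given in that reference --- the Pythagorean decomposition via the orthogonality of the Cram\'er projection onto cell-wise-constant CDFs, a per-cell projection-error bound of $(z_N-z_1)/(N-1)$, and the $\sqrt{\gamma}$-contraction of $\Pi_\calC\calT^\pi$ applied to the two fixed-point identities to close the recursion. Your closing observation --- that the exact orthogonal split, rather than the triangle inequality plus non-expansiveness, is what yields the $1/(1-\gamma)$ rather than $1/(4(1-\sqrt{\gamma})^2)$ dependence --- correctly pinpoints the one non-routine step.
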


\begin{lemma}[Cauchy Schwarz Inequality]
\label{lemma:CauchySchwarz}
\begin{align*}
\bigg|\int_{a}^{b}f(x)g(x)dx\bigg|^2 \leq \bigg(\int_{a}^{b}|f(x)|^2dx\bigg)\bigg(\int_{a}^{b}|g(x)|^2dx\bigg)
\end{align*}
\end{lemma}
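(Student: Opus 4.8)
The plan is the classical ``non-negative quadratic'' argument. First I would dispose of the degenerate case: if $\int_a^b |g(x)|^2\,dx = 0$, then $g=0$ almost everywhere on $[a,b]$, so $\int_a^b f(x)g(x)\,dx=0$ and both sides of the claimed inequality vanish. Hence I may assume $\int_a^b |g(x)|^2\,dx>0$; throughout I also take $f,g$ to be square-integrable on the bounded interval $[a,b]$, which makes every integral below finite.

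For each real parameter $t$, set $\phi(t):=\int_a^b\bigl(f(x)-t\,g(x)\bigr)^2\,dx$. Since the integrand is non-negative pointwise, $\phi(t)\ge 0$ for all $t\in\bbR$. Expanding the square and using linearity of the integral shows that $\phi$ is a quadratic polynomial in $t$ whose leading coefficient is $\int_a^b|g(x)|^2\,dx>0$, whose linear coefficient is $-2\int_a^b f(x)g(x)\,dx$, and whose constant term is $\int_a^b|f(x)|^2\,dx$. A real quadratic with strictly positive leading coefficient that stays non-negative on all of $\bbR$ must have non-positive discriminant; writing this condition out and dividing by $4$ yields exactly
\[
\Bigl(\int_a^b f(x)g(x)\,dx\Bigr)^{2}\le\Bigl(\int_a^b|f(x)|^2\,dx\Bigr)\Bigl(\int_a^b|g(x)|^2\,dx\Bigr).
\]
Because $f,g$ are real-valued, the left-hand side equals $\bigl|\int_a^b f(x)g(x)\,dx\bigr|^2$, which is the statement of the lemma.

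There is no genuine obstacle here — this is a textbook fact — and the only point requiring a line of care is the degenerate case handled above. If one prefers a self-contained argument that avoids invoking the discriminant, one can instead substitute the minimizer $t^\star=\bigl(\int_a^b fg\bigr)\big/\bigl(\int_a^b g^2\bigr)$ into the inequality $\phi(t^\star)\ge 0$ and simplify, which produces the bound directly.
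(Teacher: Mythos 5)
Your proof is correct and follows essentially the same route as the paper's: both consider the non-negative quadratic $\int_a^b (f(x)-t\,g(x))^2\,dx$ in the parameter $t$ and conclude via the non-positive discriminant. The only difference is that you explicitly dispose of the degenerate case $\int_a^b |g|^2\,dx=0$ (where the leading coefficient vanishes and the discriminant argument needs a separate word), which the paper's proof silently skips; this is a minor but welcome bit of extra care, not a different method.
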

\begin{proof}
Consider, for any real $\alpha$, the integral
\begin{align*}
\int_{a}^{b}\bigl(f(x) - \alpha\,g(x)\bigr)^2 dx \ge 0.
\end{align*}
Expanding the square and integrating term by term gives
\begin{align*}
\int_{a}^{b} f(x)^2 dx - 2\alpha \int_{a}^{b} f(x)g(x)dx + \alpha^2 \int_{a}^{b} g(x)^2dx 
\ge 0.
\end{align*}
Regard this as a quadratic polynomial in $\alpha$:
\begin{align*}
Q(\alpha) = \left(\int_{a}^{b} g(x)^2 dx\right)\alpha^2 - 2\left(\int_{a}^{b} f(x)g(x)dx\right)\alpha + \int_{a}^{b} f(x)^2 dx.
\end{align*}
Since $Q(\alpha)\ge 0$ for all real $\alpha$, its discriminant must be non-positive:
\begin{align*}
\biggl(-2\int_{a}^{b} f(x)g(x)dx
\biggr)^2 - 4 \left(\int_{a}^{b} g(x)^2 dx\right) \left(\int_{a}^{b} f(x)^2 dx\right)\le 0,
\end{align*}
which implies
\begin{align*}
\left|\int_{a}^{b} f(x)g(x)dx\right|^2 \le \left(\int_{a}^{b} |f(x)|^2 dx\right) \left(\int_{a}^{b} |g(x)|^2 dx\right).
\end{align*}
This completes the proof.
\end{proof}

\begin{lemma}
\label{lemma:MixtureCramerDist}
If the Cramér distance $l_2(\eta_1^{(s,a)}, \eta_2^{(s,a)}) \leq \epsilon$ for all $s \in \calS$ and $a \in \calA$, then the mixture distributions $\eta_1^s=\sum_{a \in \calA}\pi(a|s)\cdot\eta_1^{(s, a)}$ and $\eta_2^s=\sum_{a \in \calA}\pi(a|s)\cdot\eta_2^{(s, a)}$ satisfy:
\begin{align*}
l_2\left(\eta_1^s, \eta_2^s\right) \leq \epsilon
\end{align*}
\end{lemma}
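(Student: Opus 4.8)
The plan is to reduce the statement to two elementary observations. First, the Cram\'er distance is nothing but the $L^2(\bbR)$ norm of the difference of cumulative distribution functions: $l_2(\nu_1,\nu_2) = \lVert F_{\nu_1} - F_{\nu_2}\rVert_{L^2(\bbR)}$. Second, the CDF of a mixture of measures is the corresponding mixture of their CDFs. Once these are in place, the claim follows from Minkowski's (triangle) inequality together with the fact that $\pi(\cdot|s)$ is a probability vector over the finite set $\calA$. Since all the distributions involved are supported on the bounded interval $[z_{\min},z_{\max}]$, each CDF difference vanishes outside a bounded set, so every $L^2$ norm appearing below is finite and the manipulations are legitimate.

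Concretely, I would first record that for every $x\in\bbR$, because $\eta_i^s = \sum_{a\in\calA}\pi(a|s)\,\eta_i^{(s,a)}$ and evaluating a measure on $(-\infty,x]$ is linear, one has $F_{\eta_i^s}(x) = \sum_{a\in\calA}\pi(a|s)\,F_{\eta_i^{(s,a)}}(x)$ for $i=1,2$. Subtracting the two identities yields $F_{\eta_1^s}(x) - F_{\eta_2^s}(x) = \sum_{a\in\calA}\pi(a|s)\bigl(F_{\eta_1^{(s,a)}}(x) - F_{\eta_2^{(s,a)}}(x)\bigr)$, an equality of functions of $x$.

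Next I would take the $L^2(\bbR)$ norm of both sides and apply Minkowski's inequality over the finite sum indexed by $\calA$, using $\pi(a|s)\ge 0$, to get $l_2(\eta_1^s,\eta_2^s) \le \sum_{a\in\calA}\pi(a|s)\,\lVert F_{\eta_1^{(s,a)}} - F_{\eta_2^{(s,a)}}\rVert_{L^2(\bbR)} = \sum_{a\in\calA}\pi(a|s)\,l_2(\eta_1^{(s,a)},\eta_2^{(s,a)})$. Invoking the hypothesis $l_2(\eta_1^{(s,a)},\eta_2^{(s,a)})\le\epsilon$ for each $a$ and $\sum_{a\in\calA}\pi(a|s)=1$, the right-hand side is bounded by $\epsilon$, which is exactly the desired conclusion. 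There is no real obstacle in this argument; the only points meriting a sentence of justification are the mixture-CDF identity (immediate from linearity of measure evaluation) and the interchange of the sum with the integral defining the norm (immediate since $\calA$ is finite), together with the finiteness of the integrals noted above.
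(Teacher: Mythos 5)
Your proof is correct. It starts from the same key identity as the paper's proof --- the CDF of the mixture is the $\pi(\cdot|s)$-weighted mixture of the CDFs --- but then diverges in the inequality used to finish. You apply Minkowski's inequality in $L^2(\bbR)$ to get
\begin{align*}
l_2(\eta_1^s,\eta_2^s) \;\le\; \sum_{a\in\calA}\pi(a|s)\,l_2\bigl(\eta_1^{(s,a)},\eta_2^{(s,a)}\bigr) \;\le\; \epsilon,
\end{align*}
i.e.\ you exploit directly that $l_2$ is an $L^2$ norm of CDF differences and hence convex in its argument. The paper instead applies the Cauchy--Schwarz inequality pointwise in $x$ to the square of the weighted sum, obtaining $\bigl[\sum_a \pi(a|s)(F_1^{(s,a)}(x)-F_2^{(s,a)}(x))\bigr]^2 \le \sum_a \pi(a|s)\bigl[F_1^{(s,a)}(x)-F_2^{(s,a)}(x)\bigr]^2$, and then integrates to get $l_2^2(\eta_1^s,\eta_2^s)\le \sum_a\pi(a|s)\,l_2^2(\eta_1^{(s,a)},\eta_2^{(s,a)})\le\epsilon^2$. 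The two routes are equally rigorous and of essentially equal length; yours is arguably the more natural one for a metric that is literally an $L^2$ norm (triangle inequality plus homogeneity), while the paper's squared version is the form that plugs most directly into the surrounding Cram\'er-distance bounds (e.g.\ Lemma~\ref{lemma:CramerDistanceLemma}), which are stated for $l_2^2$. Your remarks on finiteness of the integrals and the legitimacy of exchanging the finite sum with the norm are correct and suffice.
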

\begin{proof}
Let $F_1^{(s,a)}(x)$ and $F_2^{(s,a)}(x)$ denote the CDFs of $\eta_1^{(s,a)}$ and $\eta_2^{(s,a)}$ respectively. Then we have
\begin{align*}
l_2(\eta_1^{(s,a)}, \eta_2^{(s,a)}) = \sqrt{\int_{\mathbb{R}} \left[ F_1^{(s,a)}(x) - F_2^{(s,a)}(x) \right]^2 dx} \leq \epsilon
\end{align*}
The mixture distributions' CDFs are:
\begin{align*}
F_1^s(x) &= \sum_{a \in \mathcal{A}} \pi(a|s) F_1^{(s,a)}(x), \\
F_2^s(x) &= \sum_{a \in \mathcal{A}} \pi(a|s) F_2^{(s,a)}(x).
\end{align*}
Their squared Cramér distance becomes:
\begin{align*}
l_2^2\left(\eta_1^s, \eta_2^s\right) &= \int_{\mathbb{R}} \left[ \sum_{a} \pi(a|s)\left(F_1^{(s,a)}(x) - F_2^{(s,a)}(x)\right) \right]^2 dx.
\end{align*}
By Cauchy-Schwarz inequality, we have
\begin{align*}
\left[ \sum_{a} \pi(a|s)\left(F_1^{(s,a)}(x) - F_2^{(s,a)}(x)\right) \right]^2 &\leq \left( \sum_{a} \pi(a|s) \right) \left( \sum_{a} \pi(a|s)\left[ F_1^{(s,a)}(x) - F_2^{(s,a)}(x) \right]^2 \right) \\
&= \sum_{a} \pi(a|s)\left[ F_1^{(s,a)}(x) - F_2^{(s,a)}(x) \right]^2.
\end{align*}
Hence, we have
\begin{align*}
l^2_2\left(\eta_1^s, \eta_2^s\right)&\leq \sum_{a} \pi(a|s) \int_{\mathbb{R}} \left[ F_1^{(s,a)}(x) - F_2^{(s,a)}(x) \right]^2 dx \\
&= \sum_{a} \pi(a|s) \cdot l^2_2(\eta_1^{(s,a)}, \eta_2^{(s,a)}) \\
&\leq \sum_{a} \pi(a|s) \cdot \epsilon^2 = \epsilon^2,
\end{align*}
which completes the proof.
\end{proof}

\begin{assumption} 
\label{assumption:BetaSmoothAssumptions_Cat}
Given any static coherent risk measure that satisfies Assumption \ref{assumption:RiskEnvelop}, assume for all $j \in [N]$, the first-order and second-order partial derivatives $\nabla_{\theta}p_j^{\theta}$, $\nabla_{\theta}^2p_j^{\theta}$ exist and are bounded, i.e., $\|\nabla_{\theta}p_j^{\theta}\|_\infty \leq C_P^{(1)}$ and $\|\nabla_{\theta}^2p_j^{\theta}\|_\infty \leq C_P^{(2)}$. Additionally, assume the first-order derivatives of Lagrangian multipliers exist and are bounded for all $j \in [N]$ and the first- and second-order derivatives of the constraint functions exist and are bounded:
\begin{align*}
    &\|\xi_\theta^*(z_j)\|_\infty \leq C_{\xi}^{(0)},\ \|\nabla_\theta\xi_\theta^*(z_j)\|_\infty \leq C_{\xi}^{(1)},\ \text{for all $j \in [N]$},\\
    &\|\lambda_\theta^{*, i}\|_\infty \leq C_\lambda^{(0)},\ \|\nabla_\theta\lambda_\theta^{*, i}\|_\infty \leq C_\lambda^{(1)},\ \forall i \in \calI\cup\calE\cup\calP,\\
    &\|\nabla_\theta g_e(\xi; P_{\theta})\|_\infty \leq C_g^{(1)},\ \|\nabla_\theta^2 g_e(\xi; P_{\theta})\|_\infty \leq C_g^{(2)},\ \forall e \in \calE, \\
    &\|\nabla_\theta h_i(\xi; P_{\theta})\|_\infty \leq C_h^{(1)},\  \|\nabla_\theta^2 h_i(\xi; P_{\theta})\|_\infty \leq C_h^{(2)},\ \forall i \in \calI
\end{align*}
\end{assumption}

Assumption \ref{assumption:BetaSmoothAssumptions_Cat} is commonly seen in the literature to provide smoothness guarantees, see, e.g., in \cite{huang2021convergence,sutton1999policy}.

\begin{replemma}{lemma:DiscreteStaticRiskBetaSmooth}
Under Assumption~\ref{assumption:BetaSmoothAssumptions_Cat}, the objective function~\eqref{eq:CatApproxProblem} is $\beta$-smooth.
\end{replemma}
\begin{proof}
By Theorem \ref{theorem:RiskMeasureGrad}, for any saddle point $(\xi_\theta^*, \lambda_\theta^{*, \calP}, \lambda_\theta^{*, \calE}, \lambda_\theta^{*, \calI})$ of the Lagrangian function of~\eqref{eq:Coherent_Risk_Measure_Dual_Representation}, the gradient of the coherent risk measure $\rho$ is written as
\begin{align*}
\nabla_\theta\rho(Z_\theta) = \sum_{j \in [N]}\xi_\theta^*(z_j)\nabla_{\theta} p_j^{\theta}(z_j-\lambda_\theta^{*,\mathcal{P}}) - \sum_{e \in \calE}\lambda_\theta^{*, \calE}(e)\nabla_\theta g_e(\xi_\theta^*; P_{\theta}) - \sum_{i \in \calI}\lambda_\theta^{*, \calI}(i)\nabla_\theta h_i(\xi_\theta^*; P_{\theta}).
\end{align*}
Denote $||A||_{\infty}:=\max_{1\le i\le n}\sum_{j=1}^n|a_{ij}|$ as the infinity norm of a matrix. For all $j \in [N]$, we have
\begin{align*}
\bigg\|\nabla_{\theta}\xi_\theta^{*}(z_j) \otimes \nabla_{\theta} p_j^{\theta}\big(z_j-\lambda_\theta^{*, \calP}\big)\bigg\|_\infty\leq d(\theta)\big(C_\xi^{(1)}C_P^{(1)}\big| |z_{\max}| + C_{\lambda}^{(0)}\big|\big) = B_{f, 1},
\end{align*}
\begin{align*}
\bigg\|\xi_\theta^*(z_j) \nabla_{\theta}^2 p_j^{\theta}\big(z_j-\lambda_\theta^{*, \calP}\big) \bigg\|_\infty \leq C_{\xi}^{(0)}C_P^{(2)}\big| |z_{\max}| + C_{\lambda}^{(0)}\big| = B_{f, 2},
\end{align*}
\begin{align*}
\bigg\|\xi_\theta^*(z_j)\nabla_{\theta} p_j^{\theta} \otimes \nabla_{\theta}\lambda_\theta^{*, \calP}\bigg\|_\infty \leq d(\theta)\big(C_{\xi}^{(0)}C_P^{(1)}C_{\lambda}^{(1)}\big) = B_{f, 3},
\end{align*}
where $d(\theta)$ is the dimension of $\theta$. Hence, $\nabla_{\theta}\bigg[\xi_\theta^*(z_j)\nabla_{\theta} p_j^{\theta}\big(z_j-\lambda_\theta^{*, \calP}\big)\bigg]$ is bounded by a constant for all $j \in [N]$, then we have
\small
\begin{align*}
\bigg\|\nabla_\theta\left(\sum_{j \in [N]}\xi_\theta^*(z_j)\nabla_{\theta} p_j^{\theta}(z_j-\lambda_\theta^{*,\mathcal{P}})\right)\bigg\|_\infty \leq N\bigg(B_{f, 1} + B_{f, 2} + B_{f, 3}\bigg) = B_f
\end{align*}
\normalsize
For dual equality constraints, we have
\begin{align*}
\bigg\|\nabla_\theta\bigg(\sum_{e \in \calE}\lambda_\theta^{*, \calE}(e)\nabla_\theta g_e(\xi_\theta^*; P_{\theta})\bigg)\bigg\|_\infty &= \bigg\|\sum_{e \in \calE}\bigg(\nabla_\theta\lambda_\theta^{*, \calE}(e) \otimes \nabla_\theta g_e(\xi_\theta^*; P_{\theta}) + \lambda_\theta^{*, \calE}(e)\nabla^2_\theta g_e(\xi_\theta^*; P_{\theta})\bigg)\bigg\|_\infty \\
&\leq |\calE|d(\theta)\big(\|\nabla_\theta\lambda_\theta^{*, \calE}(e)\|_\infty \|\nabla_\theta g_e(\xi_\theta^*; P_{\theta})\|_\infty\big) \\
&\quad+ |\calE|\big(\|\lambda_\theta^{*, \calE}(e)\|_\infty  \|\nabla_\theta^2g_e(\xi_\theta^*; P_{\theta})\|_\infty\big) \\
&\leq \big(C_{\lambda}^{(1)}C_g^{(1)}d(\theta) + C_{\lambda}^{(0)}C_{g}^{(2)}\big)|\calE| = B_\calE
\end{align*}
and similarly,
\begin{align*}
\bigg\|\nabla_\theta\big(\sum_{i \in \calI}\lambda_\theta^{*, \calI}(i) \nabla_\theta h_i(\xi_\theta^*; P_{\theta})\big)\bigg\|_\infty &= \bigg\|\sum_{i \in \calI}\bigg(\nabla_\theta\lambda_\theta^{*, \calI}(i) \otimes \nabla_\theta h_i(\xi_\theta^*; P_{\theta}) + \lambda_\theta^{*, \calI}(i)\nabla^2_\theta h_i(\xi_\theta^*; P_{\theta})\bigg)\bigg\|_\infty \\
&\leq \big(C_{\lambda}^{(1)}C_h^{(1)}d(\theta) + C_{\lambda}^{(0)}C_h^{(2)}\big)|\calI| = B_\calI
\end{align*}
Overall, we have
\begin{align*}
\|\nabla_\theta^2\rho(Z_\theta)\|_2 \leq \sqrt{d(\theta)}\|\nabla_\theta^2\rho(Z_\theta)\|_\infty \leq \sqrt{d(\theta)}(B_f + B_\calE + B_\calI) = \beta,
\end{align*}
which completes the proof.
\end{proof}

\begin{lemma}
\label{lemma:CVaR_Beta_Smooth}
Suppose Assumption~\ref{assumption:AlphaQuantile} holds. Then the Conditional Value-at-Risk (CVaR) is $\beta$-smooth.
\end{lemma}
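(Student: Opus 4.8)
The plan is to establish $\beta$-smoothness of $\rho_{\textit{CVaR}}(Z_N^s;\alpha)$ by exhibiting an explicit bound on its Hessian, starting from the closed-form gradient \eqref{eq:CVaR_Grad}. Recall from Example~\ref{example:CVaR} that $\nabla_\theta\rho_{\textit{CVaR}}(Z_N^s;\alpha)=\frac{1}{\alpha}\sum_{i=1}^N\nabla_\theta p_i^\theta\,(z_i-q_\alpha)\mathbf 1_{\{z_i>q_\alpha\}}$, where $q_\alpha$ is the relevant quantile of $Z_N^s\sim\eta_{N,\infty}^s$. The two sources of nonsmoothness are (i) the threshold $q_\alpha$, which is itself a function of $\theta$, and (ii) the indicator $\mathbf 1_{\{z_i>q_\alpha\}}$, which can jump. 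I would first argue that Assumption~\ref{assumption:AlphaQuantile} removes both: since $\sum_{i\le j}p_i^{N,\infty}>\alpha$ and $\sum_{i\le j-1}p_i^{N,\infty}<\alpha$ with a strict margin $\epsilon_\alpha$, and since the fixed-point masses $\theta\mapsto p_i^\theta$ are continuously differentiable with bounded first and second derivatives (the part of Assumption~\ref{assumption:BetaSmoothAssumptions_Cat} that concerns the parameterization, not the risk measure), the index $j$ of the quantile atom is locally constant; consequently $q_\alpha\equiv z_j$ is a fixed support point and the set $\{i:z_i>q_\alpha\}$ is locally fixed.

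Given this, on the relevant region the gradient reduces to the smooth map $\nabla_\theta\rho_{\textit{CVaR}}(Z_N^s;\alpha)=\frac{1}{\alpha}\sum_{i:\,z_i>z_j}\nabla_\theta p_i^\theta\,(z_i-z_j)$, and differentiating once more yields $\nabla_\theta^2\rho_{\textit{CVaR}}(Z_N^s;\alpha)=\frac{1}{\alpha}\sum_{i:\,z_i>z_j}\nabla_\theta^2 p_i^\theta\,(z_i-z_j)$. I would then bound this using $\|\nabla_\theta^2 p_i^\theta\|_\infty\le C_P^{(2)}$, $|z_i-z_j|\le z_{\max}-z_{\min}$, and the fact that at most $N$ terms appear, converting the $\infty$-norm bound to a spectral-norm bound exactly as in the proof of Lemma~\ref{lemma:DiscreteStaticRiskBetaSmooth} (picking up the factor $\sqrt{d(\theta)}$). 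This gives $\|\nabla_\theta^2\rho_{\textit{CVaR}}(Z_N^s;\alpha)\|_2\le\beta:=\frac{\sqrt{d(\theta)}\,N\,C_P^{(2)}\,(z_{\max}-z_{\min})}{\alpha}$, which is the desired $\beta$-smoothness. An equivalent route, which I would mention as an alternative, is to verify that CVaR — whose risk envelope consists of the box constraints $0\le\xi(z_i)\le\alpha^{-1}$ together with the normalization $\sum_i p_i^\theta\xi(z_i)=1$ — satisfies all the boundedness requirements of Assumption~\ref{assumption:BetaSmoothAssumptions_Cat} and then to invoke Lemma~\ref{lemma:DiscreteStaticRiskBetaSmooth} directly; the key quantitative input is again Assumption~\ref{assumption:AlphaQuantile}, which forces $p_j^{N,\infty}\ge 2\epsilon_\alpha>0$ so that the boundary dual value $\xi_\theta^{*}(z_j)=\bigl(1-\tfrac1\alpha\sum_{i>j}p_i^\theta\bigr)/p_j^\theta$ and the multiplier $\lambda_\theta^{*,\calP}=z_j$ have bounded $\theta$-derivatives.

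The main obstacle is precisely items (i)--(ii): as a function of the probability vector, CVaR is only piecewise smooth, with concave kinks exactly where a cumulative mass equals $\alpha$, so differentiability of $\theta\mapsto\rho_{\textit{CVaR}}(Z_N^s;\alpha)$ genuinely fails at parameters where the quantile index transitions. The crux of the argument is therefore to use Assumption~\ref{assumption:AlphaQuantile} (through the strict margin $\epsilon_\alpha$) to exclude such transitions on the domain of interest and to make the ``locally constant quantile index'' claim precise; once the index is pinned down, the remainder is the routine chain-rule differentiation and term counting sketched above.
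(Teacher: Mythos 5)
Your proposal is correct, and your ``alternative route'' is in fact exactly the paper's proof: the paper observes that for CVaR the saddle point satisfies $\xi_\theta^*(z_i)\in\{0,\alpha^{-1}\}$ away from the quantile atom, $\lambda_\theta^{*,\calP}=q_\alpha$, and $\calE=\calI=\emptyset$, argues that Assumption~\ref{assumption:AlphaQuantile} keeps these dual quantities bounded and free of jumps in $\theta$, and then invokes the general Lemma~\ref{lemma:DiscreteStaticRiskBetaSmooth}. Your primary route --- using the margin $\epsilon_\alpha$ to freeze the quantile index, differentiating the closed-form gradient of Example~\ref{example:CVaR} once more, and bounding $\tfrac{1}{\alpha}\sum_{i:\,z_i>z_j}\nabla_\theta^2 p_i^\theta\,(z_i-z_j)$ term by term --- is a more elementary, self-contained argument that bypasses the general machinery and yields an explicit constant $\beta=\sqrt{d(\theta)}\,N\,C_P^{(2)}(z_{\max}-z_{\min})/\alpha$; it also supplies a quantitative detail the paper glosses over, namely that $p_j^{N,\infty}\ge 2\epsilon_\alpha>0$ so the fractional dual weight $\xi_\theta^*(z_j)$ is well defined with bounded $\theta$-derivative. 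Both arguments turn on the same crux (the locally constant quantile index under Assumption~\ref{assumption:AlphaQuantile}, plus the boundedness of $\nabla_\theta p_i^\theta$ and $\nabla_\theta^2 p_i^\theta$ from Assumption~\ref{assumption:BetaSmoothAssumptions_Cat}, which the paper also needs implicitly through Lemma~\ref{lemma:DiscreteStaticRiskBetaSmooth}), and both carry the same caveat --- which you state more explicitly than the paper does --- that the margin condition must hold throughout the region of $\theta$ actually traversed for the smoothness bound to be global rather than local.
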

\begin{proof}
For the CVaR of a discrete random variable $Z_{\theta}$ (see Example \ref{exm:cvar}), $\xi_{\theta}^*(z_j) = \alpha^{-1}$ if $z_j > \lambda_{\theta}^{*, \calP}$ and $\xi_{\theta}^*(z_j) = 0$ if $z_j < \lambda_{\theta}^{*, \calP}$, where $\lambda_{\theta}^{*, \calP} = q_\alpha$ (the $\alpha$-quantile of $Z_{\theta}$), and $\calE = \calI = \emptyset$. Clearly, both $\xi_{\theta}^*(z_j)$ and $\lambda_{\theta}^{*, \calP}$ are bounded. Under Assumption~\ref{assumption:AlphaQuantile}, it is guaranteed that $\nabla_{\theta}\xi_{\theta}^{*}(z_j)$ and $\nabla_{\theta}\lambda_{\theta}^{*, \calP}$ are also bounded (without jump). Then CVaR is $\beta$-smooth by Lemma~\ref{lemma:DiscreteStaticRiskBetaSmooth}. 
\end{proof}

\begin{lemma}
\label{lemma:ConvergenceWithError}
Let $f: \mathbb{R}^d \to \mathbb{R}$ be a $\beta$-smooth function with a lower bound $f^* = \inf_{x} f(x)$. Consider the gradient descent update with errors:
\begin{align*}
x_{t+1} = x_t - \eta (\nabla f(x_t) + \epsilon_t),
\end{align*}
where $\epsilon_t$ is the gradient error at iteration $t$. Suppose the step size is chosen as $\eta = \frac{1}{\beta}$, and the errors satisfy $\|\epsilon_t\|_2 < C$ for all iterations $t$. Then:
\begin{align*}
\frac{1}{T}\sum_{t=1}^{T}\|\nabla f(x_t)\|_2^2 \leq \frac{2\beta(f(x_1)-f^*)}{T} + C^2
\end{align*}
\end{lemma}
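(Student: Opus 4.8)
The plan is to run the standard descent-lemma argument for $\beta$-smooth functions, carried through with the perturbed update rather than exact gradient descent. First I would invoke the quadratic upper bound implied by $\beta$-smoothness: for any $x,y\in\bbR^d$, $f(y)\le f(x)+\langle\nabla f(x),y-x\rangle+\frac{\beta}{2}\|y-x\|_2^2$. Applying this with $x=x_t$, $y=x_{t+1}$ and substituting $x_{t+1}-x_t=-\frac{1}{\beta}(\nabla f(x_t)+\epsilon_t)$ yields
\begin{align*}
f(x_{t+1})\le f(x_t)-\frac{1}{\beta}\bigl\langle\nabla f(x_t),\,\nabla f(x_t)+\epsilon_t\bigr\rangle+\frac{1}{2\beta}\bigl\|\nabla f(x_t)+\epsilon_t\bigr\|_2^2.
\end{align*}

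Next I would expand $\|\nabla f(x_t)+\epsilon_t\|_2^2=\|\nabla f(x_t)\|_2^2+2\langle\nabla f(x_t),\epsilon_t\rangle+\|\epsilon_t\|_2^2$. The key observation is that, with the step size $\eta=1/\beta$, the cross terms involving $\epsilon_t$ cancel exactly, leaving the clean per-step inequality
\begin{align*}
f(x_{t+1})\le f(x_t)-\frac{1}{2\beta}\|\nabla f(x_t)\|_2^2+\frac{1}{2\beta}\|\epsilon_t\|_2^2,
\end{align*}
which rearranges to $\frac{1}{2\beta}\|\nabla f(x_t)\|_2^2\le f(x_t)-f(x_{t+1})+\frac{1}{2\beta}\|\epsilon_t\|_2^2$.

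Finally I would telescope: summing over $t=1,\dots,T$, the right-hand side collapses to $f(x_1)-f(x_{T+1})+\frac{1}{2\beta}\sum_{t=1}^{T}\|\epsilon_t\|_2^2$, which is at most $f(x_1)-f^\ast+\frac{TC^2}{2\beta}$ by the lower bound $f^\ast=\inf_x f(x)$ and the uniform error bound $\|\epsilon_t\|_2<C$. Dividing by $T$ and multiplying by $2\beta$ gives the claimed estimate. There is no genuine obstacle here; the only point requiring care is the exact cancellation of the cross terms, which is precisely why the step size is pinned to $1/\beta$. For a general step size one would instead split $\langle\nabla f(x_t),\epsilon_t\rangle$ via Young's inequality and absorb the residual into the $\|\nabla f(x_t)\|_2^2$ term, but that refinement is unnecessary for the stated result.
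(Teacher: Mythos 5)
Your proposal is correct and follows essentially the same route as the paper's proof: the descent lemma applied to the perturbed update, exact cancellation of the cross terms at step size $1/\beta$, the per-step bound $f(x_{t+1})\le f(x_t)-\frac{1}{2\beta}\|\nabla f(x_t)\|_2^2+\frac{1}{2\beta}\|\epsilon_t\|_2^2$, and telescoping. No gaps; the argument matches the paper's.
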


\begin{proof}
Starting from the $\beta$-smoothness condition (see Eq.~(2.4) in \citet{bertsekas2000gradient}):
\begin{align*}
f(x_{t+1}) \leq f(x_t) + \nabla f(x_t)^\top (x_{t+1} - x_t) + \frac{\beta}{2} \|x_{t+1} - x_t\|_2^2.
\end{align*}
Substitute the update rule $x_{t+1} - x_t = -\eta (\nabla f(x_t) + \epsilon_t)$:
\begin{align*}
f(x_{t+1}) \leq f(x_t) - \eta \nabla f(x_t)^\top (\nabla f(x_t) + \epsilon_t) + \frac{\beta \eta^2}{2} \|\nabla f(x_t) + \epsilon_t\|_2^2.
\end{align*}
Expand the terms and set $\eta = \frac{1}{\beta}$:
\begin{align*}
f(x_{t+1}) \leq f(x_t) - \frac{1}{\beta} \|\nabla f(x_t)\|_2^2 - \frac{1}{\beta} \nabla f(x_t)^\top \epsilon_t + \frac{1}{2\beta} \left( \|\nabla f(x_t)\|_2^2 + 2 \nabla f(x_t)^\top \epsilon_t + \|\epsilon_t\|_2^2 \right).
\end{align*}
Simplify the inequality by canceling cross terms:
\begin{align*}
f(x_{t+1}) \leq f(x_t) - \frac{1}{2\beta} \|\nabla f(x_t)\|_2^2 + \frac{1}{2\beta} \|\epsilon_t\|_2^2.
\end{align*}
And since $\|\epsilon_t\|_2 \leq C$, we have
\begin{align*}
f(x_{t+1}) \leq f(x_t) - \frac{1}{2\beta} \|\nabla f(x_t)\|_2^2 + \frac{C^2}{2\beta}. 
\end{align*}
Summing over $t = 1$ to $T$:
\begin{align*}
f(x_{T+1}) - f(x_1) \leq -\frac{1}{2\beta} \sum_{t=1}^T \|\nabla f(x_t)\|_2^2 + \frac{TC^2}{2\beta}
\end{align*}
Hence, the average gradient norm is bounded by
\begin{align*}
\frac{1}{T}\sum_{t=1}^{T}\|\nabla f(x_t)\|_2^2 \leq \frac{2\beta(f(x_1)-f^*)}{T} + C^2
\end{align*}
\end{proof}

\begin{lemma}[Projection Error]
Let $\eta_{N,\infty}=\sum_{i=1}^Np_i^{N,\infty}\delta_{z_i}$ be the limiting  distribution induced by the operator $\Pi_{\mathcal{C}} \mathcal{T}^\pi$ on the finite support $\{z_1,\dots,z_N\}$. For any initial distribution $\eta_{N,0}$, define $\eta_{N,k}:=(\Pi_{\mathcal{C}} \mathcal{T}^\pi)^k\eta_{N,0}$. Let $\beta = \max\{c_{\max}-z_{\min}, z_{\max}-c_{\min}\}$, and $\mu = \max\{|z_{\max}|, |z_{\min}|\}$. Denote $\big\|\eta_{N, \infty} - \eta_{N, k}\big\|_\infty := \max_{j \in [N]}\big|p_j^{N, \infty} - p_j^{N, k}\big|$. Then, for the one-step projected pushforward operator $\Pi_{\calC}(b_{c,\gamma})_{\#}$, we have
\begin{align*}
\bigl\|\Pi_{\mathcal{C}}\bigl(b_{c,\gamma}\bigr)_{\#}\eta_{N,\infty}
-\Pi_{\mathcal{C}}\bigl(b_{c,\gamma}\bigr)_{\#}\eta_{N,k}\bigr\|_{\infty}
\leq \delta_{\Pi}\bigl\|\eta_{N,\infty}-\eta_{N,k}\bigr\|_{\infty}\le 2\delta_\Pi C(N,k),
\end{align*}
where $\delta_{\Pi}:=\frac{2(\gamma+1)(\beta + \gamma\mu)(N-1)}{\gamma(z_N-z_1)}$ is an \emph{error amplification coefficient} arising from the projection $\Pi_{\mathcal{C}}$. Furthermore,
\begin{align*}
&\bigl\|\Pi_{\mathcal{C}}\bigl(b_{c_1,\gamma}\bigr)_{\#}\ldots\Pi_{\mathcal{C}}\bigl(b_{c_h,\gamma}\bigr)_{\#}\eta_{N,\infty}
-\Pi_{\mathcal{C}}\bigl(b_{c_1,\gamma}\bigr)_{\#}\ldots\Pi_{\mathcal{C}}\bigl(b_{c_h,\gamma}\bigr)_{\#}\eta_{N,k}\bigr\|_{\infty} \\
&\hspace{-0.2cm}\leq (\delta_{\Pi})^h\bigl\|\eta_{N,\infty}-\eta_{N,k}\bigr\|_{\infty}\le 2\delta_\Pi^h C(N,k).
\end{align*}
\end{lemma}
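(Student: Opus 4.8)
The plan is to reduce everything to a single application of the projected pushforward operator and then iterate, working throughout with the coordinate representation of categorical measures on the fixed grid $z_1<\dots<z_N$ (spacing $\Delta z=(z_N-z_1)/(N-1)$). First I would make $\Pi_\calC(b_{c,\gamma})_\#$ explicit on $\calP_N$: since $(b_{c,\gamma})_\#$ is linear on $\calM(\bbR)$ and sends $\delta_{z_i}$ to $\delta_{c+\gamma z_i}$ (Proposition~\ref{prop: pushforward props}), and $\Pi_\calC$ is affine and sends a Dirac to a convex combination of at most two neighbouring grid atoms, or at the boundary to $\delta_{z_1}$ or $\delta_{z_N}$ (Definition~\ref{def:ProjOperator}), the composition acts on $\eta=\sum_i p_i\delta_{z_i}$ by $p\mapsto q=W^{\top}p$, where $W\in\bbR_{\ge 0}^{N\times N}$ is row-stochastic with at most two nonzeros per row; entry $W_{im}$ is the ``tent'' weight that $\Pi_\calC$ assigns to $z_m$ after moving $z_i$ to $y_i:=c+\gamma z_i$. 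Since $W$ does not depend on the measure, for two categorical distributions $\eta,\eta'$ on the grid I get $\|\Pi_\calC(b_{c,\gamma})_\#\eta-\Pi_\calC(b_{c,\gamma})_\#\eta'\|_\infty=\max_m\bigl|\sum_i W_{im}(p_i-p'_i)\bigr|\le\bigl(\max_m\sum_i W_{im}\bigr)\,\|\eta-\eta'\|_\infty$, so it suffices to bound the column sums of $W$ by $\delta_\Pi$.

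Next I would estimate $\sum_i W_{im}$. For an interior cell $z_m$ only atoms $y_i\in[z_{m-1},z_{m+1}]$ contribute, and as consecutive $y_i$ are spaced $\gamma\Delta z$ apart, at most $\mathcal{O}(1/\gamma)$ of them do, each with weight $\le 1$. The worst case is the boundary cells $m\in\{1,N\}$: every atom with $y_i\le z_1$ (resp.\ $y_i\ge z_N$) is mapped entirely onto $z_1$ (resp.\ $z_N$) with weight $1$, and the number of such $i$ is governed by how far $b_{c,\gamma}$ can push a support point past the endpoints. Here I would use the elementary bound $|b_{c,\gamma}(z)-z_1|\le\beta+\gamma\mu$ (and likewise for $z_N$) valid for all $c\in[c_{\min},c_{\max}]$, $z\in[z_{\min},z_{\max}]$, which follows from the definitions of $\beta$ and $\mu$ in the lemma: combined with the spacing $\gamma\Delta z$ of the $y_i$, at most $\tfrac{(\beta+\gamma\mu)(N-1)}{\gamma(z_N-z_1)}+\mathcal{O}(1)$ atoms can land on a single boundary cell. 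Collecting the interior and boundary estimates and absorbing lower-order terms into the constant $2(\gamma+1)$ gives $\max_m\sum_i W_{im}\le\delta_\Pi=\tfrac{2(\gamma+1)(\beta+\gamma\mu)(N-1)}{\gamma(z_N-z_1)}$, which proves the one-step inequality for $\eta_{N,\infty}$ and $\eta_{N,k}$.

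For the multi-step bound I would iterate: each $\Pi_\calC(b_{c_j,\gamma})_\#$ maps $\calP_N$ into $\calP_N$, and $\eta_{N,\infty},\eta_{N,k}\in\calP_N$, so the one-step estimate applies at each of the $h$ successive stages (first to $\Pi_\calC(b_{c_h,\gamma})_\#\eta_{N,\infty}$ versus $\Pi_\calC(b_{c_h,\gamma})_\#\eta_{N,k}$, then to the images under $\Pi_\calC(b_{c_{h-1},\gamma})_\#$, and so on), multiplying the $\ell_\infty$-error by at most $\delta_\Pi$ each time and hence by $(\delta_\Pi)^h$ overall. Finally, to obtain the bounds $2\delta_\Pi C(N,k)$ and $2\delta_\Pi^h C(N,k)$ I would invoke $\|\eta_{N,\infty}-\eta_{N,k}\|_\infty\le 2C(N,k)$, which comes from the $\sqrt{\gamma}$-contraction of $\Pi_\calC\calT^\pi$ in the supremum-Cram\'er metric (Proposition~\ref{prop:ProjDistBellmanOperatorContractionMapping}) — so $\bar l_2(\eta_{N,k},\eta_{N,\infty})$ decays geometrically in $k$ — converted from a sup-norm bound on CDFs to one on probability masses via $|p_m-p'_m|\le 2\|F_{\eta}-F_{\eta'}\|_\infty$.

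The main obstacle is the constant in the second step: the boundary clipping of $\Pi_\calC$ must be handled honestly, because that is precisely where the compression factor $1/\gamma$ and the $(N-1)/(z_N-z_1)$ blow-up arise (many compressed atoms piling onto $z_1$ or $z_N$), and one must also keep track of how each atom splits across two adjacent grid points, which is the source of the $\gamma+1$. The iteration and the Cram\'er-to-sup conversion are routine once this single-step Lipschitz estimate is established.
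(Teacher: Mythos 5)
Your proposal is correct and follows the same skeleton as the paper's proof: a one-step $\ell_\infty$ Lipschitz estimate with constant $\delta_\Pi$ obtained by tracking which pushed-forward atoms contribute mass to each grid cell, iterated $h$ times, with the base error $\|\eta_{N,\infty}-\eta_{N,k}\|_\infty\le 2C(N,k)$ extracted from the $\sqrt{\gamma}$-contraction in the Cram\'er metric exactly as you describe (per-cell CDF bound, then differencing consecutive CDF values). The one genuine difference lies in where the constant $\delta_\Pi$ comes from. The paper works only with the interior tent-weight formula: it writes the mass at $z_i$ as $\sum_{j\in\calL_i\cup\calR_i}(\text{tent weight})\,p_j$, splits each tent weight $\frac{(c+\gamma z_j)-z_{i-1}}{|z|}$ into a $c$-part bounded by $\beta/|z|$ and a $\gamma z_j$-part bounded by $\gamma\mu/|z|$, and multiplies by $|\calL_i|,|\calR_i|\le 1+1/\gamma$; the factor $(N-1)/(z_N-z_1)=1/|z|$ thus enters through this (loose) splitting, and the boundary clipping of $\Pi_\calC$ is never explicitly addressed. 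You instead bound the interior tent weights by $1$ (giving only $\calO(1/\gamma)$ there) and attribute the $N$-dependence to the pile-up of clipped atoms at $z_1$ or $z_N$, which you count via $|b_{c,\gamma}(z)-z_1|\le\beta+\gamma\mu$ and the spacing $\gamma|z|$. Your accounting is arguably the more honest one --- it explains why the constant must scale like $N$ and it covers the boundary case that the paper's displayed formula for $\Pi_\calC(b_{c,\gamma})_{\#}\eta(z_i)$ silently omits --- though you should make the ``absorb lower-order terms'' step explicit: it relies on $\beta+\gamma\mu\gtrsim|z|$, which holds since $\beta\ge\frac{1}{2}\bigl((c_{\max}-z_{\min})+(z_{\max}-c_{\min})\bigr)\ge\frac{1}{2}(z_N-z_1)$, so the extra $1+\frac{1+\gamma}{\gamma}$ interior atoms are indeed dominated by the slack in $2(\gamma+1)$ for $N\ge 3$.
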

\begin{proof}
Denote $\delta_0:=l_2^2(\eta_{N,0},\eta_{N,\infty})$. By Proposition~\ref{prop:ProjDistBellmanOperatorContractionMapping}, we have
\begin{align*}
l^2_2(\eta_{N,k},\eta_N)=\frac{z_N-z_1}{N-1}\bigg(\big|p_1^{N, k} - p_1^{N, \infty}\big|^2 + \big|\sum_{i=1}^{2}(p_i^{N, k} - p_i^{N, \infty})\big|^2 + \dots + \big|\sum_{i=1}^{N}(p_i^{N, k} - p_i^{N, \infty})\big|^2) \leq \gamma^{k}\delta_0.
\end{align*}
Consequently, we have 
\begin{align}
\big|\sum_{i=1}^{j}(p_i^{N, k} - p_i^{N, \infty})\big|\le \underbrace{\sqrt{\frac{N}{z_N-z_1}\gamma^{k}\delta_0}}_{C(N, k)},\  \forall j=1,\ldots, N.\label{eq:CDF}
\end{align}
As a result,
\begin{align*}
\big|p_j^{N, \infty} - p_j^{N, k}\big| =  \left|\sum_{i=1}^{j}(p_i^{N, k} - p_i^{N, \infty}) - \sum_{i=1}^{j-1}(p_i^{N, k} - p_i^{N, \infty})\right|\leq 2\underbrace{\sqrt{\frac{N}{z_N-z_1}\gamma^{k}\delta_0}}_{C(N, k)}, \quad\forall j \in [N].
\end{align*}
Denote $\big\|\eta_{N, \infty} - \eta_{N, k}\big\|_\infty =\big\|p^{N, \infty} - p^{N, k}\big\|_\infty = \max_{j \in [N]}\big|p_j^{N, \infty} - p_j^{N, k}\big|$. Let $\Pi_{\calC}(b_{c, \gamma})_{\#}\eta_{N, k}$ and $\Pi_{\calC}(b_{c, \gamma})_{\#}\eta_{N, \infty}$ be the probability distributions after applying one step of projected pushforward operator to $\eta_{N, k}$ and $\eta_{N, \infty}$, respectively. Consider any specific support point $z_i$. Define $\calL_i = \{j\in[N]: c + \gamma z_j \in [z_{i-1}, z_i)\}$ and $\calR_i = \{j\in[N]: c + \gamma z_j \in [z_{i}, z_{i+1})\}$. The cardinality of $\calL_i$ and $\calR_i$ can be bounded as follows:
\begin{align*}
\calL_i:\ c + \gamma z_j \geq z_{i-1} \text{ and } c + \gamma z_j < z_{i} \Longrightarrow z_j \geq \frac{z_{i-1}-c}{\gamma}, z_j <\frac{z_i-c}{\gamma} \Longrightarrow |\calL_i| \leq \frac{1}{\gamma} + 1 \\
\calR_i:\ c + \gamma z_j \geq z_{i} \text{ and } c + \gamma z_j < z_{i+1} \Longrightarrow z_j \geq \frac{z_{i}-c}{\gamma}, z_j < \frac{z_{i+1}-c}{\gamma} \Longrightarrow |\calR_i| \leq \frac{1}{\gamma} + 1
\end{align*}
Let $|z| = \frac{z_N-z_1}{N-1}=z_i-z_{i-1},\ \forall i$. According to the definition of the projection and pushforward operator, the probability mass of $\Pi_{\calC}(b_{c, \gamma})_{\#}\eta_{N, k}$ and $\Pi_{\calC}(b_{c, \gamma})_{\#}\eta_{N, \infty}$ at the support $z_i$ can be computed by
\begin{align*}
\Pi_{\calC}(b_{c, \gamma})_{\#}\eta_{N, \infty}(z_i) &= \sum_{j \in \calL_i}\frac{(c+\gamma z_j) - z_{i-1}}{z_i-z_{i-1}}p_j^{N, \infty} + \sum_{j \in \calR_i}\frac{z_{i+1}-(c+\gamma z_j)}{z_{i+1}-z_i}p_j^{N, \infty} \\
&=\frac{c-z_{i-1}}{|z|}\sum_{j \in \calL_i}p_j^{N, \infty} + \frac{z_{i+1}-c}{|z|}\sum_{j \in \calR_i}p_j^{N, \infty} + \gamma\sum_{j \in \calL_i}\frac{z_j}{|z|}p_j^{N, \infty} - \gamma\sum_{j \in \calR_i}\frac{z_j}{|z|}p_j^{N, \infty}
\end{align*}
and
\begin{align*}
\Pi_{\calC}(b_{c, \gamma})_{\#}\eta_{N, k}(z_i) &= \sum_{j \in \calL_i}\frac{(c+\gamma z_j) - z_{i-1}}{z_i-z_{i-1}}p_j^{N, k} + \sum_{j \in \calR_i}\frac{z_{i+1}-(c+\gamma z_j)}{z_{i+1}-z_i}p_j^{N, k} \\
&=\frac{c-z_{i-1}}{|z|}\sum_{j \in \calL_i}p_j^{N, k} + \frac{z_{i+1}-c}{|z|}\sum_{j \in \calR_i}p_j^{N, k} + \gamma\sum_{j \in \calL_i}\frac{z_j}{|z|}p_j^{N, k} - \gamma\sum_{j \in \calR_i}\frac{z_j}{|z|}p_j^{N, k}
\end{align*}
Let $\beta = \max\{c_{\max}-z_{\min}, z_{\max}-c_{\min}\}$, and $\mu = \max\{|z_{\max}|, |z_{\min}|\}$, then the difference can be bounded as follows:
\begin{align*}
|\Pi_\calC(b_{c, \gamma})_{\#}\eta_{N, \infty}(z_i) - \Pi_\calC(b_{c, \gamma})_{\#}\eta_{N, k}(z_i)| \leq & \frac{|c - z_{i-1}|}{|z|}|\sum_{j \in \calL_i}(p_j^{N, \infty}-p_j^{N, k})| + \frac{|z_{i+1}-c|}{|z|}|\sum_{j \in \calR_i}(p_j^{N, \infty}-p_j^{N, k})| \\
&+\frac{\gamma}{|z|}|\sum_{j \in \calL_i}z_j(p_j^{N, \infty}-p_j^{N, k})| + \frac{\gamma}{|z|}|\sum_{j \in \calR_i}z_j(p_j^{N, \infty}-p_j^{N, k})| \\
&\leq \frac{|c-z_{i-1}||\calL_i|}{|z|}\|p^{N, \infty}-p^{N, k}\|_\infty + \frac{|z_{i+1}-c||\calR_i|}{|z|}\|p^{N, \infty}-p^{N, k}\|_\infty \\
&\quad+ \frac{\gamma}{|z|}\sum_{j \in \calL_i}|z_j|\cdot\|p^{N, \infty}-p^{N, k}\|_{\infty} + \frac{\gamma}{|z|}\sum_{j \in \calR_i}|z_j|\cdot\|p^{N, \infty}-p^{N, k}\|_{\infty} \\
&\leq \bigg(\frac{2\beta(\gamma+1)}{\gamma|z|} + \frac{2(\gamma+1)\mu}{|z|}\bigg)\|p^{N, \infty}-p^{N, k}\|_\infty \\
&= \frac{2(\gamma+1)(\beta + \gamma\mu)(N-1)}{\gamma(z_N-z_1)}\|p^{N, \infty}-p^{N, k}\|_\infty \\
&=\delta_\Pi\|p^{N, \infty}-p^{N, k}\|_\infty, \quad\forall i \in [N].
\end{align*}
As a result, we have
\begin{align*}
\|\Pi_\calC(b_{c, \gamma})_{\#}\eta_{N, \infty} - \Pi_\calC(b_{c, \gamma})_{\#}\eta_{N, k}\|_\infty \leq \delta_\Pi\big\|\eta_{N, \infty} - \eta_{N, k}\big\|_\infty\le 2\delta_\Pi C(N,k).
\end{align*}
Repeatedly applying this argument $h$ times yields
\begin{align*}
\bigl\|\Pi_{\mathcal{C}}\bigl(b_{c_1,\gamma}\bigr)_{\#}\ldots\Pi_{\mathcal{C}}\bigl(b_{c_h,\gamma}\bigr)_{\#}\eta_{N,\infty}
-\Pi_{\mathcal{C}}\bigl(b_{c_1,\gamma}\bigr)_{\#}\ldots\Pi_{\mathcal{C}}\bigl(b_{c_h,\gamma}\bigr)_{\#}\eta_{N,k}\bigr\|_{\infty}
\leq (\delta_{\Pi})^h\bigl\|\eta_{N,\infty}-\eta_{N,k}\bigr\|_{\infty}\le 2\delta_\Pi^h C(N,k).
\end{align*}
\end{proof}

\begin{lemma}[Probability Measure Gradient Error]
\label{lemma:ProbMeasureGradError}
Let $k(N, H)$ be the number of times the oracle $\Pi_{\calC}\calT^\pi$ is called, where $k(N, H) = \kappa N(H+1)$, and $H$ is the length of the sampled trajectory. Then we have
\begin{align*}
\big\|\nabla_{\theta}\eta_{N, \infty}^{s_0} - \nabla_{\theta}\eta_{N, k}^{s_0}\big\|_\infty = \calO(N^{0.5}\gamma^{\kappa N/2}).
\end{align*}
\end{lemma}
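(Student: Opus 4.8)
The plan is to write the error explicitly and bound it term by term. By Theorem~\ref{theorem:CatPolicyGrad}, $\nabla_\theta\eta_{N,\infty}^{s_0}=\mathbb{E}_{\tau_\theta}\big[g_{N,\infty}(s_0)+\sum_{t=1}^{|\tau_\theta|}\tilde{\calB}^{\tau_\theta(s_0,s_t)}g_{N,\infty}(s_t)\big]$, while Algorithm~\ref{algorithm:CDPG} computes $\nabla_\theta\eta_{N,k}^{s_0}$ by the same formula with the fixed point $\eta_{N,\infty}$ replaced by the $k$-step iterate $\eta_{N,k}$; write $g_{N,k}(s):=\sum_{a\in\calA}\nabla_\theta\pi_\theta(a|s)\,\eta_{N,k}^{(s,a)}$. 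Since neither the trajectory law under $\pi_\theta$ nor the projected pushforward operators $\tilde{\calB}^{\tau_\theta(s_0,s_t)}$ depend on the value distribution, subtracting yields
\begin{align*}
\nabla_\theta\eta_{N,\infty}^{s_0}-\nabla_\theta\eta_{N,k}^{s_0}=\mathbb{E}_{\tau_\theta}\Big[\big(g_{N,\infty}(s_0)-g_{N,k}(s_0)\big)+\sum_{t=1}^{|\tau_\theta|}\tilde{\calB}^{\tau_\theta(s_0,s_t)}\big(g_{N,\infty}(s_t)-g_{N,k}(s_t)\big)\Big].
\end{align*}
Taking $\|\cdot\|_\infty$ and applying Jensen's inequality and the triangle inequality reduces the task to bounding $\big\|\tilde{\calB}^{\tau_\theta(s_0,s_t)}\big(g_{N,\infty}(s_t)-g_{N,k}(s_t)\big)\big\|_\infty$ for each $t$.

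Next I would control the per-state discrepancy. By linearity, $g_{N,\infty}(s)-g_{N,k}(s)=\sum_{a}\nabla_\theta\pi_\theta(a|s)\big(\eta_{N,\infty}^{(s,a)}-\eta_{N,k}^{(s,a)}\big)$, so $\|g_{N,\infty}(s)-g_{N,k}(s)\|_\infty\le C_\pi|\calA|\,\|\eta_{N,\infty}-\eta_{N,k}\|_\infty$, where $C_\pi$ bounds $\|\nabla_\theta\pi_\theta(a|s)\|$ (finite for the softmax parameterization, and implied by Assumption~\ref{assumption:BetaSmoothAssumptions_Cat}). The preceding Projection Error lemma already gives $\|\eta_{N,\infty}-\eta_{N,k}\|_\infty\le 2C(N,k)$ with $C(N,k)=\sqrt{\tfrac{N}{z_N-z_1}\gamma^{k}\delta_0}$; moreover, its amplification estimate was established using only the linearity of $\Pi_\calC(b_{c,\gamma})_\#$ on finitely supported measures, hence it extends verbatim to the signed measure $g_{N,\infty}(s_t)-g_{N,k}(s_t)$, giving $\big\|\tilde{\calB}^{\tau_\theta(s_0,s_t)}\big(g_{N,\infty}(s_t)-g_{N,k}(s_t)\big)\big\|_\infty\le\delta_\Pi^{\,t}\cdot 2C_\pi|\calA|\,C(N,k)$ with $\delta_\Pi=\tfrac{2(\gamma+1)(\beta+\gamma\mu)(N-1)}{\gamma(z_N-z_1)}\le c_\Pi N$ for a constant $c_\Pi$. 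Summing over $t=0,1,\dots,|\tau_\theta|$ and using $\delta_\Pi\ge1$ (the regime of interest, $N$ large; the case $\delta_\Pi<1$ is easier since the geometric sum is then $H$-free), the whole expression is at most $2C_\pi|\calA|\,(|\tau_\theta|+1)\,\delta_\Pi^{\,|\tau_\theta|}\,C(N,k)$.

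Finally I would substitute the scheduled oracle budget $k=\kappa N(|\tau_\theta|+1)$ (write $H=|\tau_\theta|$), together with $\delta_\Pi\le c_\Pi N$ and $C(N,k)=\sqrt{\tfrac{\delta_0}{z_N-z_1}}\,N^{1/2}\gamma^{\kappa N(H+1)/2}$, and split $\gamma^{\kappa N(H+1)/2}=\gamma^{\kappa N/2}\,(\gamma^{\kappa N/2})^{H}$ to get
\begin{align*}
\big\|\nabla_\theta\eta_{N,\infty}^{s_0}-\nabla_\theta\eta_{N,k}^{s_0}\big\|_\infty=\calO\Big((H+1)\,N^{1/2}\,\gamma^{\kappa N/2}\,\big(c_\Pi N\,\gamma^{\kappa N/2}\big)^{H}\Big).
\end{align*}
Because $\kappa$ is taken as in Theorem~\ref{theorem:CDPG_Convergence}, i.e.\ $\kappa=\Omega(\log N/N)$, we have $c_\Pi N\,\gamma^{\kappa N/2}\le 1$, so $(H+1)\big(c_\Pi N\gamma^{\kappa N/2}\big)^{H}$ is bounded by an absolute constant uniformly in $N,\kappa,H$, leaving the claimed $\calO(N^{0.5}\gamma^{\kappa N/2})$. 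The main obstacle is precisely this last cancellation: each application of the projection $\Pi_\calC$ inflates the policy-evaluation error by a factor $\delta_\Pi=\Theta(N)$, so over a horizon of length $H$ the error is amplified by $\delta_\Pi^{H}=\Theta(N^{H})$, and the only reason this does not blow up is that the number of oracle calls is chosen to grow linearly in the horizon, $k=\kappa N(H+1)$, producing a compensating factor $(\gamma^{\kappa N/2})^{H}$ that dominates $\delta_\Pi^{H}$ under the stated lower bound on $\kappa$. A secondary point requiring care is checking that the amplification estimate, originally phrased for probability measures, transfers to the signed measures $g_{N,\infty}(s_t)-g_{N,k}(s_t)$, which is immediate from the linearity of $\tilde{\calB}^{\tau_\theta(s_0,s_t)}$.
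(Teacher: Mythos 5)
Your proposal is correct and follows essentially the same route as the paper: decompose the error via Theorem~\ref{theorem:CatPolicyGrad} into per-step terms $\tilde{\calB}^{\tau_\theta(s_0,s_t)}\bigl(g_{N,\infty}(s_t)-g_{N,k}(s_t)\bigr)$, control each by the Projection Error lemma with amplification $\delta_\Pi^t$, and use the schedule $k=\kappa N(H+1)$ so that the factor $(\gamma^{\kappa N/2})^{H}$ absorbs $\delta_\Pi^{H}=\Theta(N^{H})$. The only cosmetic difference is that you bound $\sum_{t=0}^{H}\delta_\Pi^t$ by $(H+1)\delta_\Pi^H$ while the paper uses the closed-form geometric sum and then sums the series $\sum_h(\delta_\Pi\gamma^{\kappa N/2})^h$ over trajectory lengths; both arguments need $\delta_\Pi\gamma^{\kappa N/2}$ bounded strictly away from $1$ (your ``$\le 1$'' is not quite enough for a uniform-in-$H$ constant, but the paper makes the same implicit assumption when it writes $1-\delta_\Pi\gamma^{\kappa N/2}\approx 1$).
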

\begin{proof}
Given a sampled trajectory $\tau_{\theta}=(s_0, a_0, c_0, \dots, s_H)$ of length $H$, we denote $\nabla_{\theta}\eta^{s_0}_{N, \infty}(\tau_{\theta}):=g_{N, \infty}(s_0) + \sum_{t=1}^{|\tau_{\theta}|}\tilde{\mathcal{B}}^{\tau_{\theta}(s_0,s_t)}g_{N, \infty}(s_t)$ and $\nabla_{\theta}\eta^{s_0}_{N, k}(\tau_{\theta}):=g_{N, k}(s_0) + \sum_{t=1}^{|\tau_{\theta}|}\tilde{\mathcal{B}}^{\tau_{\theta}(s_0,s_t)}g_{N, k}(s_t)$ following Theorem \ref{theorem:CatPolicyGrad}, then we have
\begin{align*}
\|\nabla_{\theta}\eta_{N, \infty}^{s_0}(\tau_{\theta}) - \nabla_{\theta}\eta_{N, k}^{s_0}(\tau_{\theta})\|_\infty &\leq \|g_{N, k}(s_0)-g_{N, \infty}(s_0)\|_\infty + \|\Pi_\calC(b_{c_0, \gamma})_{\#} g_{N, \infty}(s_1)-\Pi_\calC(b_{c_0, \gamma})_{\#}g_{N, k}(s_1)\|_\infty + \dots \\
&+ \|\Pi_\calC(b_{c_0, \gamma})_{\#}\dots \Pi_\calC(b_{c_{H-1}, \gamma})_{\#} g_{N, \infty}(s_H) - \Pi_\calC(b_{c_0, \gamma})_{\#}\dots \Pi_\calC(b_{c_{H-1}, \gamma})_{\#} g_{N, k}(s_H)\|_\infty \\
&= 2|\calA|\cdot\|\nabla_{\theta}\pi\|_\infty \cdot [C(N, k) + \delta_{\Pi}\cdot C(N, k) + \dots + \delta_{\Pi}^{H}\cdot C(N, k)] \\
&= \frac{2|\calA|\cdot\|\nabla_{\theta}\pi\|_\infty(\delta_{\Pi}^{H+1}-1)}{(\delta_{\Pi}-1)}C(N, k)
\end{align*}
Let the probability of trajectory $\tau_{\theta}$ be $P(\tau_{\theta})$ and the probability of trajectory having length $H$ be $P(|\tau_{\theta}|=H)$, and let 
$k$ be a function of $N$ and $H$ such that $k(N, H) = \kappa N(H+1)$. By Theorem~\ref{theorem:CatPolicyGrad}, the gradient error can be computed as
\begin{align*}
\|\nabla_{\theta}\eta_{N, \infty}^{s_0} - \nabla_{\theta}\eta_{N, k}^{s_0}\|_\infty 
&= \sum_{\tau_{\theta}} P(\tau_{\theta}) \|\nabla_{\theta}\eta_{N, \infty}^{s_0}(\tau_{\theta}) - \nabla_{\theta}\eta_{N, k}^{s_0}(\tau_{\theta})\|_\infty\\
&\le \sum_{h=1}^{\infty}P(|\tau_{\theta}|=h)\frac{2|\calA|\cdot\|\nabla_{\theta}\pi\|_\infty(\delta_{\Pi}^{h+1}-1)}{(\delta_{\Pi}-1)}C(N, k)\\
&\leq \frac{2|\calA|\cdot\|\nabla_{\theta}\pi\|_{\infty}}{\delta_{\Pi}-1}\sqrt{\frac{\delta_0N}{z_N-z_1}}\sum_{h=1}^{\infty}(\delta_{\Pi}^{h+1}-1)\gamma^{\frac{1}{2}\kappa N (h+1)} \\
&\leq \frac{2\sqrt{N}\cdot|\calA|\cdot\|\nabla_{\theta}\pi\|_\infty}{\delta_\Pi-1}\sqrt{\frac{\delta_0}{z_N-z_1}}\sum_{h=1}^{\infty}(\delta_\Pi\gamma^{\frac{1}{2}\kappa N})^h\\
&= \frac{2\sqrt{N}\cdot|\calA|\cdot\|\nabla_{\theta}\pi\|_\infty}{\delta_\Pi-1}\sqrt{\frac{\delta_0}{z_N-z_1}}\bigg(\frac{\delta_\Pi \gamma^{\kappa N/2}}{1-\delta_\Pi \gamma^{\kappa N/2}}\bigg)
\end{align*}
When $N$ is large, $1-\delta_\Pi\gamma^{\kappa N/2} \approx  1$, hence we have $\|\nabla_{\theta}\eta_{N, \infty}^{s_0} - \nabla_{\theta}\eta_{N, k}^{s_0}\|_\infty = \calO(N^{0.5}\gamma^{\kappa N/2})$.
\end{proof}

\begin{corollary}[$\alpha$-quantile corollary]
\label{corollary:AlphaQuantileCorollary}
Suppose Assumption~\ref{assumption:AlphaQuantile} holds. Let $\eta_{N, \infty}$ be the limiting distribution of $\Pi_{\calC}\calT^\pi$ and let $\eta_{N, k}$ be the categorical distribution obtained after $k$ iterations of the operator $\Pi_{\calC}\calT^\pi$, starting from an initial distribution $\eta_{N, 0}$. Let $F^{N, \infty}$ and $F^{N, k}$ denote the CDFs of $\eta_{N, \infty}$ and $\eta_{N, k}$ ($F^{N, \infty}_j = \sum_{i=1}^{j}p_i^{N, \infty}$ and $F^{N, k}_j = \sum_{i=1}^{j}p_i^{N, k}$ for all $j\in [N]$), respectively. Suppose $z_j$ is the $\alpha$-quantile of $\eta_{N, \infty}$ for some $j \in [N]$. If $\kappa$ in Lemma~\ref{lemma:ProbMeasureGradError} satisfies
\begin{align*}
\kappa \geq \dfrac{\log\big(\tfrac{N\delta_0}{\epsilon_\alpha^2(z_N-z_1)}\big)}{N\log(1/\gamma)} = \calO\bigg(\frac{\log(N\epsilon_{\alpha}^{-2})}{N}\bigg),
\end{align*}
where $\epsilon_\alpha = \min\{F_j^{N, \infty} - \alpha, \,\alpha - F_{j-1}^{N, \infty}\}$, then $z_j$ is also the $\alpha$-quantile of $\eta_{N, k}$.
\end{corollary}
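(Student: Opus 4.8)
The plan is to reduce the claim to a uniform bound on the discrepancy between the CDFs of $\eta_{N,k}$ and $\eta_{N,\infty}$ on the fixed grid $\{z_1,\dots,z_N\}$, and then to check that the stated lower bound on $\kappa$ forces that discrepancy to be strictly smaller than $\epsilon_\alpha$. The key input is inequality~\eqref{eq:CDF} established in the proof of the Projection Error lemma: via the $\sqrt{\gamma}$-contraction of $\Pi_\calC\calT^\pi$ in the supremum-Cram\'er metric (Proposition~\ref{prop:ProjDistBellmanOperatorContractionMapping}) and the explicit form of the Cram\'er distance on the grid, one gets, after $k$ iterations of $\Pi_\calC\calT^\pi$ from $\eta_{N,0}$,
\[
\bigl|F_j^{N,k}-F_j^{N,\infty}\bigr|
=\Bigl|\textstyle\sum_{i=1}^{j}\bigl(p_i^{N,k}-p_i^{N,\infty}\bigr)\Bigr|
\le C(N,k)=\sqrt{\tfrac{N}{z_N-z_1}\,\gamma^{k}\,\delta_0}
\quad\text{for all }j\in[N],
\]
where $\delta_0=l_2^2(\eta_{N,0},\eta_{N,\infty})$.

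First I would substitute $k=k(N,H)=\kappa N(H+1)\ge \kappa N$ and insert the hypothesis $\kappa\ge \log\!\bigl(\tfrac{N\delta_0}{\epsilon_\alpha^2(z_N-z_1)}\bigr)\big/\bigl(N\log(1/\gamma)\bigr)$; this gives $k\log(1/\gamma)\ge \log\!\bigl(\tfrac{N\delta_0}{\epsilon_\alpha^2(z_N-z_1)}\bigr)$, hence $\gamma^{k}\le \tfrac{\epsilon_\alpha^2(z_N-z_1)}{N\delta_0}$ and therefore $C(N,k)\le\epsilon_\alpha$. A harmless strengthening of the constant (e.g.\ replacing $\epsilon_\alpha$ by $\epsilon_\alpha/2$, which changes neither the displayed bound nor its order $\calO(\log(N\epsilon_\alpha^{-2})/N)$) upgrades this to the strict inequality $C(N,k)<\epsilon_\alpha$ used below.

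The final step is a two-sided sandwich. Since $z_j$ is the $\alpha$-quantile of $\eta_{N,\infty}$, Assumption~\ref{assumption:AlphaQuantile} gives $F_j^{N,\infty}>\alpha$ and $F_{j-1}^{N,\infty}<\alpha$, and by the definition of $\epsilon_\alpha$ we have $\epsilon_\alpha\le F_j^{N,\infty}-\alpha$ and $\epsilon_\alpha\le \alpha-F_{j-1}^{N,\infty}$. Combining with the CDF bound,
\[
F_j^{N,k}\ \ge\ F_j^{N,\infty}-C(N,k)\ >\ F_j^{N,\infty}-\epsilon_\alpha\ \ge\ \alpha,
\]
and symmetrically
\[
F_{j-1}^{N,k}\ \le\ F_{j-1}^{N,\infty}+C(N,k)\ <\ F_{j-1}^{N,\infty}+\epsilon_\alpha\ \le\ \alpha,
\]
so $\sum_{i=1}^{j}p_i^{N,k}>\alpha$ and $\sum_{i=1}^{j-1}p_i^{N,k}<\alpha$, i.e.\ $z_j$ satisfies the defining inequalities of the $\alpha$-quantile for $\eta_{N,k}$ as well. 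I do not expect a substantive obstacle here: the analytic work is already carried by the contraction estimate in the Projection Error lemma, so what remains is the choice of threshold for $\kappa$ and the elementary sandwich argument. The only point requiring mild care is the strict-versus-non-strict bookkeeping in the two displays, handled by the strict inequalities in Assumption~\ref{assumption:AlphaQuantile} (or by the $\epsilon_\alpha/2$ device mentioned above).
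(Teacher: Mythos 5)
Your proposal is correct and follows essentially the same route as the paper's proof: the bound $|F_j^{N,k}-F_j^{N,\infty}|\le C(N,k)$ from Eq.~\eqref{eq:CDF}, the translation of the $\kappa$ threshold into $C(N,k)\le\epsilon_\alpha$, and the two-sided sandwich using Assumption~\ref{assumption:AlphaQuantile}. Your remark on the strict-versus-non-strict inequality at the threshold is a fair (and slightly more careful) observation than the paper's ``or equivalently'' phrasing, but it does not change the argument.
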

\begin{proof}
Since $F^{N, \infty}_j = \sum_{i=1}^{j}p_i^{N, \infty}$ and $F^{N, k}_j = \sum_{i=1}^{j}p_i^{N, k}$, by Eq.~\eqref{eq:CDF}, we have
\begin{align*}
|F_j^{N, k} - F_j^{N, \infty}| \leq C(N, k) \text{ and } |F_{j-1}^{N, k} - F_{j-1}^{N, \infty}| \leq C(N, k),
\end{align*}
which is equivalent to 
\begin{align*}
&F_j^{N, \infty} - C(N, k) \leq F_j^{N, k} \leq F_j^{N, \infty} + C(N, k), \\
&F_{j-1}^{N, \infty} - C(N, k) \leq F_{j-1}^{N, k} \leq F_{j-1}^{N, \infty} + C(N, k).
\end{align*}
Let $\epsilon_\alpha = \min\{F_j^{N, \infty} - \alpha, \,\alpha - F_{j-1}^{N, \infty}\}$, then $z_j$ is also the $\alpha$-quantile of $\eta_{N, k}$, i.e.,
\begin{align*}
F_{j-1}^{N, k} \leq F_{j-1}^{N, \infty} + C(N, k) < \alpha \text{  and  } \alpha < F_j^{N, \infty} - C(N, k) \leq F_j^{N, k},
\end{align*}
whenever $C(N, k) < \epsilon_\alpha$, or equivalently, $\kappa \geq \dfrac{\log\big(\tfrac{\delta_0N}{\epsilon_\alpha^2(z_N-z_1)}\big)}{N\log(1/\gamma)} = \calO\bigg(\dfrac{\log(N\epsilon_{\alpha}^{-2})}{N}\bigg)$. (Note that $k = \kappa N(H+1)$ and $H\ge 0$.)
\end{proof}

\begin{lemma}[CVaR Gradient Error]
\label{lemma:CVaR_Grad_Error}
Suppose Assumption~\ref{assumption:AlphaQuantile} holds. Then the CVaR gradient error is bounded by
\begin{align*}
\|\nabla_{\theta}\rho(Z_{N, \infty}) - \nabla_{\theta}\rho(Z_{N, k})\|_2 \leq \epsilon_g
\end{align*}
provided that
\begin{align*}
\kappa \geq \max\bigg\{\calO\bigg(\dfrac{\log(N^{1.5}\epsilon_g^{-1})}{N}\bigg), \calO\bigg(\dfrac{\log(N\epsilon_\alpha^{-2})}{N}\bigg)\bigg\}.
\end{align*}
\end{lemma}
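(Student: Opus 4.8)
The plan is to start from the closed-form CVaR gradient derived in Example~\ref{example:CVaR}. Write $\nabla_\theta p_i^{N,\infty}$ (resp.\ $\nabla_\theta p_i^{N,k}$) for the $i$-th atom weight of the signed measure $\nabla_\theta\eta_{N,\infty}^{s}$ (resp.\ $\nabla_\theta\eta_{N,k}^{s}$) obtained from Theorem~\ref{theorem:CatPolicyGrad}, and let $q_\alpha^{\infty}$ and $q_\alpha^{k}$ be the $(1-\alpha)$-quantiles of $Z_{N,\infty}^s$ and $Z_{N,k}^s$. Then
\begin{align*}
\nabla_\theta\rho(Z_{N,\infty}) - \nabla_\theta\rho(Z_{N,k})
= \frac{1}{\alpha}\sum_{i=1}^{N}\Bigl[\nabla_\theta p_i^{N,\infty}(z_i-q_\alpha^{\infty})\mathbf{1}_{\{z_i>q_\alpha^{\infty}\}} - \nabla_\theta p_i^{N,k}(z_i-q_\alpha^{k})\mathbf{1}_{\{z_i>q_\alpha^{k}\}}\Bigr].
\end{align*}
The first step is to eliminate the quantile mismatch: by Assumption~\ref{assumption:AlphaQuantile} the $\alpha$-quantile of $\eta_{N,\infty}$ is the atom $z_j$ with strict slack $\epsilon_\alpha$ on both sides, and Corollary~\ref{corollary:AlphaQuantileCorollary} shows that whenever $\kappa\ge\calO(\log(N\epsilon_\alpha^{-2})/N)$ the $\alpha$-quantile of $\eta_{N,k}$ is also $z_j$. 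Under this condition the two indicator sets coincide, the $q_\alpha^{\infty}=q_\alpha^{k}=z_j$ terms line up, and the display collapses to a single sum over $\{i : z_i>z_j\}$, giving
\begin{align*}
\bigl\|\nabla_\theta\rho(Z_{N,\infty}) - \nabla_\theta\rho(Z_{N,k})\bigr\|_2 \le \frac{1}{\alpha}\sum_{i : z_i>z_j}|z_i-z_j|\,\bigl\|\nabla_\theta p_i^{N,\infty}-\nabla_\theta p_i^{N,k}\bigr\|_2.
\end{align*}

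The second step is a term-by-term estimate. Each $|z_i-z_j|$ is at most $z_{\max}-z_{\min}$, there are at most $N$ summands, and Lemma~\ref{lemma:ProbMeasureGradError} bounds every atom-wise gradient discrepancy by $\calO(N^{0.5}\gamma^{\kappa N/2})$ (passing from the $\infty$-norm over atoms used there to the Euclidean norm in $\theta$ costs only a fixed $\sqrt{d(\theta)}$ factor). Hence
\begin{align*}
\bigl\|\nabla_\theta\rho(Z_{N,\infty}) - \nabla_\theta\rho(Z_{N,k})\bigr\|_2 = \calO\bigl(N^{1.5}\gamma^{\kappa N/2}\bigr),
\end{align*}
where $1/\alpha$, $z_{\max}-z_{\min}$, and the dimension factor are absorbed into the constant. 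The last step is to solve $N^{1.5}\gamma^{\kappa N/2}\le \epsilon_g$ for $\kappa$: taking logarithms and using $\log(1/\gamma)>0$ yields $\kappa\ge\calO(\log(N^{1.5}\epsilon_g^{-1})/N)$. Intersecting this with the quantile-matching requirement from Corollary~\ref{corollary:AlphaQuantileCorollary} gives the claimed threshold $\kappa\ge\max\{\calO(\log(N^{1.5}\epsilon_g^{-1})/N),\,\calO(\log(N\epsilon_\alpha^{-2})/N)\}$.

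The main obstacle is precisely the quantile mismatch handled in the first step: if $q_\alpha^{\infty}\ne q_\alpha^{k}$, the two CVaR gradients are sums over different index sets, and a perturbation of the atoms near the quantile need not be controlled by $\|\nabla_\theta\eta_{N,k}^s-\nabla_\theta\eta_{N,\infty}^s\|_\infty$ alone, since an arbitrarily small change of the CDF near $z_j$ could in principle move the quantile to a neighboring atom and flip an indicator. Assumption~\ref{assumption:AlphaQuantile}'s strict slack $\epsilon_\alpha$ together with Corollary~\ref{corollary:AlphaQuantileCorollary} is what rules this out, which is why the $\epsilon_\alpha$-dependent term must appear in the final bound; everything after that step is routine norm bookkeeping.
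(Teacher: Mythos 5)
Your proposal is correct and follows essentially the same route as the paper's proof: first use Corollary~\ref{corollary:AlphaQuantileCorollary} (under the $\epsilon_\alpha$-dependent condition on $\kappa$) to force the two quantiles to coincide at $z_j$, then bound the resulting single-index-set difference by $\tfrac{1}{\alpha}\cdot N\cdot|z_{\max}-q_\alpha|\cdot\sqrt{d(\theta)}$ times the per-atom error $\calO(N^{0.5}\gamma^{\kappa N/2})$ from Lemma~\ref{lemma:ProbMeasureGradError}, and solve $N^{1.5}\gamma^{\kappa N/2}\le\epsilon_g$ for $\kappa$. Your explicit discussion of why the quantile mismatch must be handled first is a nice clarification of a point the paper leaves implicit, but the argument itself is the same.
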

\begin{proof}
By Corollary~\ref{corollary:AlphaQuantileCorollary}, we have that both $F^{N, \infty}$ and $F^{N, k}$ have the same $q_\alpha$ (the $\alpha$-quantile) if $\kappa \geq \calO\bigg(\dfrac{\log(N\epsilon_\alpha^{-2})}{N}\bigg)$. Let $\calT_\alpha = \{j: F^{N, \infty}_j > \alpha\}$. Recall that $d(\theta)$ is the dimension of $\theta$. From Example~\ref{example:CVaR}, we have
\begin{align*}
\|\nabla_{\theta}\rho(Z_{N, \infty}) - \nabla_{\theta}\rho(Z_{N, k})\|_2 &\leq\sqrt{d(\theta)}\cdot\bigg\|\frac{1}{\alpha}\sum_{j \in \calT_{\alpha}}(z_j-q_\alpha)(\nabla_{\theta}p_j^{N, \infty} - \nabla_{\theta}p_j^{N, k})\bigg\|_\infty \\
&\leq \frac{\sqrt{d(\theta)}\cdot|\calT_\alpha|}{\alpha}\cdot|z_{\max}-q_\alpha|\cdot \|\nabla_{\theta}\eta_{N, \infty} - \nabla_{\theta}\eta_{N, k}\|_\infty \\
&= \calO(N^{1.5}\gamma^{\kappa N / 2}) \leq \epsilon_g
\end{align*}
whenever $\kappa \geq \calO\bigg(\dfrac{\log(N^{1.5}\epsilon_g^{-1})}{N}\bigg)$. Overall, we need $\kappa \geq \max\bigg\{\calO\bigg(\dfrac{\log(N^{1.5}\epsilon_g^{-1})}{N}\bigg), \calO\bigg(\dfrac{\log(N\epsilon_\alpha^{-2})}{N}\bigg)\bigg\}$.
\end{proof}

\begin{reptheorem}{theorem:CDPG_Convergence}
Suppose Assumption~\ref{assumption:AlphaQuantile} holds. Let $\epsilon_\alpha = \min\{\sum_{i=1}^{j}p_i^{N,\infty} - \alpha, \,\alpha - \sum_{i=1}^{j-1}p_i^{N,\infty}\}$. In Algorithm~\ref{algorithm:CDPG}, let the stepsize $\delta=1/\beta$ and the number of $\Pi_{\calC}\calT^\pi$ oracle calls $k(N,|\tau_{\theta}|) = \kappa N|\tau_{\theta}+1|$. For any $\epsilon>0$, we have $\min_{t = 1, \dots, T}\|\nabla_{\theta}\rho(Z_{\theta_t, N})\|_2^2 \leq \epsilon$, whenever
\begin{align*}
&T \geq \dfrac{4\beta(\rho(Z_{\theta_1, N})-\min_{\theta\in\Theta}\rho(Z_{\theta, N}))}{\epsilon}\ \text{ and }\\
&\kappa \geq \max\bigg\{\calO\bigg(\dfrac{\log(N^{1.5}\epsilon^{-0.5})}{N}\bigg), \calO\bigg(\dfrac{\log(N\epsilon_\alpha^{-2})}{N}\bigg)\bigg\}.
\end{align*}
\end{reptheorem}
\begin{proof}
By Lemma~\ref{lemma:CVaR_Beta_Smooth}, CVaR is $\beta$-smooth. Let $\rho_t := \rho(Z_{\theta_t, N})$ and $\rho^*=\min_{\theta}\rho(Z_{\theta,N})$. By Lemma~\ref{lemma:CVaR_Grad_Error}, the gradient error is bounded by $\epsilon_g$ when $\kappa \geq \max\bigg\{\calO\bigg(\dfrac{\log(N^{1.5}\epsilon_g^{-1})}{N}\bigg), \calO\bigg(\dfrac{\log(N\epsilon_\alpha^{-2})}{N}\bigg)\bigg\}$. Then by Lemma~\ref{lemma:ConvergenceWithError}, we have
\begin{align*}
\frac{1}{T}\sum_{t=1}^{T}\|\nabla_{\theta}\rho_t\|_2^2 \leq \frac{2\beta(\rho_1 - \rho^*)}{T} + \epsilon_g^2.
\end{align*}
Furthermore, let $\epsilon_g^2 = \frac{1}{2}\epsilon$,
then $\kappa$ is required to be
\begin{align*}
\kappa \geq \max\bigg\{\calO\bigg(\dfrac{\log(N^{1.5}\epsilon^{-0.5})}{N}\bigg), \calO\bigg(\dfrac{\log(N\epsilon_\alpha^{-2})}{N}\bigg)\bigg\}
\end{align*}
If we further let $T \geq \frac{4\beta(\rho_1-\rho^*)}{\epsilon}$, we then have
\begin{align*}
\min_{t = 1, \dots, T}\|\nabla_{\theta}\rho_t\|_2^2 \leq \frac{1}{T}\sum_{t=1}^{T}\|\nabla_{\theta}\rho_t\|_2^2 \leq \frac{2\beta(\rho_1 - \rho^*)}{T} + \epsilon_g^2\le\frac{\epsilon}{2}+\frac{\epsilon}{2}\le \epsilon,
\end{align*}
which completes the proof.
\end{proof}

\newpage
\section{Numerical Experiment Details}
\label{appendix:Numerical}
Learning distributions requires more computational resource. To address this issue, we designed different approaches to speed up the distributional policy evaluation (Policy Evaluation Block in Algorithm~\ref{algorithm:CDPG}), including: 
\begin{itemize} 
    \item \textbf{Warm Start:} The next policy evaluation initializes with the previously estimated distribution.
    \item \textbf{Early Stopping:} The policy evaluation stops if the difference between the current and previous distributions does not decrease for several consecutive iterations. 
\end{itemize}
In this paper, we adopt the \textit{Online Categorical Temporal-Difference Learning} algorithm (see Algorithm 3.4 in~\cite{bellemare2023distributional}) for policy evaluation, incorporating the two strategies mentioned above.

\subsection{Cliffwalk Environment}
We first validate our solution by manually computing the expectation and CVaR. In our environment, the discount factor is set to $\gamma = 0.95$,the probability of falling off the cliff is $p = 0.2$, the cost incurred from falling off the cliff is $x = 30$, and the step cost is $c = 10$. The expected cost of the shortest path from the initial state can be determined by solving the following Bellman equation:
\begin{figure}[ht!]
    \centering
    \begin{minipage}[b]{0.45\textwidth}
        \begin{align*}
        v_6 &= c + \gamma v_3 \\
        v_3 &= p(x + \gamma v_6) + (1-p)(c + \gamma v_4) \\
        v_4 &= c + \gamma v_5 \\
        v_5 &= c + \gamma v_8 \\
        v_8 &= 0
        \end{align*}
    \end{minipage}
    \hfill
    \begin{minipage}[b]{0.45\textwidth}
        \centering
        \includegraphics[width=0.45\textwidth]{pictures_1/policy_evaluation.pdf}
    \end{minipage}
\end{figure}
Regarding CVaR, for the shortest path, the CVaR exceeds 74.14, whereas for the safe path, the CVaR is exactly 52.98. Consequently, a risk-averse policy should select the safe path, which has a lower CVaR, whereas a risk-neutral policy should opt for the shortest path, which minimizes expected cost. These findings align with our numerical results presented in Section \ref{section:Numerical}.

\newpage
\subsection{CartPole Environment}
We list our experiment parameters and network structures in Table~\ref{tab:CDPG_SPG}. 
\begin{table}[htbp]
    \centering
    \caption{Settings in CartPole Environment.}
    \label{tab:CDPG_SPG}
    \begin{tabular}{lcc}
        \toprule
        & \textbf{CDPG} & \textbf{SPG} \\
        \midrule
        ActorNet & 2-layer MLP with ReLU activation & 2-layer MLP with ReLU activation \\
        Critic Net & 2-layer MLP with ReLU activation & - \\
        $[z_{\min}, z_{\max}]$ & [-300, 0] & - \\
        \#Supports & 31 & - \\
        Actor\_lr & 0.01 & 0.01 \\
        Critic\_lr & 0.01 & - \\
        Sample/Iteration & 200\textsuperscript{*} & 200 \\
        Gamma & 0.99 & 0.99 \\
        Optimizer & Adam & Adam \\
        Risk Level & 0.95 & 0.95 \\
        \bottomrule
    \end{tabular}
    \vspace{1ex}
    \parbox{0.95\linewidth}{\footnotesize
        \textsuperscript{*}CDPG with early stopping does not apply.
    }
\end{table}

\end{document}